\def\eqref#1{equation~\ref{#1}}
\def\1{\bm{1}}
\DeclareMathAlphabet{\mathsfit}{\encodingdefault}{\sfdefault}{m}{sl}
\SetMathAlphabet{\mathsfit}{bold}{\encodingdefault}{\sfdefault}{bx}{n}
\newtheorem{theorem}{Theorem}[section]
\newtheorem{lemma}[theorem]{Lemma}
\newtheorem{corollary}[theorem]{Corollary}
\newtheorem{assumption}[theorem]{Assumption}
\newtheorem{remark}[theorem]{Remark}
\newtheorem{proof}{Proof}
\title{FedSpeed: Larger Local Interval, Less Communication Round, and Higher Generalization Accuracy}
\author{Yan Sun  \\
The University of Sydney\\
\texttt{ysun9899@uni.sydney.edu.au} \\
\And
Li Shen\thanks{Li Shen is the corresponding author.}\\
JD Explore Academy\\
\texttt{mathshenli@gmail.com} \\
\And
Tiansheng Huang \\
Georgia Institute of Technology\\
\texttt{tianshenghuangscut@gmail.com}\\
\And
Liang Ding\\
JD Explore Academy\\
\texttt{liangding.liam@gmail.com}\\
\And
Dacheng Tao \\
JD Explore Academy \& The University of Sydney \\
\texttt{dacheng.tao@gmail.com}
}
\begin{document}
\maketitle

\begin{abstract}\label{abstract}
Federated learning is an emerging distributed machine learning framework which jointly trains a global model via a large number of local devices with data privacy protections. Its performance suffers from the non-vanishing biases introduced by the local inconsistent optimal and the rugged client-drifts by the local over-fitting. In this paper, we propose a novel and practical method, FedSpeed, to alleviate the negative impacts posed by these problems.  
Concretely, FedSpeed applies the prox-correction term on the current local updates to efficiently reduce the biases introduced by the prox-term, a necessary regularizer to maintain the strong local consistency. Furthermore, FedSpeed merges the vanilla stochastic gradient with a perturbation computed from an extra gradient ascent step in the neighborhood, thereby alleviating the issue of local over-fitting. Our theoretical analysis indicates that the convergence rate is related to both the communication rounds $T$ and local intervals $K$ with a upper bound $\small \mathcal{O}(1/T)$ if setting a proper local interval. Moreover, we conduct extensive experiments on the real-world dataset to demonstrate the efficiency of our proposed FedSpeed, which performs significantly faster and achieves the state-of-the-art (SOTA) performance on the general FL experimental settings than several baselines. Our code is available at \url{https://github.com/woodenchild95/FL-Simulator.git}.
\end{abstract}

\section{Introduction}\label{introduction}
Since \citet{fedavg} proposed federated learning (FL), it has gradually evolved into an efficient paradigm for large-scale distributed training. Different from the traditional deep learning methods, FL allows multi local clients to jointly train a single global model without data sharing. 
However, FL is far from its maturity, as it still suffers from the considerable performance degradation over the heterogeneously distributed data, a very common setting in the practical application of FL. 

We recognize the main culprit leading to the performance degradation of FL as \textit{local inconsistency} and \textit{local heterogeneous over-fitting}. Specifically, for canonical local-SGD-based FL method, e.g., FedAvg, the non-vanishing biases introduced by the local updates may eventually lead to inconsistent local solution. Then, the rugged client-drifts resulting from the local over-fitting into inconsistent local solutions may make the obtained global model degrading into the average of client's local parameters. The non-vanishing biases have been studied by several previous works \cite{inconsistency1,inconsistency2} in different forms. The inconsistency due to the local heterogeneous data will compromise the global convergence during the training process. Eventually it leads to serious client-drifts which can be formulated as $\mathbf{x}^{*}\neq\sum_{i\in[m]}\mathbf{x}_{i}^{*}/m$. Larger data heterogeneity may enlarge the drifts, thereby degrading the practical training convergence rate and generalization performance. 

In order to strengthen the local consistency during the local training process, and avoid the client-drifts resulting from the local over-fitting, we propose a novel and practical algorithm, dubbed as \textbf{FedSpeed}. Notably, FedSpeed incorporates two novel components to achieve SOTA performance. i) Firstly, FedSpeed inherits a penalized prox-term to force the local offset to be closer to the initial point at each communication round. However, recognized from  \cite{L2GD,localGD} that the prox-term between global and local solutions  may introduce undesirable local training bias, we propose and utilize a prox-correction term to counteract the adverse impact. Indeed, in our theoretical analysis, the implication of the prox-correction term could be considered as a momentum-based term of the weighted local gradients. Via utilizing the historical gradient information, the bias brought by the prox-term can be effectively corrected. ii) Secondly, to avoid the rugged local over-fitting, FedSpeed incorporates a local gradient perturbation via merging the vanilla stochastic gradient with an extra gradient, which can be viewed as taking an extra gradient ascent step for each local update. Based on the analysis in \cite{penalize_gradient,surrogate_gap}, we demonstrate that the gradient perturbation term could be approximated as adding a penalized squared $L2$-norm of the stochastic gradients to the original objective function, which can efficiently search for the flatten local minima \cite{understanding_sam} to prevent the local over-fitting problems.

We also provide the theoretical analysis of our proposed FedSpeed and further demonstrate that its convergence rate could be accelerated by setting an appropriate large local interval $K$. Explicitly, under the non-convex and smooth cases, FedSpeed with an extra gradient perturbation could achieve the fast convergence rate of $\mathcal{O}(1/T)$, which indicates that FedSpeed achieves a tighter upper bound with a proper local interval $K$ to converge, without applying a specific global learning rate or assuming the precision for the local solutions \citep{feddyn,fedadmm1}.
Extensive experiments are tested on CIFAR-10/100 and TinyImagenet dataset with a standard ResNet-18-GN network under the different heterogeneous settings, which shows that our proposed FedSpeed is significantly better than several baselines, e.g. for FedAvg, FedProx, FedCM, FedPD, SCAFFOLD, FedDyn, on both the stability to enlarge the local interval $K$ and the test generalization performance in the actual training.

To the end, we summarize the main contributions of this paper as follows:
\begin{itemize}[leftmargin=*]
    \item We propose a novel and practical federated optimization algorithm, \textbf{FedSpeed}, which applies a prox-correction term to significantly reduce the bias due to the local updates of the prox-term, and an extra gradient perturbation to efficiently avoid the local over-fitting, which achieves a fast convergence speed with large local steps and simultaneously maintains the high generalization.
    \item We provide the convergence rate upper bound under the non-convex and smooth cases and prove that FedSpeed could achieve a fast convergence rate of $\mathcal{O}(1/T)$ via enlarging the local training interval $K=\mathcal{O}(T)$ without any other harsh assumptions or the specific conditions required.
    \item Extensive experiments are conducted on the CIFAR-10/100 and TinyImagenet dataset to verify the performance of our proposed FedSpeed. To the best of our interests, both convergence speed and generalization performance could achieve the SOTA results under the general federated settings. FedSpeed could outperform other baselines and be more robust to enlarging the local interval.
\end{itemize}

\section{Related Work}\label{related_work}

\citet{fedavg} propose the federated framework with the properties of jointly training with several unbalance and non-iid local dataset via communicating with lower costs during the total training stage. The general FL optimization involves a local client training stage and a global server update operation \cite{fedopt} and it has been proved to achieve a linear speedup property in \cite{linear_speedup}. With the fast development of the FL, a series of efficient optimization method are applied in the federated framework. \citet{develop1} and \citet{develop2} introduce a detailed overview in this field. There are still many difficulties to be solved in the practical scenarios, while in this paper we focus to highlight the two main challenges of the local inconsistent solution and client-drifts due to heterogeneous over-fitting, which are two acute limitations in the federated optimization \cite{review, review2,review3,review4,shi2023improving,liu2023enhance}.

\textbf{Local consistency.} \citet{fedprox} study the non-vanishing biases of the inconsistent solution in the experiments and apply a prox-term regularization. FedProx utilizes the bounded local updates by penalizing parameters to provide a good guarantee of consistency. In \cite{vrlsgd} they introduce the local gradient tracking to reduce the local inconsistency in the local SGD method. \citet{inconsistency1,inconsistency2} show that the local learning rate decay can balance the trade-off between the convergence rate and the local inconsistency with the rate of $\mathcal{O}(\eta_{l}(K-1))$. Furthermore, \citet{consistency,inconsistency3} through a simple counterexample to show that using adaptive optimizer or different hyper-parameters on local clients leads to an additional gaps. They propose a local correction technique to alleviate the biases. \citet{fednova,fedproto} consider the different local settings and prove that in the case of asynchronous aggregation, the inconsistency bias will no longer be eliminated by local learning rate decay. \cite{FLcompress} compress the local offset and adopt a global correction to reduce the biases. \citet{fedpd} apply the primal dual method instead of the primal method to solve a series of sub-problems on the local clients and alternately updates the primal and dual variables which can achieve the fast convergence rate of $\mathcal{O}(\frac{1}{T})$ with the local solution precision assumption. Based on FedPD, \citet{feddyn} propose FedDyn as a variants via averaging all the dual variables (the average quantity can then be viewed as the global gradient) under the partial participation settings, which can also achieve the same $\mathcal{O}(\frac{1}{T})$ under the assumption that exact local solution can be found by the local optimizer. \citet{fedadmm1,fedadmm2} propose two other variants to apply different dual variable aggregation strategies under partial participation settings. These methods benefit from applying the prox-term \cite{feddane,fedbe} or higher efficient optimization methods \cite{second,onthefly} to control the local consistency.

\textbf{Client-drifts.} \citet{SCAFFOLD} firstly demonstrate the client-drifts for federated learning framework to indicate the negative impact on the global model when each local client over-fits to the local heterogeneous dataset. They propose SCAFFOLD via a variance reduction technique to mitigate this drifts. \citet{momentum1} and \citet{slowmo} introduce the momentum instead of the gradient to the local and global update respectively to improve the generalization performance. To maintain the property of consistency, \citet{fedcm} propose a novel client-level momentum term to improve the local training process. \citet{fedadc} incorporate the client-level momentum with local momentum to further control the biases. In recent \cite{feddc,accmomentum}, they propose a drift correction term as a penalized loss on the original local objective functions with a global gradient estimation. \citet{adaptive1} and \citet{adaptive2,chen2022efficient} focus on the adaptive method to alleviate the biases and improve the efficiency.

Our proposed FedSpeed inherits the prox-term at local update to guarantee the local consistency during local training. Different from the previous works, we adopt an extra prox-correction term to reduce the bias during the local training introduced by the update direction towards the last global model parameters. This ensures that the local update could be corrected towards the global minima. Furthermore, we incorporate a gradient perturbation update to enhance the generalization performance of the local model, which merges a gradient ascent step. 
\section{Methodology}\label{methodology}
In this part, we will introduce the preliminaries and our proposed method. We will explain the implicit meaning for each variables and demonstrate the FedSpeed algorithm inference in details.

\textbf{Notations and preliminary.}
Let $m$ be the number of total clients. We denote $\mathcal{S}^{t}$ as the set of active clients at round $t$. $K$ is the number of local updates and $T$ is the communication rounds. $(\cdot)_{i,k}^{t}$ denotes variable $(\cdot)$ at $k$-th iteration of $t$-th round in the $i$-th client. $\mathbf{x}$ is the model parameters. $\mathbf{g}$ is the stochastic gradient computed by the sampled data. $\tilde{\mathbf{g}}$ is the weighted quasi-gradient computed as defined in Algorithm \ref{DGR}. $\hat{\mathbf{g}}$ is the prox-correction term. We denote $\langle\cdot,\cdot\rangle$ as the inner product for two vectors and $\Vert \cdot \Vert$ is the Euclidean norm of a vector. Other symbols are detailed at their references.

As the most FL frameworks, we consider to minimize the following finite-sum non-convex problem:
\begin{equation}\label{finite_sum}
    F\left( \mathbf{x} \right) = \frac{1}{m} \sum_{i=1}^{m} F_{i} \left( \mathbf{x} \right),
\end{equation}
where $F$:
$\mathbb{R}^{d}\rightarrow\mathbb{R}$, $F_{i}(\mathbf{x}):=\mathbb{E}_{\varepsilon_{i}\sim\mathcal{D}_{i}}F_{i}(\mathbf{x},\varepsilon_{i})$ is objective function in the client $i$, and $\varepsilon_{i}$ represents for the random data samples obeying the distribution $\mathcal{D}_{i}$. $m$ is the total number of clients. In FL, $\mathcal{D}_{i}$ may differ across the local clients, which may introduce the client drifts by the heterogeneous data.

\subsection{FedSpeed Algorithm}
In this part, we will introduce our proposed method to alleviate the negative impact of the heterogeneous data and reduces the communication rounds. We are inspired by the dynamic regularization \cite{feddyn} for the local updates to eliminate the client drifts when $T$ approaches infinite. 
\begin{algorithm}[t]
\small
    \label{dynamic-extragradient}
	\renewcommand{\algorithmicrequire}{\textbf{Input:}}
	\renewcommand{\algorithmicensure}{\textbf{Output:}}
	\caption{FedSpeed Algorithm Framework}
	\begin{algorithmic}[1]\label{DGR}
		\REQUIRE model parameters $\mathbf{x}^{0}$, total communication rounds $T$, local gradient controller $\hat{\mathbf{g}}_{i}^{-1}=0$, penalized weight $\lambda$.
		\ENSURE model parameters $\mathbf{x}^{T}$.
		\FOR{$t = 0, 1, 2, \cdots, T-1$}
		\STATE select active clients-set $\mathcal{S}^{t}$ at round $t$
		\FOR{client $i \in \mathcal{S}^{t}$ parallel}
		\STATE communicate $\mathbf{x}^{t}$ to local client $i$ and set $\mathbf{x}_{i,0}^{t}=\mathbf{x}^{t}$
		\FOR{$k = 0, 1, 2, \cdots, K-1$}
		\STATE sample a minibatch $\varepsilon_{i,k}^{t}$ and do
		\STATE compute unbiased stochastic gradient: $\mathbf{g}_{i,k,1}^{t}=\nabla F_{i}(\mathbf{x}_{i,k}^{t};\varepsilon_{i,k}^{t})$
		\STATE update the extra step: $\Breve{\mathbf{x}}_{i,k}^{t}=\mathbf{x}_{i,k}^{t}+\rho\mathbf{g}_{i,k,1}^{t}$
		\STATE compute unbiased stochastic gradient: $\mathbf{g}_{i,k,2}^{t}=\nabla F_{i}(\Breve{\mathbf{x}}_{i,k}^{t};\varepsilon_{i,k}^{t})$
		\STATE compute quasi-gradient: $\tilde{\mathbf{g}}_{i,k}^{t} = (1-\alpha)\mathbf{g}_{i,k,1}^{t}+\alpha\mathbf{g}_{i,k,2}^{t}$
		\STATE update the gradient descent step: $\mathbf{x}_{i,k+1}^{t}=\mathbf{x}_{i,k}^{t}-\eta_{l}\bigl(\tilde{\mathbf{g}}_{i,k}^{t} - \hat{\mathbf{g}}_{i}^{t-1} + \frac{1}{\lambda}(\mathbf{x}_{i,k}^{t}-\mathbf{x}^{t})\bigr)$
		\ENDFOR
		\STATE $\hat{\mathbf{g}}_{i}^{t}=\hat{\mathbf{g}}_{i}^{t-1}-\frac{1}{\lambda} (\mathbf{x}_{i,K}^{t}-\mathbf{x}^{t})$
		\STATE communicate $\hat{\mathbf{x}}_{i}^{t}=\mathbf{x}_{i,K}^{t}-\lambda\hat{\mathbf{g}}_{i}^{t}$ to the global server
		\ENDFOR
		\STATE $\mathbf{x}^{t+1}=\frac{1}{S} \sum_{i \in \mathcal{S}^{t}}\hat{\mathbf{x}}^{t}_{i}$
		\ENDFOR
	\end{algorithmic}  
\end{algorithm}

Our proposed FeedSpeed is shown in Algorithm \ref{DGR}. At the beginning of each round $t$, a subset of clients $\mathcal{S}^{t}$ are required to participate in the current training process. The global server will communicate the parameters $\mathbf{x}^{t}$ to the active clients for local training. Each active local client performs three stages: (1) computing the unbiased stochastic gradient $\mathbf{g}_{i,k,1}^{t}=\nabla F_{i}(\mathbf{x}_{i,k}^{t};\varepsilon_{i,k}^{t})$ with a randomly sampled mini-batch data $\varepsilon_{i,k}^{t}$ and executing a gradient ascent step in the neighbourhood to approach $\Breve{\mathbf{x}}_{i,k}^{t}$; (2) computing the unbiased stochastic gradient $\mathbf{g}_{i,k,2}^{t}$ with the same sampled mini-batch data in (1) at the $\Breve{\mathbf{x}}_{i,k}^{t}$ and merging the $\mathbf{g}_{i,k,1}^{t}$ with $\mathbf{g}_{i,k,2}^{t}$ to introduce a basic perturbation to the vanilla descent direction; (3) executing the gradient descent step with the merged quasi-gradient $\tilde{\mathbf{g}}_{i,k}^{t}$, the prox-term $\Vert\mathbf{x}_{i,k}^{t}-\mathbf{x}^{t}\Vert^{2}$ and the local prox-correction term $\hat{\mathbf{g}}_{i}^{t-1}$. After $K$ iterations local training, prox-correction term $\hat{\mathbf{g}}_{i}^{t-1}$ will be updated as the weighted sum of the current local offset $(\mathbf{x}_{i,K}^{t}-\mathbf{x}_{i,0}^{t})$ and the historical offsets momentum. Then we communicate the amended model parameters $\hat{\mathbf{x}}_{i}^{t}=\mathbf{x}_{i,K}^{t}-\lambda\hat{\mathbf{g}}_{i}^{t}$ to the global server for aggregation. On the global server, a simple average aggregation is applied to generate the current global model parameters $\mathbf{x}^{t+1}$ at round $t$.

\textbf{Prox-correction term.} In the general optimization, the prox-term $\Vert\mathbf{x}_{i,k}^{t}-\mathbf{x}^{t}\Vert^ {2}$ is a penalized term for solving the non-smooth problems and it contributes to strengthen the local consistency in the FL framework by introducing a penalized direction in the local updates as proposed in \cite{fedprox}. However,  as discussed in \cite{L2GD}, it simply performs as a balance between the local and global solutions, and there still exists the non-vanishing inconsistent biases among the local solutions, i.e., the local solutions are still largely deviated from each other,  implying that local inconsistency is still not eliminated, which limits the efficiency of the federated learning framework.

To further strengthen the local consistency, we utilize a prox-correction term $\hat{\mathbf{g}}_{i}^{t}$ which could be considered as a previous local offset momentum. According to the local update, we combine the $\mathbf{x}_{i,k-1}^{t}$ term in the prox term and the local state, setting the weight as ($1-\frac{\eta_{l}}{\lambda}$) multiplied to the basic local state. As shown in the local update in Algorithm~\ref{DGR} (Line.11), for $\forall \ \mathbf{x} \in \mathbb{R}^{d}$ we have:
\begin{equation}\label{local_offset}
    \mathbf{x}_{i,K}^{t} - \mathbf{x}^{t} =-\gamma\lambda\sum_{k=0}^{K-1}\frac{\gamma_{k}}{\gamma}\tilde{\mathbf{g}}_{i,k}^{t} + \gamma\lambda\hat{\mathbf{g}}_{i}^{t-1},
\end{equation}
where $\sum_{k=0}^{K-1}\gamma_{k}=\sum_{k=0}^{K-1}\frac{\eta_{l}}{\lambda}\bigl(1-\frac{\eta_{l}}{\lambda}\bigr)^{K-1-k}=\gamma$. Proof details can be referred to the \textbf{Appendix}.

Firstly let $\hat{\mathbf{g}}_{i}^{-1}=\mathbf{0}$, Equation (\ref{local_offset}) indicates that the local offset will be transferred to a exponential average of previous local gradients when applying the prox-term, and the updated formation of the local offset is independent of the local learning rate $\eta_{l}$. This is different from the vanilla SGD-based methods, e.g. FedAvg, which treats all local updates fairly. $\gamma_{k}$ changes the importance of the historical gradients. As $K$ increases, previous updates will be weakened by exponential decay significantly for $\eta_{l} < \lambda$. Thus, we apply the prox-correction term to balance the local offset. According to the iterative formula for $\hat{\mathbf{g}}_{i}^{t}$ (Line.13 in Algorithm~\ref{DGR}) and the equation~(\ref{local_offset}), we can rewrite this update as:
\begin{equation}\label{ghat}
    \hat{\mathbf{g}}_{i}^{t} = (1-\gamma)\hat{\mathbf{g}}_{i}^{t-1} + \gamma\Bigl(\sum_{k=0}^{K-1}\frac{\gamma_{k}}{\gamma}\tilde{\mathbf{g}}_{i,k}^{t}\Bigr),
\end{equation}
where $\gamma$ and $\gamma_{k}$ is defined the same as in Equation (\ref{local_offset}). Proof details can be referred to the \textbf{Appendix}.

Note that $\hat{\mathbf{g}}_{i}^{t}$ performs as a momentum term of the historical local updates before round $t$, which can be considered as a estimation of the local offset at round $t$. At each local iteration $k$ of round $t$, $\hat{\mathbf{g}}_{i}^{t-1}$ provides a correction for the local update to balance the impact of the prox-term to enhance the contribution of those descent steps executed firstly at each local stages. It should be noted that $\hat{\mathbf{g}}_{i}^{t-1}$ is different from the global momentum term mentioned in \cite{slowmo} which aggregates the average local updates to improve the generalization performance. After the local training, it updates the current information. Then we subtract the current $\hat{\mathbf{g}}_{i}^{t}$ from the local models $\mathbf{x}_{i,K}^{t}$ to counteract the influence in the local stages. Finally it sends the post-processed parameters $\hat{\mathbf{x}}_{i,K}^{t}$ to the global server.

\textbf{Gradient perturbation.} Gradient perturbations  \citep{foret2020sharpness,mimake,penalize_gradient,zhong2022improving} significantly improves generalization for deep models. An extra gradient ascent in the neighbourhood can effectively express the curvature near the current parameters. Referring to the analysis in \cite{penalize_gradient}, we show that the quasi-gradient $\tilde{\mathbf{g}}$, which merges the extra ascent step gradient and the vanilla gradient, could be approximated as penalizing a square term of the $L2$-norm of the gradient on the original function. On each local client to solve the stationary point of $\min_{\mathbf{x}} \{F_{i}(\mathbf{x})+\beta\Vert\nabla F_{i}(\mathbf{x})\Vert^{2}\}$ can search for a flat minima. Flatten loss landscapes will further mitigate the local inconsistency due to the averaging aggregation on the global server on heterogeneous dataset. Detailed discussions can be referred to the \textbf{Appendix}.
\section{Convergence Analysis}\label{analysis}
In this part we will demonstrate the theoretical analysis of our proposed FedSpeed and illustrate the convergence guarantees under the specific hyperparameters. Due to the space limitations, more details could be referred to the \textbf{Appendix}. Some standard assumptions are stated as follows.

\begin{assumption}\label{smoothness}
(\textbf{L-Smoothness}) \textit{For the non-convex function $F_{i}$ holds the property of smoothness for all $i\in[m]$, i.e., $\Vert\nabla F_{i}(\mathbf{x})-\nabla F_{i}(\mathbf{y})\Vert\leq L\Vert\mathbf{x}-\mathbf{y}\Vert$, for all $\mathbf{x},\mathbf{y}\in\mathbb{R}^{d}$.}
\end{assumption}

\begin{assumption}\label{bounded_stochastic_gradient}
(\textbf{Bounded Stochastic Gradient}) The stochastic gradient \textit{$\mathbf{g}_{i,k}^{t}=\nabla F_{i}(\mathbf{x}_{i,k}^{t}, \varepsilon_{i,k}^{t})$ with the randomly sampled data $\varepsilon_{i,k}^{t}$ on the local client $i$ is an unbiased estimator of $\nabla F_{i}$ with bounded variance, i.e., $\mathbb{E}[\mathbf{g}_{i,k}^{t}]=\nabla F_{i}(\mathbf{x}_{i,k}^{t})$ and $\mathbb{E}\Vert \mathbf{g}_{i,k}^{t} - \nabla F_{i}(\mathbf{x}_{i,k}^{t})\Vert^{2} \leq \sigma_{l}^{2}$, for all $\mathbf{x}_{i,k}^{t}\in\mathbb{R}^{d}$.}
\end{assumption}

\begin{assumption}\label{bounded_heterogeneity}
(\textbf{Bounded Heterogeneity}) \textit{The dissimilarity of the dataset among the local clients is bounded by the local and global gradients, i.e., $\mathbb{E}\Vert\nabla F_{i}(\mathbf{x})-\nabla F(\mathbf{x})\Vert^{2}\leq\sigma_{g}^{2}$, for all $\mathbf{x}\in\mathbb{R}^{d}$.}
\end{assumption}

Assumption~\ref{smoothness} guarantees a Lipschitz continuity and Assumption~\ref{bounded_stochastic_gradient} guarantees the stochastic gradient is bounded by zero mean and constant variance. Assumption~\ref{bounded_heterogeneity} is the heterogeneity bound for the non-iid dataset, which is widely used in many previous works \citep{fedadam,linear_speedup,fedcm,consistency,layerwised}. Our theoretical analysis depends on the above assumptions to explore the comprehensive properties in the local training process.

\textbf{Proof sketch.} To express the essential insights in the updates of the Algorithm \ref{DGR}, we introduce two auxiliary sequences. Considering the $\mathbf{u}^{t}=\frac{1}{m}\sum_{i\in[m]}\mathbf{x}_{i,K}^{t}$ as the mean averaged parameters of the last iterations in the local training among the local clients. Based on $\{\mathbf{u}^{t}\}$, we introduce the auxiliary sequences $\{\mathbf{z}^{t}=\mathbf{u}^{t}+\frac{1-\gamma}{\gamma}(\mathbf{u}^{t}-\mathbf{u}^{t-1})\}_{t>0}$. Combining the local update and the Equation~(\ref{ghat}): 
\begin{equation}\label{ut}
    \mathbf{u}^{t+1} = \mathbf{u}^{t} -\lambda\frac{1}{m}\sum_{i\in[m]}\sum_{k=0}^{K-1}\frac{\gamma_{k}}{\gamma}\left(\gamma\tilde{\mathbf{g}}_{i,k}^{t} + (1-\gamma)\hat{\mathbf{g}}_{i}^{t-1}\right).
\end{equation}
If we introduce $K$ virtual states $\mathbf{u}_{i,k}^{t}$, it could be considered as a momentum-based update of the prox-correction term $\hat{\mathbf{g}}_{i}^{t-1}$ with the coefficient $\gamma$. And the prox-correction term $\hat{\mathbf{g}}_{i}^{t}=-\frac{1}{\lambda}\left(\mathbf{u}_{i,K}^{t}-\mathbf{u}_{i,0}^{t}\right)$, which implies the global update direction in the local training process. And $\mathbf{z}^{t}$ updates as: 
\begin{equation}\label{zt}
    \mathbf{z}^{t+1} = \mathbf{z}^{t} - \lambda\frac{1}{m}\sum_{i\in[m]}\sum_{k=0}^{K-1}\frac{\gamma_{k}}{\gamma}\tilde{\mathbf{g}}_{i,k}^{t},
\end{equation}
Detailed proofs can be referred in the \textbf{Appendix}.
After mapping $\mathbf{x}^{t}$ to $\mathbf{u}^{t}$, the local update could be considered as a client momentum-like method with a normalized weight parameterized by $\gamma_{k}$. Further, after mapping $\mathbf{u}^{t}$ to $\mathbf{z}^{t}$, the entire update process will be simplified to a SGD-type method with the quasi-gradients $\tilde{\mathbf{g}}$. $\mathbf{z}^{t}$ contains the penalized prox-term in the total local training stage. Though a prox-correction term is applied to eliminate the local biases, $\mathbf{x}^{t}$ maintains to be beneficial from the update of penalizing the prox-term. The prox-correction term plays the role as exponential average of the global offset. Then we introduce our proof of the convergence rate for the FedSpeed algorithm:
\begin{theorem}\label{convergence1}
    Under the Assumptions \ref{smoothness}-\ref{bounded_heterogeneity}, when the perturbation learning rate satisfies $\rho \leq \frac{1}{\sqrt{6}\alpha L}$, and the local learning rate satisfies $\eta_{l}\leq\min\{\frac{1}{32\sqrt{3K}L},2\lambda\}$, and the local interval satisfies $K \geq \lambda/\eta_{l}$, let $\kappa=\frac{1}{2}-3\alpha^{2}L^{2}\rho^{2}-1536\eta_{l}^{2}L^{2}K$ is a positive constant with selecting the proper $\eta_{l}$ and $\rho$, the auxiliary sequence $\mathbf{z}^{t}$ in Equation (\ref{zt}) generated by executing the Algorithm \ref{DGR} satisfies:
    \begin{equation}
        \frac{1}{T}\sum_{t=1}^{T-1}\mathbb{E}\Vert \nabla F(\mathbf{z}^{t})\Vert^{2}
        \leq \frac{2(F(\mathbf{z}^{1})-F^{*})}{\lambda\kappa T} + \frac{64\eta_{l}L^{2}K}{\kappa mT}\sum_{i\in[m]}\mathbb{E}\Vert\hat{\mathbf{g}}_{i}^{0}\Vert^{2} + \frac{32\lambda^{2}L^{2}}{\kappa T}\mathbb{E}_{t}\Vert\frac{1}{m}\sum_{i\in[m]}\hat{\mathbf{g}}_{i}^{0}\Vert^{2} + \Phi,
    \end{equation}
    where $F$ is a non-convex objective function $F^{*}$ is the optimal of $F$. The term $\Phi$ is:
    \begin{equation}
        \Phi = \frac{1}{\kappa}\left(32\lambda\eta_{l}^{2}L^{2}K(16\sigma_{g}^{2}+\sigma_{l}^{2}) + \lambda\alpha^{2}L^{2}\rho^{2}(3\sigma_{g}^{2} + \sigma_{l}^{2})\right),
    \end{equation}
    where $\alpha$ is the perturbation weight. More proof details can be referred to the \textbf{Appendix}.
\end{theorem}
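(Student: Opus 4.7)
The plan is to work with the auxiliary sequence $\mathbf{z}^t$ from Equation~(\ref{zt}), which evolves as an SGD-like iterate with effective step size $\lambda$ driven by the weighted quasi-gradients. First I would apply $L$-smoothness of $F$ to obtain a one-step descent inequality
\begin{equation*}
\mathbb{E} F(\mathbf{z}^{t+1}) \leq \mathbb{E} F(\mathbf{z}^{t}) + \mathbb{E}\langle \nabla F(\mathbf{z}^{t}), \mathbf{z}^{t+1}-\mathbf{z}^{t}\rangle + \tfrac{L}{2}\mathbb{E}\Vert \mathbf{z}^{t+1}-\mathbf{z}^{t}\Vert^{2},
\end{equation*}
substitute the update~(\ref{zt}), and split the inner product as $-\lambda\langle\nabla F(\mathbf{z}^{t}),\nabla F(\mathbf{x}^{t})\rangle$ plus three residuals: (i) the SAM-style perturbation bias $\tilde{\mathbf{g}}_{i,k}^{t}-\nabla F_{i}(\mathbf{x}_{i,k}^{t})$; (ii) the client drift $\nabla F_{i}(\mathbf{x}_{i,k}^{t})-\nabla F_{i}(\mathbf{x}^{t})$; and (iii) the shift $\nabla F(\mathbf{z}^{t})-\nabla F(\mathbf{x}^{t})$. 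Using $2\langle a,b\rangle=\Vert a\Vert^{2}+\Vert b\Vert^{2}-\Vert a-b\Vert^{2}$ on the main inner product produces the desired $-\tfrac{\lambda}{2}\Vert\nabla F(\mathbf{z}^{t})\Vert^{2}$ contribution, with the other three residuals appearing as perturbative noise to be absorbed into $\kappa$ or $\Phi$.

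Next I would bound each residual. For (i), $L$-smoothness gives $\Vert\nabla F_{i}(\breve{\mathbf{x}}_{i,k}^{t})-\nabla F_{i}(\mathbf{x}_{i,k}^{t})\Vert\leq L\rho\Vert\mathbf{g}_{i,k,1}^{t}\Vert$, so the quasi-gradient bias is of order $\alpha^{2}L^{2}\rho^{2}\mathbb{E}\Vert\mathbf{g}_{i,k,1}^{t}\Vert^{2}$; combining Assumptions~\ref{bounded_stochastic_gradient}--\ref{bounded_heterogeneity} with $\rho\leq 1/(\sqrt{6}\alpha L)$ bounds the aggregate contribution by $O(\alpha^{2}L^{2}\rho^{2}(\sigma_{g}^{2}+\sigma_{l}^{2}))$, which produces the second term of $\Phi$ and the $3\alpha^{2}L^{2}\rho^{2}$ piece of $\kappa$. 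For (ii), I would set up a recursion for $\mathbb{E}\Vert\mathbf{x}_{i,k}^{t}-\mathbf{x}^{t}\Vert^{2}$ directly from the local update on Line~11 of Algorithm~\ref{DGR}, unroll it against the geometric weights $\gamma_{k}=(\eta_{l}/\lambda)(1-\eta_{l}/\lambda)^{K-1-k}$, and exploit $\eta_{l}\leq 1/(32\sqrt{3K}L)$ together with $K\geq\lambda/\eta_{l}$ (the latter ensures $\gamma$ and the $\gamma_{k}$ sum cleanly) to control this drift by $O(\eta_{l}^{2}K(\sigma_{g}^{2}+\sigma_{l}^{2}))+O(\eta_{l}^{2}K\Vert\hat{\mathbf{g}}_{i}^{t-1}\Vert^{2})$. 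This generates the $1536\eta_{l}^{2}L^{2}K$ piece of $\kappa$ and the first term of $\Phi$. For (iii), I would use the identity $\mathbf{z}^{t}-\mathbf{u}^{t}=\tfrac{1-\gamma}{\gamma}(\mathbf{u}^{t}-\mathbf{u}^{t-1})$ together with Equation~(\ref{local_offset}) to express $\Vert\mathbf{z}^{t}-\mathbf{x}^{t}\Vert^{2}$ through $\hat{\mathbf{g}}_{i}^{t-1}$ and per-round quasi-gradients.

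Finally I would telescope from $t=1$ to $T-1$. The descent term collapses to $F(\mathbf{z}^{1})-F^{*}$, giving the first term of the bound. The prox-correction quantities $\hat{\mathbf{g}}_{i}^{t}$ must themselves be controlled via a recursive argument built from Equation~(\ref{ghat}): a straightforward induction shows $\tfrac{1}{m}\sum_{i}\mathbb{E}\Vert\hat{\mathbf{g}}_{i}^{t}\Vert^{2}$ stays bounded by a convex combination reaching back to $\hat{\mathbf{g}}_{i}^{0}$, so after summation the initial-condition terms $\tfrac{1}{m}\sum_{i}\mathbb{E}\Vert\hat{\mathbf{g}}_{i}^{0}\Vert^{2}$ and $\Vert\tfrac{1}{m}\sum_{i}\hat{\mathbf{g}}_{i}^{0}\Vert^{2}$ appear exactly with the coefficients $\tfrac{64\eta_{l}L^{2}K}{\kappa}$ and $\tfrac{32\lambda^{2}L^{2}}{\kappa}$ stated in the theorem. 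Dividing by $\lambda\kappa T/2$ yields the claimed bound with the noise constants collected into $\Phi$.

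The principal obstacle is the coupling between the client-drift recursion and the prox-correction sequence $\hat{\mathbf{g}}_{i}^{t}$: the drift bound needs $\mathbb{E}\Vert\hat{\mathbf{g}}_{i}^{t-1}\Vert^{2}$, yet $\hat{\mathbf{g}}_{i}^{t}$ is itself a convex combination of past quasi-gradients, hence indirectly of past drifts. Resolving this requires either a joint Lyapunov quantity that packages $F(\mathbf{z}^{t})$ together with $\tfrac{1}{m}\sum_{i}\mathbb{E}\Vert\hat{\mathbf{g}}_{i}^{t}\Vert^{2}$, or a two-step induction showing that the weights $\gamma_{k}$ contract the recursion whenever $K\geq\lambda/\eta_{l}$, so that no bootstrap constant larger than the stated initial term survives. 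Getting the numerical constants ($1536$, $64$, $32$) precise is then just a matter of routine Young's-inequality bookkeeping, but the scheduling of these constants across the three residual terms is where the delicate part of the proof lies.
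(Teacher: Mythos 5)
Your proposal follows essentially the same route as the paper's proof: an $L$-smoothness descent step on the auxiliary sequence $\mathbf{z}^t$, the polarization identity to extract $-\tfrac{\lambda}{2}\Vert\nabla F(\mathbf{z}^t)\Vert^2$, a three-way split of the quasi-gradient error into perturbation bias, client drift, and the $\mathbf{z}^t$--$\mathbf{x}^t$ shift, a geometric-weight recursion for the drift $\mathbb{E}\Vert\mathbf{x}_{i,k}^{t}-\mathbf{x}^{t}\Vert^{2}$, and telescoping recursions derived from Equation~(\ref{ghat}) via Jensen's inequality for the prox-correction terms. The coupling you flag between the drift recursion and $\hat{\mathbf{g}}_{i}^{t}$ is resolved in the paper exactly as you anticipate: the telescoping differences $\mathbb{E}\Vert\hat{\mathbf{g}}_{i}^{t-1}\Vert^{2}-\mathbb{E}\Vert\hat{\mathbf{g}}_{i}^{t}\Vert^{2}$ are carried through the per-round inequality (equivalently, a joint Lyapunov quantity) under an additional contraction constraint on $\lambda$ and $\gamma$.
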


\begin{corollary}
   Let $\rho=\mathcal{O}(1/\sqrt{T})$ with the upper bound of $\rho\leq 1/\sqrt{6}\alpha L$, and let $\eta_{l}=\mathcal{O}(1/K)$ with the lower bound of $\eta_{l}\geq \lambda/K$, when the local interval $K$ is long enough with $K=\mathcal{O}(T)$, the proposed FedSpeed achieves a fast convergence rate of $\mathcal{O}(1/T)$.
\end{corollary}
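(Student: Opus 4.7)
The plan is direct substitution: insert the prescribed schedules $\rho=\mathcal{O}(1/\sqrt{T})$, $\eta_{l}=\Theta(1/K)$, and $K=\mathcal{O}(T)$ into the four-term upper bound furnished by Theorem~\ref{convergence1} and confirm that each summand decays as $1/T$. Before doing so, I would check feasibility: the constraints $\lambda/K\leq\eta_{l}\leq\min\{1/(32\sqrt{3K}L),\,2\lambda\}$ together pin $\eta_{l}$ inside a non-empty window of width $\Theta(1/K)$ for sufficiently large $K$ (equivalently, sufficiently large $T$), while $\rho\leq 1/(\sqrt{6}\alpha L)$ is respected by any $\mathcal{O}(1/\sqrt{T})$ choice once $T$ is large.

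Next I would show that $\kappa=\frac{1}{2}-3\alpha^{2}L^{2}\rho^{2}-1536\eta_{l}^{2}L^{2}K$ is bounded below by a positive constant. Both subtracted quantities are $\mathcal{O}(1/T)$: $\alpha^{2}L^{2}\rho^{2}=\mathcal{O}(1/T)$ by the choice of $\rho$, and $\eta_{l}^{2}K=\Theta(1/K)=\mathcal{O}(1/T)$ by the schedule of $\eta_{l}$ combined with $K=\Theta(T)$. Consequently $\kappa\to 1/2$, and one may treat $1/\kappa$ as a dimensionless constant in the remaining order analysis.

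Then I would examine each term of the bound in turn. The optimality-gap term $2(F(\mathbf{z}^{1})-F^{*})/(\lambda\kappa T)$ is $\mathcal{O}(1/T)$ immediately. The second and third summands, containing $\mathbb{E}\Vert\hat{\mathbf{g}}_{i}^{0}\Vert^{2}$, carry prefactors $\eta_{l}L^{2}K/T$ and $1/T$ respectively; since $\eta_{l}K=\Theta(1)$ by the schedule, both scale as $\mathcal{O}(1/T)$ provided $\hat{\mathbf{g}}_{i}^{0}$ has a bounded second moment. For the two components of $\Phi$, $32\lambda\eta_{l}^{2}L^{2}K=\mathcal{O}(\eta_{l}^{2}K)=\mathcal{O}(1/T)$ and $\lambda\alpha^{2}L^{2}\rho^{2}=\mathcal{O}(1/T)$. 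Summing the four contributions yields $\frac{1}{T}\sum_{t=1}^{T-1}\mathbb{E}\Vert\nabla F(\mathbf{z}^{t})\Vert^{2}=\mathcal{O}(1/T)$, which is the advertised rate.

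The main obstacle I anticipate is controlling the initialization-dependent second moments $\mathbb{E}\Vert\hat{\mathbf{g}}_{i}^{0}\Vert^{2}$, which are not a priori $T$-independent constants since $\hat{\mathbf{g}}_{i}^{0}$ is generated by a single round of local updates started from $\hat{\mathbf{g}}_{i}^{-1}=\mathbf{0}$. A clean way to close this gap is to invoke identity~(\ref{ghat}) at $t=0$, which expresses $\hat{\mathbf{g}}_{i}^{0}$ as a convex combination of the first-round quasi-gradients $\tilde{\mathbf{g}}_{i,k}^{0}$; bounding these via Assumptions~\ref{bounded_stochastic_gradient} and~\ref{bounded_heterogeneity} together with the $L$-smoothness of $F_{i}$ at $\mathbf{x}^{0}$ produces a $T$-independent bound on $\mathbb{E}\Vert\hat{\mathbf{g}}_{i}^{0}\Vert^{2}$, after which the $1/T$ prefactors close the argument.
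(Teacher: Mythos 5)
Your proposal is correct and follows essentially the same (implicit) route as the paper: the corollary is obtained by direct substitution of the schedules $\rho=\mathcal{O}(1/\sqrt{T})$, $\eta_{l}=\Theta(1/K)$, $K=\Theta(T)$ into the four-term bound of Theorem~\ref{convergence1}, after verifying that $\kappa$ stays bounded away from zero and that the constraint window for $\eta_{l}$ is non-empty. Your additional observation that $\mathbb{E}\Vert\hat{\mathbf{g}}_{i}^{0}\Vert^{2}$ must be bounded independently of $T$ (via Equation~\ref{ghat} at $t=0$ together with Assumptions~\ref{bounded_stochastic_gradient}--\ref{bounded_heterogeneity}) is a point the paper leaves implicit, and it closes the argument cleanly.
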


\begin{remark}
    Compared with the other prox-based works, e.g. for \citep{feddyn,fedadmm1,fedadmm2}, their proofs rely on the harsh assumption that local client must approach an exact stationary point or $\epsilon$-inexact stationary point in the local training per round. It cannot be strictly satisfied in the practical federated learning framework with the current theoretical analysis of the last iteration point on the non-convex case. We relax this assumption through enlarging the local interval and prove that federated prox-based methods can also achieve the convergence of $\mathcal{O}(1/T)$.
\end{remark}

\begin{remark}
    Compared with the other current methods, FedSpeed can improve the convergence rate by increasing the local interval $K$, which is a good property for the practical federated learning framework. For the analysis of FedAvg~\citep{linear_speedup}, under the same assumptions, it achieves $\mathcal{O}(1/\sqrt{SKT}+K/T)$ which restricts the value of $K$ to not exceed the order of $T$. \citet{SCAFFOLD} contribute the convergence as $\mathcal{O}(1/\sqrt{SKT})$ under the constant local interval, and \citep{fedadam} proves the same convergence under the strict coordinated bounded variance assumption for the global full gradient in the FedAdam. Our experiments also verify this characteristic in Section~\ref{enlarge_K}. Most current algorithms are affected by increasing $K$ in the training while FedSpeed shows the good stability under the enlarged local intervals and shrunk communication rounds.
    
\end{remark}

\section{Experiments}\label{experiments}
In this part, we firstly introduce our experimental setups. We present
the convergence and generalization performance in Section~\ref{exp_cifar10}, and study ablation experiments in Section~\ref{ablation}.
\begin{figure}[t]
	\centering
    \subfigure[10$\%$ participation of 100 clients on CIFAR-10.]{
    	\begin{minipage}[b]{0.495\textwidth}
   		 	\includegraphics[width=0.5\textwidth]{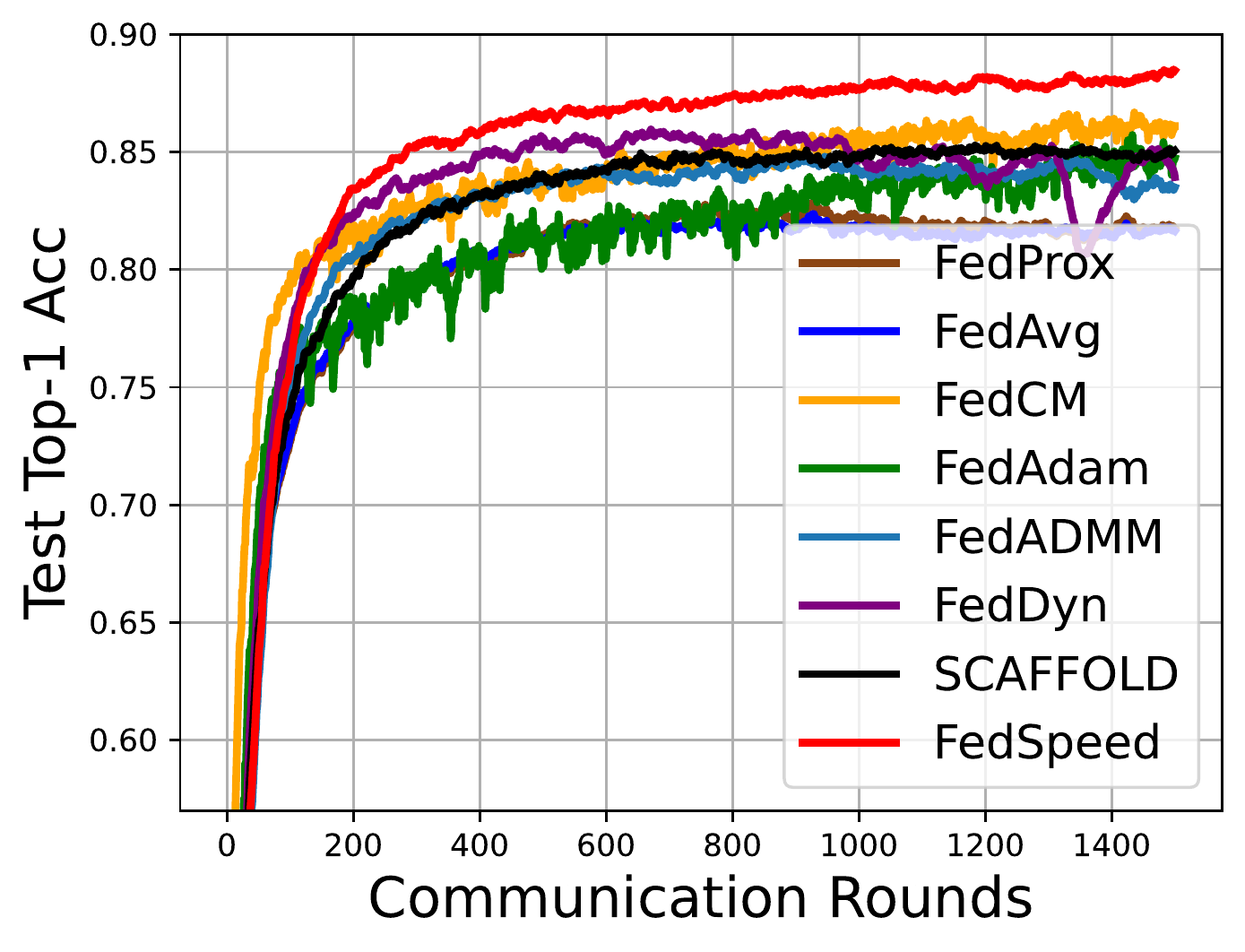}\!\!
		 	\includegraphics[width=0.5\textwidth]{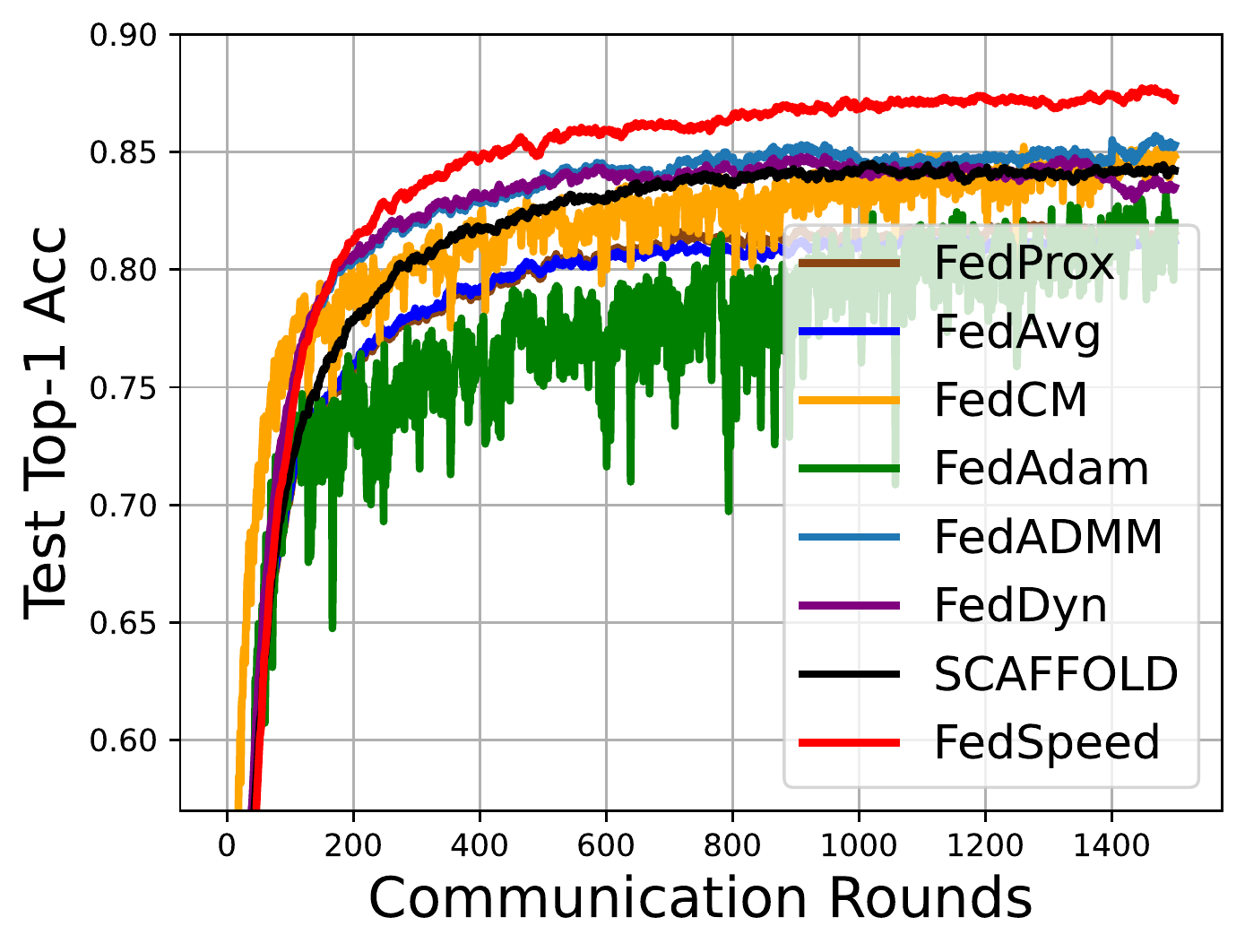}
    	\end{minipage}
    }\!\!\!\!\!
    \subfigure[2$\%$ participation of 500 clients on CIFAR-10.]{
    	\begin{minipage}[b]{0.495\textwidth}
		 	\includegraphics[width=0.5\textwidth]{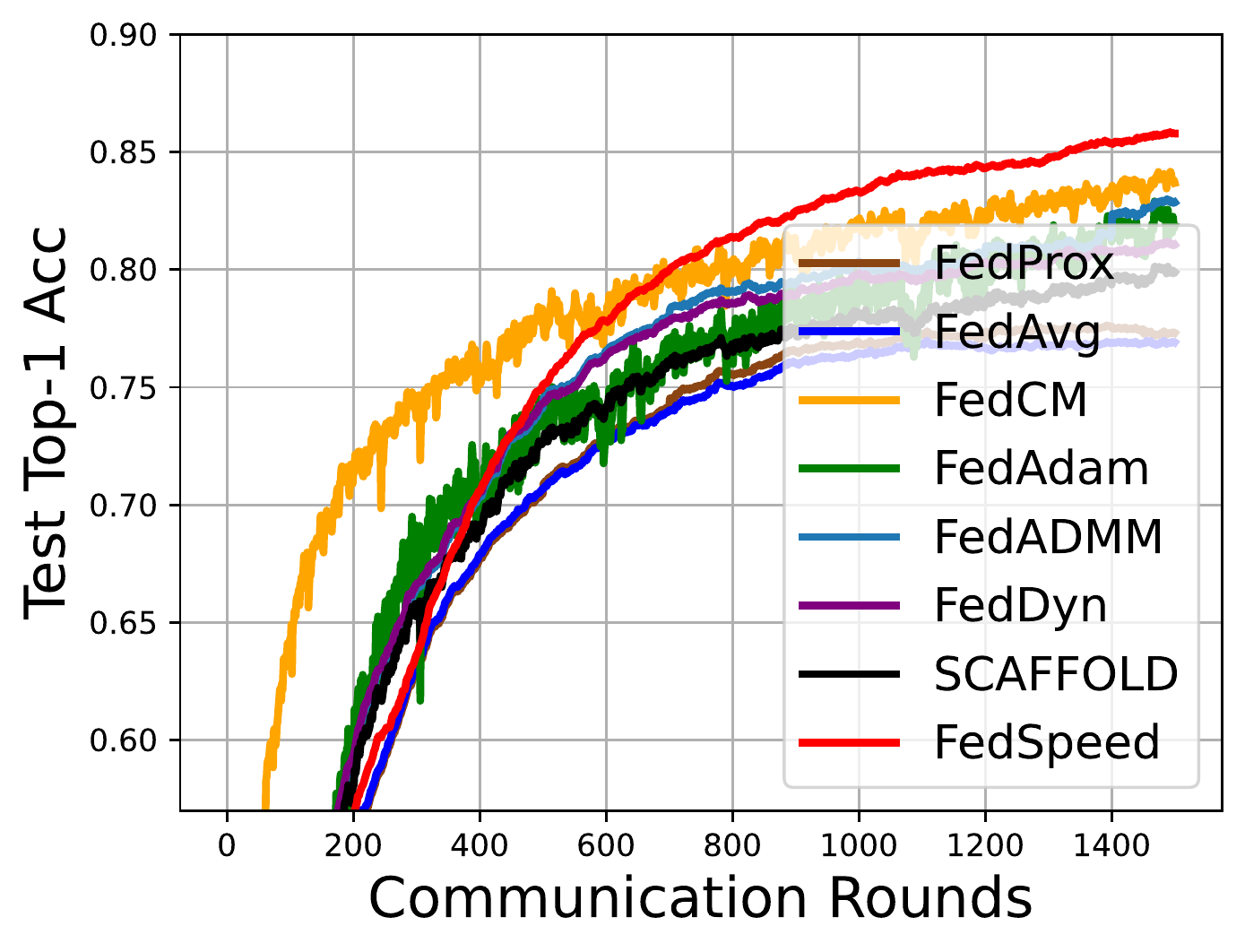}\!\!
		 	\includegraphics[width=0.5\textwidth]{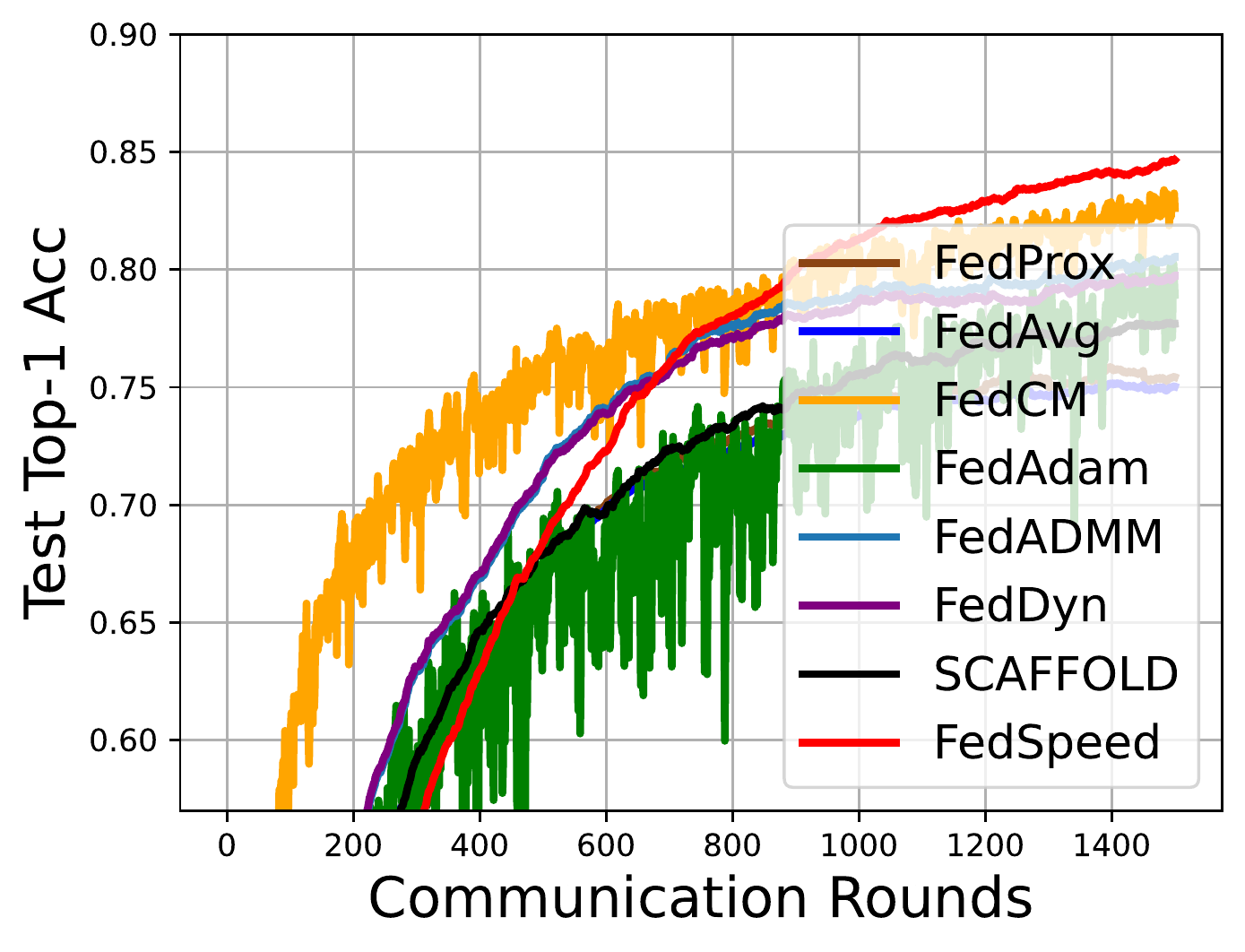}
    	\end{minipage}
	}\!\!\!\!\!
    \subfigure[2$\%$ participation of 500 clients on CIFAR-100.]{
    	\begin{minipage}[b]{0.495\textwidth}
   		 	\includegraphics[width=0.5\textwidth]{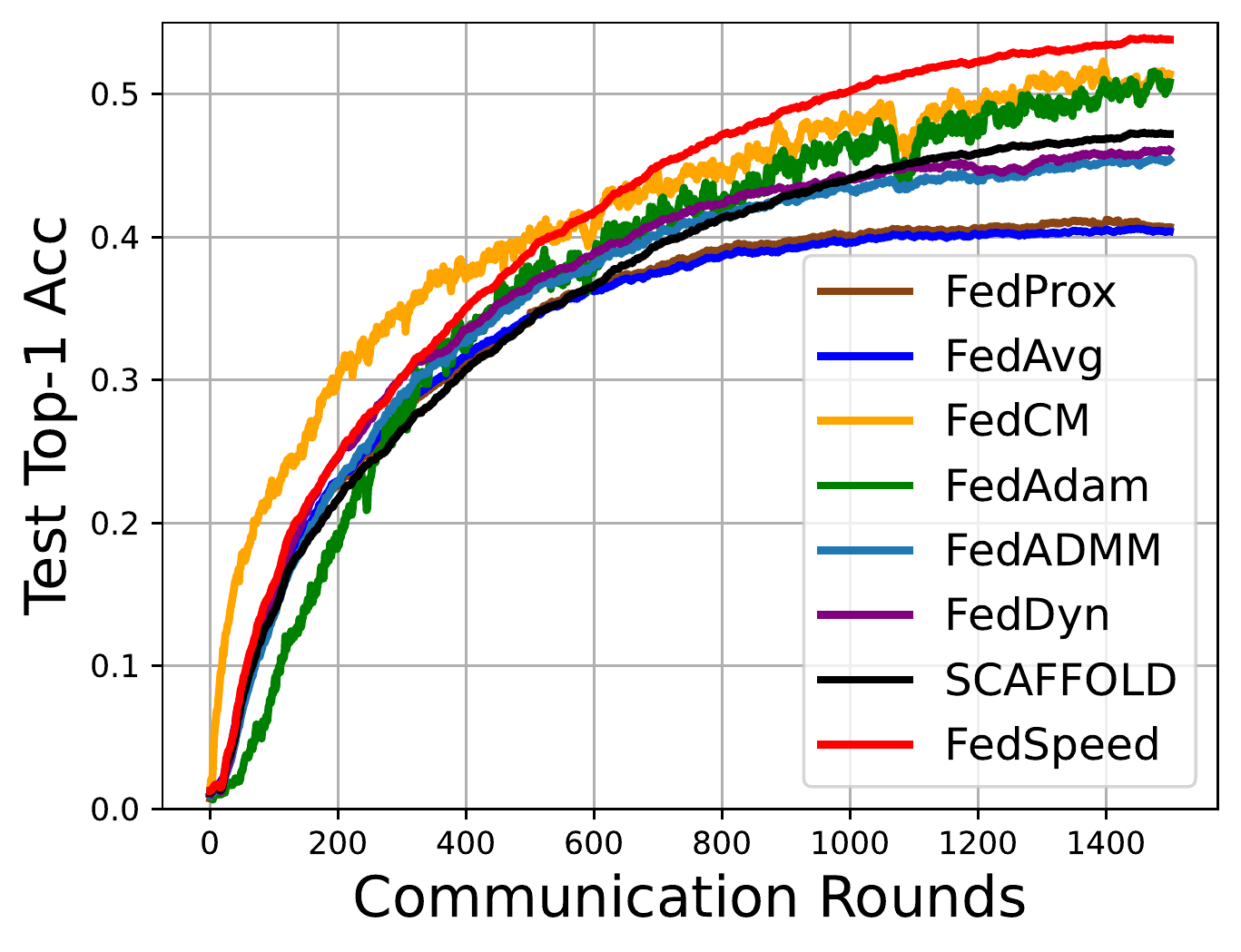}\!\!
		 	\includegraphics[width=0.5\textwidth]{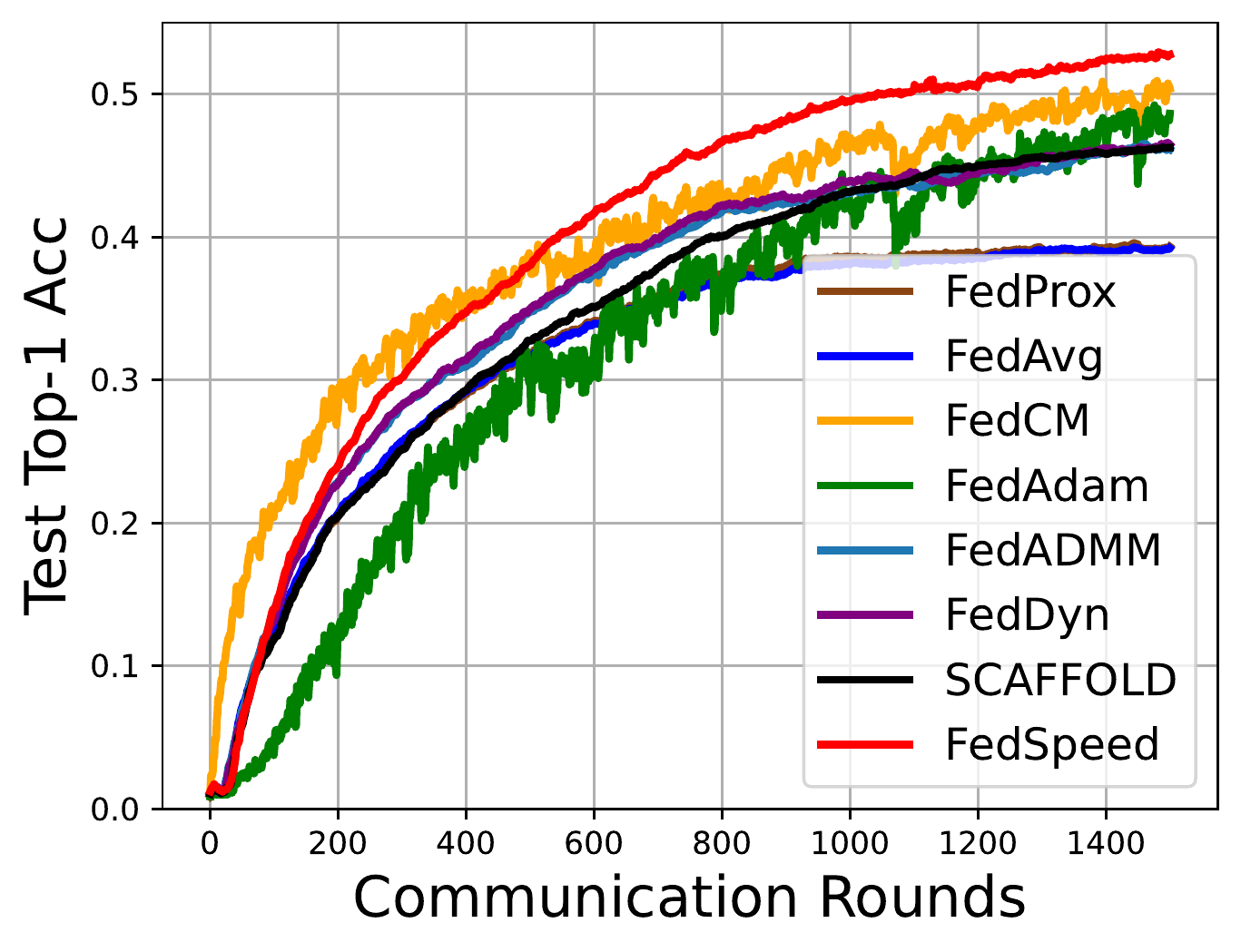}
    	\end{minipage}
    }\!\!\!\!\!
    \subfigure[2$\%$ participation of 500 clients on TinyImagenet.]{
    	\begin{minipage}[b]{0.495\textwidth}
		 	\includegraphics[width=0.5\textwidth]{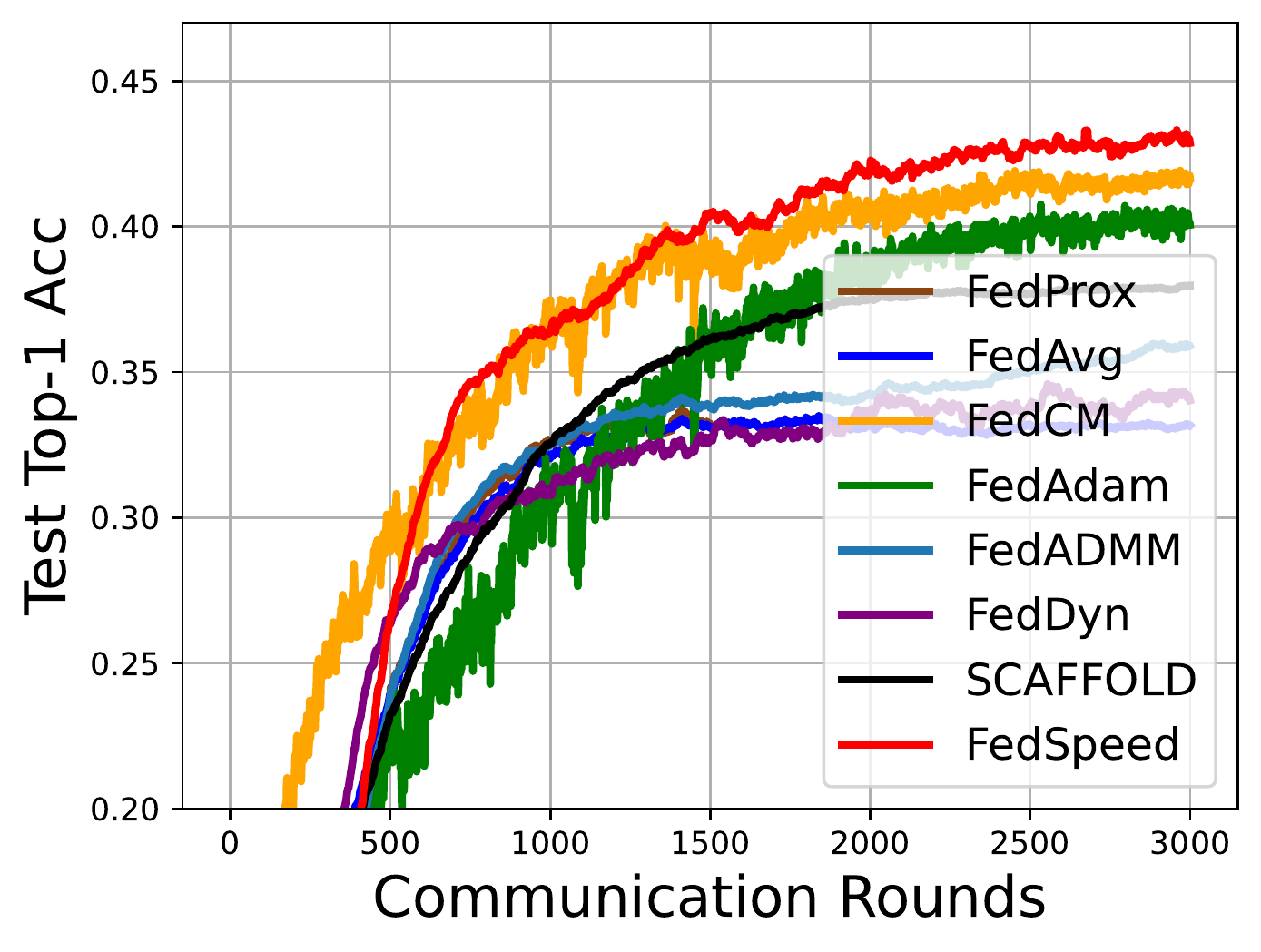}\!\!
		 	\includegraphics[width=0.495\textwidth]{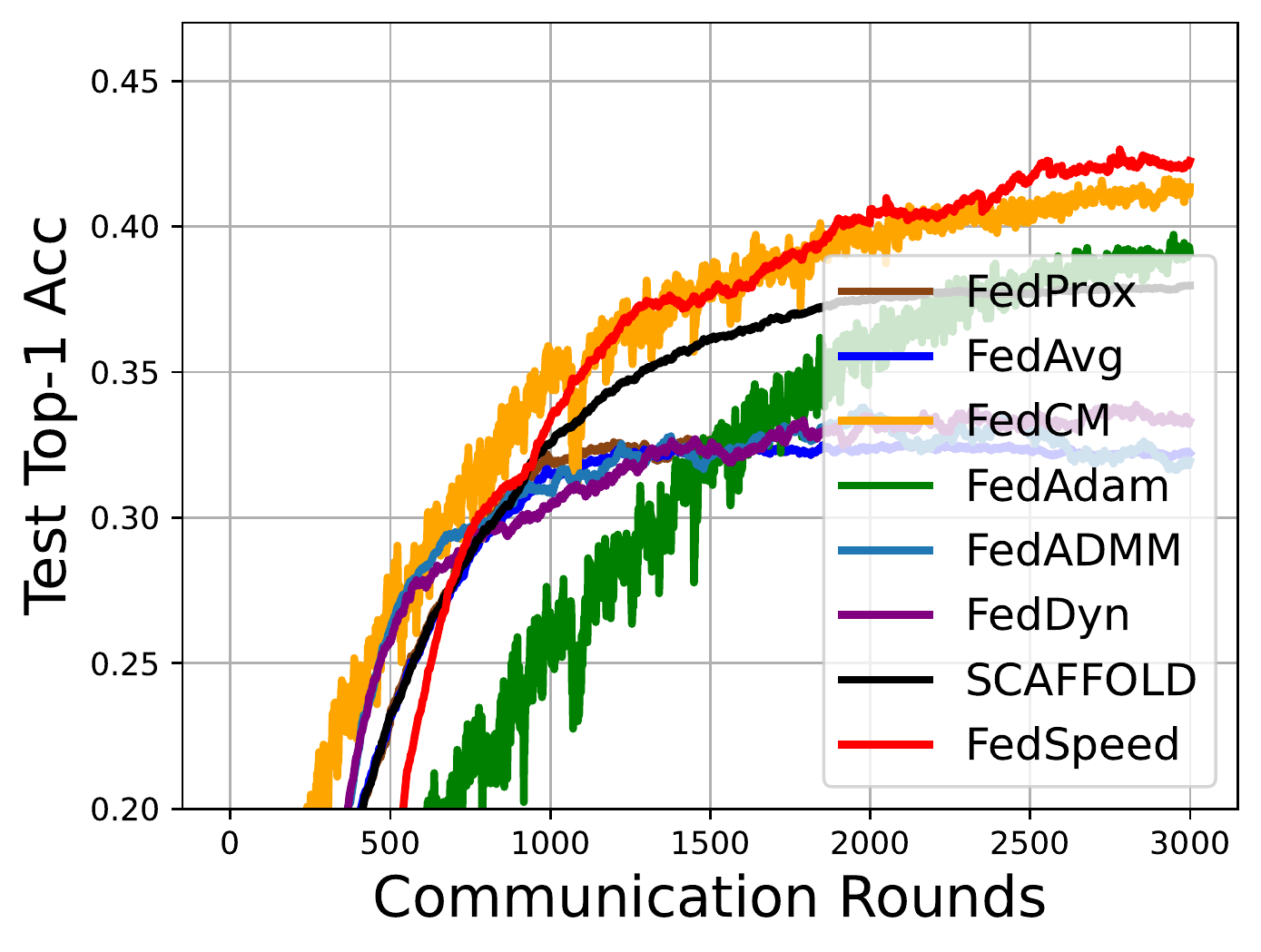}
    	\end{minipage}
	}\!\!\!\!\!
	\caption{The top-1 accuracy in communication rounds of all compared methods on CIFAR-10/100 and TinyImagenet. Communication rounds are set as 1500 for CIFAR-10/100,  3000 for TinyImagenet. In each group, the left shows the performance on IID dataset while the right shows the performance on the non-IID dataset, which are split by setting heterogeneity weight of the Dirichlet as 0.6.}
		\label{cifar_10_res}
\end{figure}

\subsection{Setup}
\textbf{Dataset and backbones.}\ 
We test the experiments on CIFAR-10, CIFAR-100 \cite{cifar100} and TinyImagenet. Due to the space limitations we introduce these datasets in the \textbf{Appendix}. We follow the \cite{dirichlet} to introduce the heterogeneity via splitting the total dataset by sampling the label ratios from the Dirichlet distribution. We train and test the performance on the standard ResNet-18 \cite{resnet} backbone with the 7$\times$7 filter size in the first convolution layer with BN-layers replaced by GN \cite{gn,gn_bn} to avoid the invalid aggregation.

\textbf{Implementation details.}\ 
We select each hyper-parameters within the appropriate range and present the combinations under the best performance. To fairly compare these baseline methods, we fix the most hyper-parameters for all methods under the same setting. For the 10$\%$ participation of total 100 clients training, we set the local learning rate as 0.1 initially and set the global learning rate as 1.0 for all methods except for FedAdam which applies 0.1 on global server. The learning rate decay is set as multiplying 0.998 per communication round except for FedDyn, FedADMM and FedSpeed which apply 0.9995. Each active local client trains 5 epochs with batchsize 50. Weight decay is set as 1$e$-3 for all methods. The weight for the prox-term in FedProx, FedDyn, FedADMM and FedSpeed is set as 0.1. For the 2$\%$ participation, the learning rate decay is adjusted to 0.9998 for FedDyn and FedSpeed. Each active client trains 2 epochs with batchsize 20. The weight for the prox-term is set as 0.001. The other hyper-parameters specific to each method will be introduced in the \textbf{Appendix}.

\textbf{Baselines.}
We compare several classical and efficient methods with the proposed FedSpeed in our experiments, which focus on the local consistency and client-drifts, including FedAvg~\cite{fedavg}, FedAdam \cite{fedadam}, SCAFFOLD \cite{SCAFFOLD}, FedCM \cite{fedcm}, FedProx \cite{fedprox}, FedDyn \cite{feddyn} and FedADMM \cite{fedadmm1}. FedAdam applies adaptive optimizer to improve the performance on the global updates. SCAFFOLD and FedCM utilize the global gradient estimation to correct the local updates. FedProx introduces the prox-term to alleviate the local inconsistency. FedDyn and FedADMM both employ the different variants of the primal-dual method to reduce the local inconsistency. Due to the limited space, more detailed description and discussions on these compared baselines are placed in the \textbf{Appendix}.

\subsection{Experiments}\label{exp_cifar10}

\textbf{CIFAR-10.}\ 
Our proposed FedSpeed is robust to different participation cases. Figure~\ref{cifar_10_res} (a) shows the results of 10$\%$ participation of total 100 clients. For the IID splits, FedSpeed achieves 6.1$\%$ ahead of FedAvg as 88.5$\%$. FedDyn suffers the instability when learning rate is small, which is the similar phenomenon as mentioned in \cite{fedcm}. When introducing the heterogeneity, FedAdam suffers from the increasing variance obviously with the accuracy dropping from 85.7$\%$ to 83.2$\%$. Figure~\ref{cifar_10_res} (b) shows the impact from reducing the participation. FedAdam is lightly affected by this change while the performance degradation of SCAFFOLD is significant which drops from 85.3$\%$ to 80.1$\%$. 

\textbf{CIFAR-100 $\&$ TinyImagenet.}\ 
As shown in Figure~\ref{cifar_10_res} (c) and (d), the performance of FedSpeed on the CIFAR-100 and TinyImagenet with low participating setting performs robustly and achieves approximately 1.6$\%$ and 1.8$\%$ improvement ahead of the FedCM respectively. As the participation is too low, the impact from the heterogeneous data becomes weak gradually with a similar test accuracy. SCAFFOLD is still greatly affected by a low participation ratio, which drops about 3.3$\%$ lower than FedAdam. FedCM converges fast at the beginning of the training stage due to the benefits from strong consistency limitations. FedSpeed adopts to update the prox-correction term and converges faster with its estimation within several rounds and then FedSpeed outperforms other methods.

Table~\ref{acc} shows the accuracy under the low participation ratio equals to 2$\%$. Our proposed FedSpeed outperforms on each dataset on both IID and non-IID settings. Table~\ref{acc} shows the accuracy under the low participation ratio equals to 2$\%$. Our proposed FedSpeed outperforms on each dataset on both IID and non-IID settings. We observe the similar results as mentioned in \cite{fedadam,fedcm}. FedAdam and FedCM could maintain the low consistency in the local training stage with a robust results to achieve better performance than others. While FedDyn is affected greatly by the number of training samples in the dataset, which is sensitive to the partial participation ratios.

\begin{table}[t]
\centering
\renewcommand{\arraystretch}{1}
\caption{Test accuracy ($\%$) on the CIFAR-10/100 and TinyImagenet under the 2$\%$ participation of 500 clients with IID and non-IID dataset. The heterogeneity is applied as Dirichlet-0.6 (\textbf{DIR.}).}
\vspace{0.05cm}
\setlength{\tabcolsep}{4mm}{\begin{tabular}{@{}c|cc|cc|cc@{}}
\toprule
\multicolumn{1}{c}{\multirow{2}{*}{Method}} & \multicolumn{2}{c}{CIFAR-10}       & \multicolumn{2}{c}{CIFAR-100}      & \multicolumn{2}{c}{TinyImagenet}  \\ \cmidrule(lr){2-3} \cmidrule(lr){4-5} \cmidrule(lr){6-7} 
                        & \multicolumn{1}{c}{IID.}  & DIR. & \multicolumn{1}{c}{IID.}  & DIR. & \multicolumn{1}{c}{IID.} & DIR. \\ 
\cmidrule(lr){1-1}    \cmidrule(lr){2-3} \cmidrule(lr){4-5} \cmidrule(lr){6-7}                 
FedAvg                  & \multicolumn{1}{c}{77.01} & 75.21    & \multicolumn{1}{c}{40.68} & 39.33    & \multicolumn{1}{c}{33.58}    &   32.71     \\ 
FedProx                 & \multicolumn{1}{c}{77.68} & 75.97    & \multicolumn{1}{c}{41.29} & 39.69    & \multicolumn{1}{c}{33.71}    &   32.78      \\ 
FedAdam                 & \multicolumn{1}{c}{82.92} & 80.55    & \multicolumn{1}{c}{51.65} & 49.29    & \multicolumn{1}{c}{40.85}    &   39.71     \\ 
SCAFFOLD                & \multicolumn{1}{c}{80.11} & 77.71    & \multicolumn{1}{c}{47.38} & 46.33    & \multicolumn{1}{c}{38.03}    &   37.54      \\ 
FedCM                   & \multicolumn{1}{c}{84.20} & 83.48    & \multicolumn{1}{c}{52.35} & 50.98    & \multicolumn{1}{c}{41.90}    &   41.67     \\ 
FedDyn                  & \multicolumn{1}{c}{83.36} & 80.57    & \multicolumn{1}{c}{46.18} & 46.60    & \multicolumn{1}{c}{34.69}    &   33.92     \\ 
FedADMM                 & \multicolumn{1}{c}{81.29} & 79.71    & \multicolumn{1}{c}{45.51} & 46.65    & \multicolumn{1}{c}{36.03}    &   33.83      \\ 
\textbf{FedSpeed}       & \multicolumn{1}{c}{\textbf{85.80}} & \textbf{84.79}    & \multicolumn{1}{c}{\textbf{53.93}} & \textbf{52.88}    & \multicolumn{1}{c}{\textbf{43.38}}    &     \textbf{42.75}    \\
\hline
\bottomrule
\end{tabular}}\label{acc}
\vspace{-0.2cm}
\end{table}
\textbf{Large local interval for the prox-term.}\
From the IID case to the non-IID case, the heterogeneous dataset introduces the local inconsistency and leads to the severe client-drifts problem. Almost all the baselines suffer from the performance degradation. High local consistency usually supports for a large interval as for their bounded updates and limited offsets. Applying prox-term guarantees the local consistency, but it also has an negative impact on the local training towards the target of weighted local optimal and global server model. FedDyn and FedADMM succeed to apply the primal-dual method to alleviate this influence as they change the local objective function whose target is reformed by a dual variable. These method can mitigate the local offsets caused by the prox-term and they improve about 3$\%$ ahead of the FedProx on CIFAR-10. However, the primal-dual method requires a local $\epsilon$-close solution. In the non-convex optimization it is difficult to determine the selection of local training interval $K$ under this requirement. Though \citet{feddyn} claim that 5 local epochs are approximately enough for the $\epsilon$-close solution, there is still an unpredictable local biases.

FedSpeed directly applies a prox-correction term to update $K$ epochs and avoids the requirement for the precision of local solution. This ensures that the local optimization stage does not introduce the bias due to the error of the inexact solution. Moreover, the extra ascent step can efficiently improve the performance of local model parameters. Thus, the proposed FedSpeed can improve 3$\%$ than FedDyn and FedADMM and achieve the comparable performance as training on the IID dataset.

An interesting experimental phenomenon is that the performance of SCAFFOLD gradually degrades under the low participation ratio. It should be noticed that under the 10$\%$ participation case, SCAFFOLD performs as well as the FedCM. It benefits from applying a global gradient estimation to correct the local updates, which can weaken the client-drifts by a quasi gradient towards to the global optimal. Actually the estimation variance is related to the participation ratio, which means that their efficiencies rely on the enough number of clients. When the participation ratio decreases to be extremely low, their performance will also be greatly affected by the huge biases in the local training.

\subsection{Hyperparameters Sensitivity}\label{ablation}

\textbf{Local interval $K$.}\label{enlarge_K}
To explore the acceleration on $T$ by applying a large interval $K$, we fix the total training epochs $E$. It should be noted that $K$ represents for the iteration and $E$ represents for the epoch. A larger local interval can be applied to accelerate the convergence in many previous works theoretically, e.g. for SCAFFOLD and FedAdam, while empirical studies are usually unsatisfactory. As shown in Figure~\ref{local_K}, in the FedAdam and FedCM, when $K$ increases from 1 to 20, the accuracy drops about 13.7$\%$ and 10.6$\%$ respectively. SCAFFOLD is affected lightly while its performance is much lower. In Figure~\ref{local_K}~(d), FedSpeed applies the larger $E$ to accelerate the communication rounds $T$ both on theoretical proofs and empirical results, which stabilizes to swing within 3.8$\%$ lightly.

\begin{wraptable}{r}{7cm}
\small
    \vspace{-0.6cm}
    \caption{Performance of different $\rho_{0}$ with $\alpha=1$.}
    \vspace{0.1cm}
    \label{rho}
    \centering
    \begin{tabular}{c|cccccc}
        \toprule
        $\rho_{0}$  & 0 & 0.01 & 0.05 & \textbf{0.1} & 0.2 \\
        \midrule
        Acc.  & 83.97  & 84.6 & 85.38  & \textbf{85.72} & 84.35 \\
        \bottomrule
    \end{tabular}
    \vspace{-0.4cm}
\end{wraptable}
\textbf{Learning rate $\rho$ for gradient perturbation.}\ 
In the simple analysis, $\rho$ can be selected as a proper value which has no impact on the convergence complexity. By noticing that if $\alpha\neq 0$, $\rho$ could be selected irrelevant to $\eta_{l}$. To achieve a better performance, we apply the ascent learning rate $\rho=\rho_{0}/\Vert\nabla F_{i}\Vert$ to in the experiments, where $\rho_{0}$ is a constant value selected from the Table~\ref{rho}. $\rho$ is consistent with the sharpness aware minimization \cite{sam} which can search for a flat local minimal. Table \ref{rho} shows the performance of utilizing the different $\rho_{0}$ on CIFAR-10 by 500 communication rounds under the 10$\%$ participation of total 100 clients setting. Due to the space limitations more details could be referred to the \textbf{Appendix}.


\begin{figure}[t]
  \centering
  \subfigure[FedAdam.]{
    \includegraphics[width=0.25\textwidth]{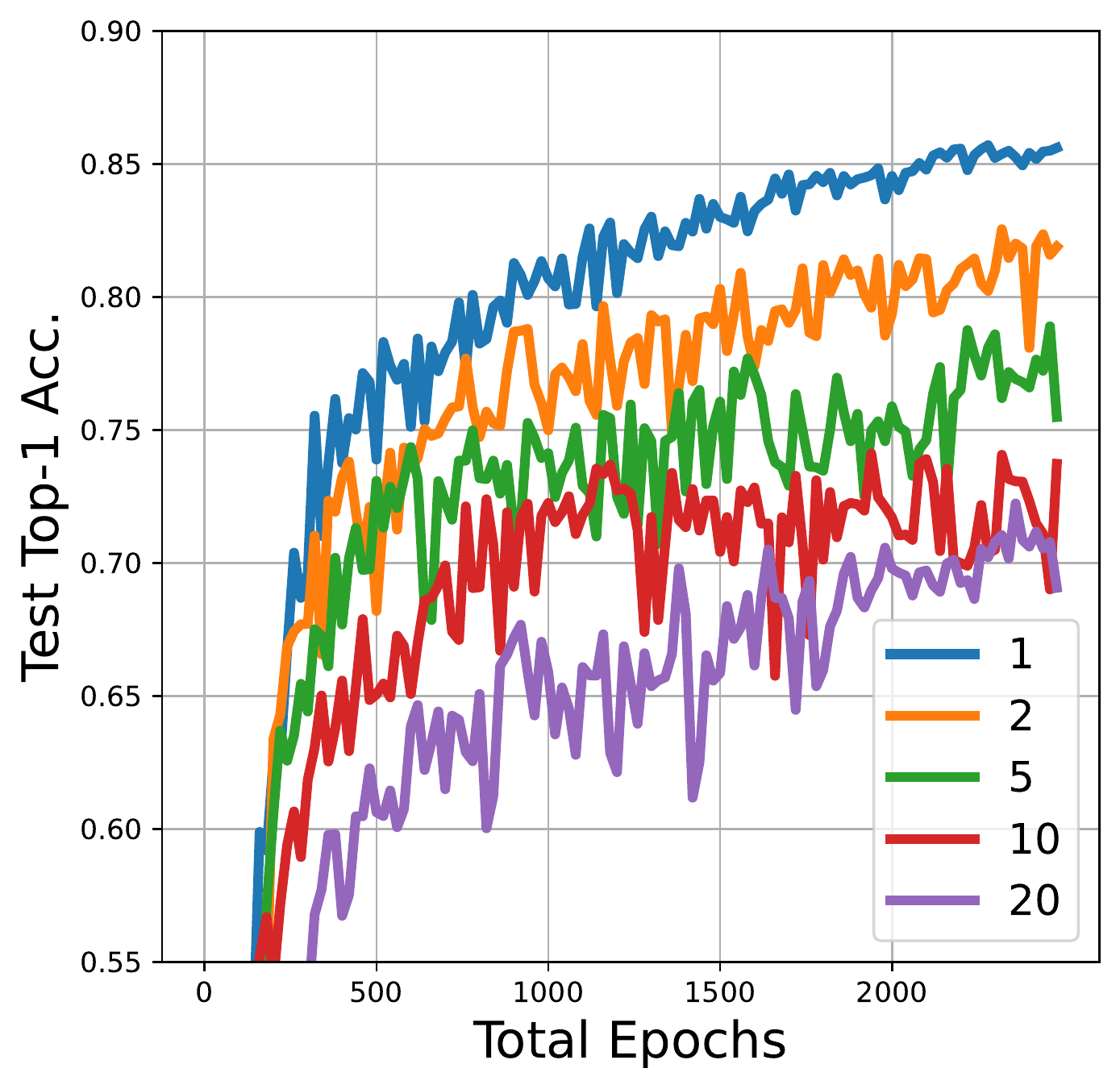}}\!\!\!
  \subfigure[FedCM.]{
    \includegraphics[width=0.25\textwidth]{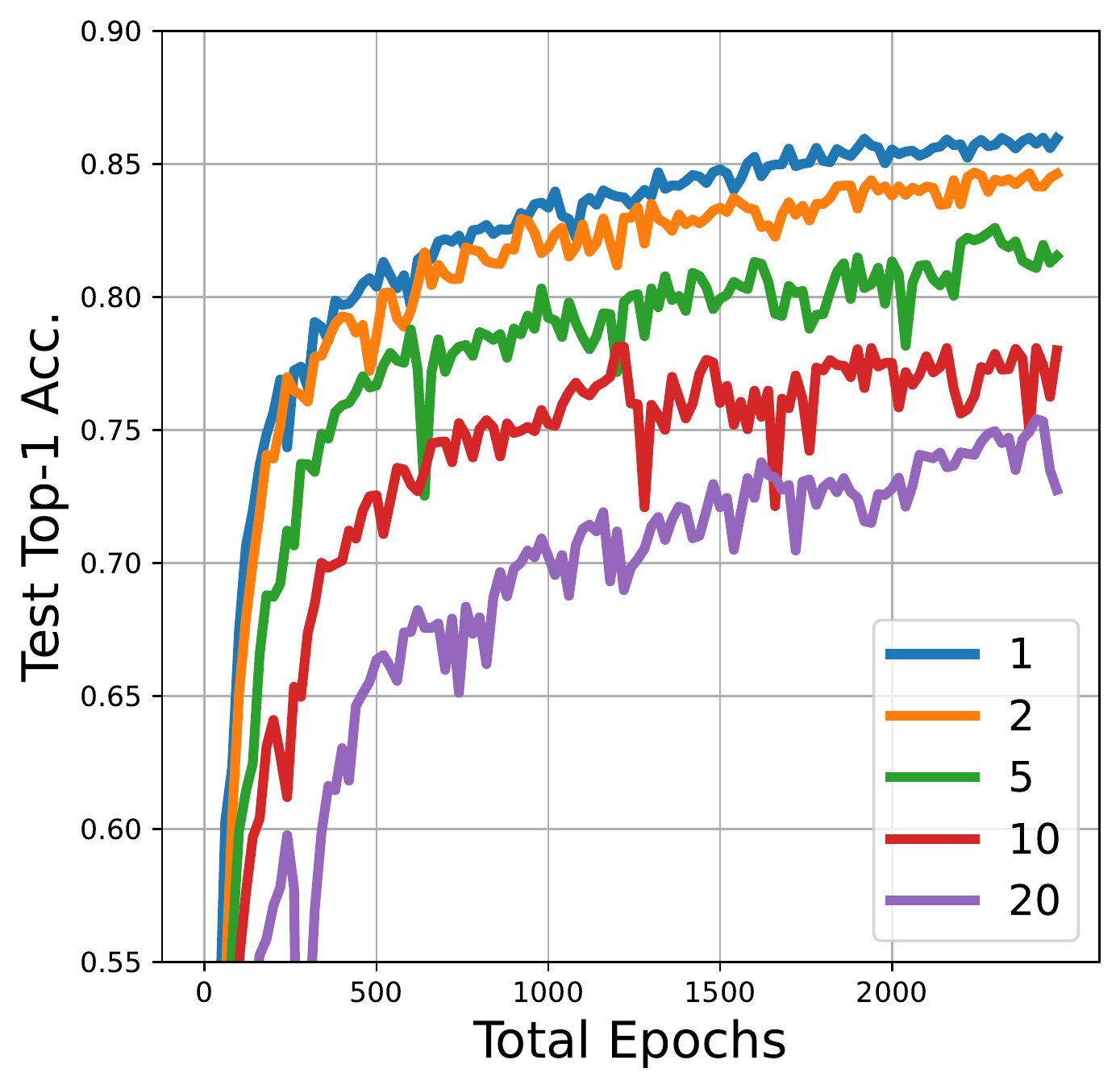}}\!\!\!
  \subfigure[SCAFFOLD.]{
    \includegraphics[width=0.25\textwidth]{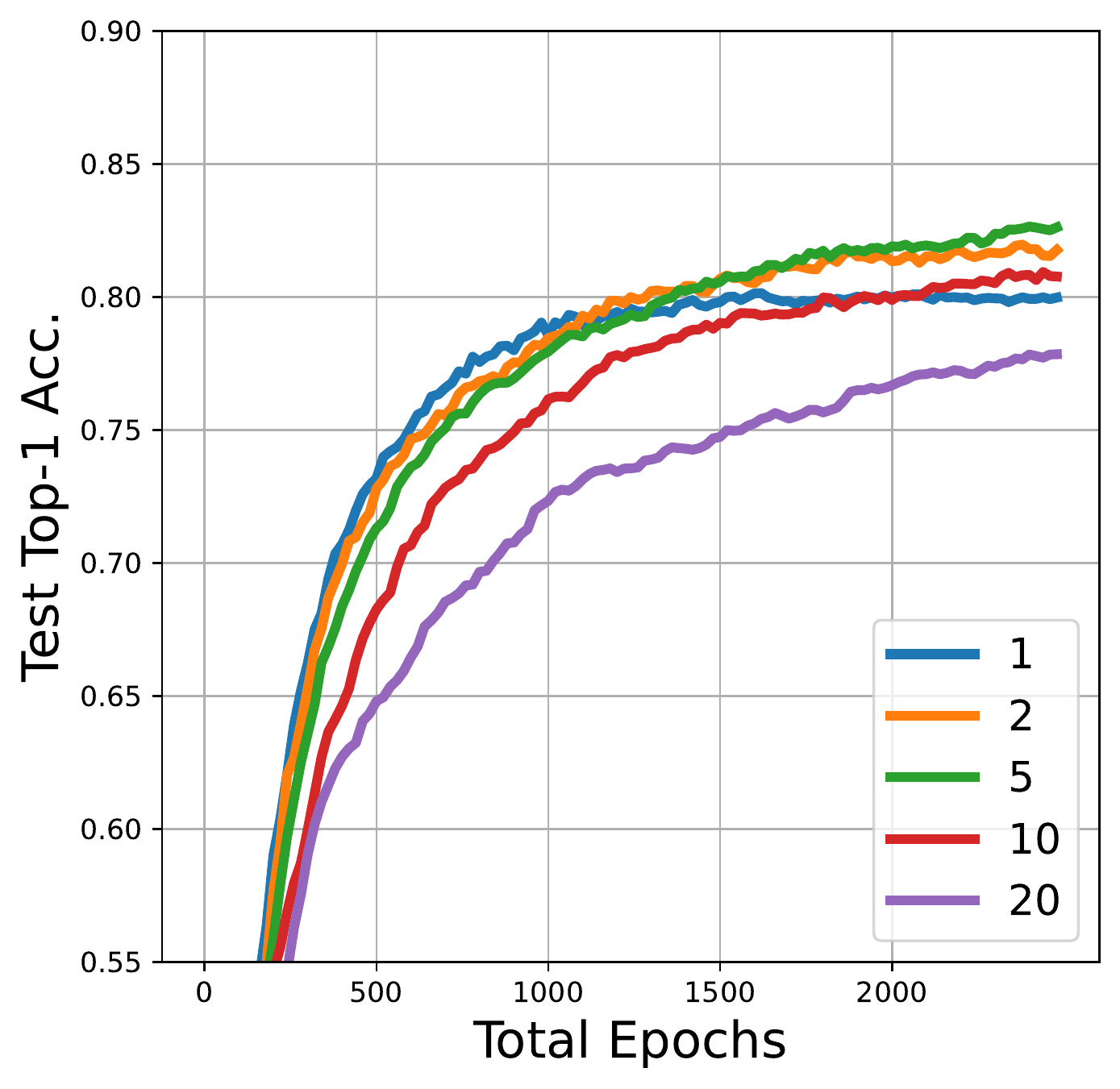}}\!\!\!
  \subfigure[\textbf{FedSpeed.}]{
    \includegraphics[width=0.25\textwidth]{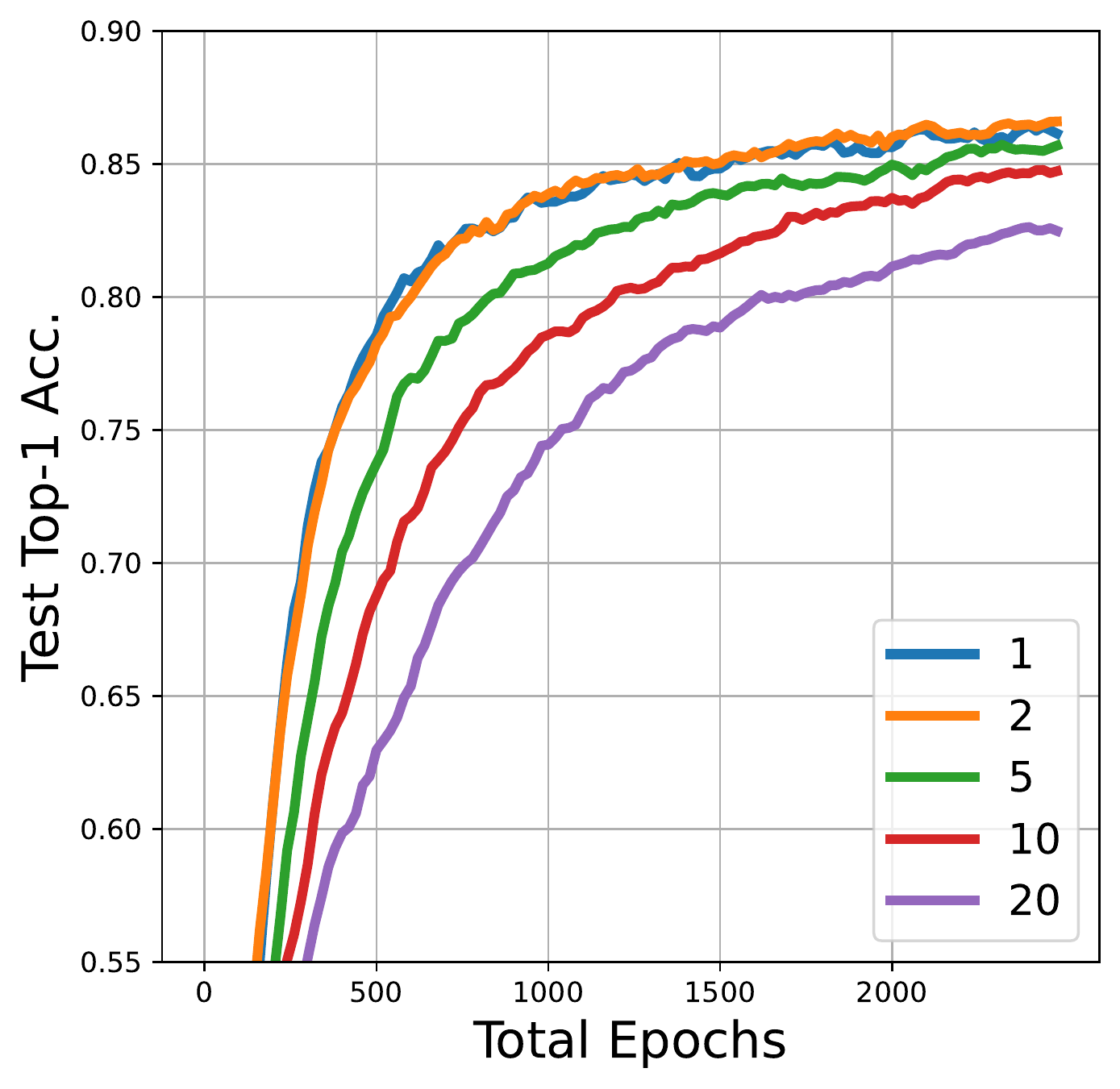}}\!\!\!
  \caption{Performance of FedAdam, FedCM, SCAFFOLD and FedSpeed with local epochs $E=1,2,5,10,20$ on the 10$\%$ participation case of total 100 clients on CIFAR-10. We fix $T\times E=2500$ as the equaled total training epochs to illustrate the performance of increasing $E$ and decreasing $T$.}
  \label{local_K}
\end{figure}

\section{Conclusion}\label{conclusion}
In this paper, we propose a novel and practical federated method FedSpeed which applies a prox-correction term to neutralize the bias due to prox-term in each local training stage and utilizes a perturbation gradient weighted by an extra gradient ascent step to improve the local generalization performance. We provide the theoretical analysis to guarantee its convergence and prove that FedSpeed benefits from a larger local interval $K$ to achieve a fast convergence rate of $\mathcal{O}(1/T)$ without any other harsh assumptions. We also conduct extensive experiments to highlight the significant improvement and efficiency of our proposed FedSpeed, which is consistent with the properties of our analysis. This work inspires the FL framework design to focus on the local consistency and local higher generalization performance to implement the high-efficient method to federated learning.
\newpage
\bibliography{iclr2023_conference}
\bibliographystyle{iclr2023_conference}
\newpage
\appendix
\allowdisplaybreaks[4]
In this part, we will introduce the gradient perturbation, the proofs of the major equations and the theorem, and some extra experiments. In Section~\ref{perturbation}, we provide a explanation for understanding the gradient perturbation step in our proposed FedSpeed. In Section~\ref{appendix_proof}, we provide the full proofs of the major equation in the text, some main lemmas and the theorem. In Section~\ref{exp}, we provide the details of the implementation of the experiments including the setups, dataset, hyper-parameters and some extra experiments. 

\section{Gradient Perturbation}\label{perturbation}
\subsection{Understanding of Gradient Perturbation}
We propose the gradient perturbation in the local training stage instead of the traditional stochastic gradient, which merges an extra gradient ascent step to the vanilla gradient by a hyper-parameter $\alpha$. While its ascent step usually approximates the worst point in the neighbourhood. This has been studied in many previous works, e.g. for the form of extra gradient and the sharpness aware minimization. In our studies, we perform the extra gradient ascent step instead of the descent step in extra gradient method. It also could be considered as a variant of the sharpness aware minimization method via weighted averaging the ascent step gradient and the vanilla gradient, instead of the normalized gradient. Here we illustrate the implicit of this quasi-gradient $\tilde{\mathbf{g}}$ in our proposed FedSpeed and explain the positive efficiency for the local training from the perspective of objective functions.

Firstly we consider to minimize the non-convex problem $\mathcal{L}_{p}(\mathbf{x})$. To approach the stationary point of $\mathcal{L}_{p}$, we can simply introduce a penalized gradient term as a extra loss in $\mathcal{L}_{p}$, which is to solve the problem $\min_{\mathbf{x}} \{\mathcal{L}(\mathbf{x})\triangleq\mathcal{L}_{p}(\mathbf{x})+\frac{\beta}{2}\Vert\nabla \mathcal{L}_{p}(\mathbf{x})\Vert^{2}\}$. The final optimization target is consistent with the vanilla target, while penalizing gradient term can approach a flatten minimal empirically. We compute the gradient form as follows:
\begin{equation}\label{gradient_form}
    \nabla \mathcal{L}(\mathbf{x})
    = \nabla \mathcal{L}_{p}(\mathbf{x})+\frac{\beta}{2}\nabla\Vert\nabla\mathcal{L}_{p}(\mathbf{x})\Vert^{2}= \nabla \mathcal{L}_{p}(\mathbf{x})+\beta\nabla^{2}\mathcal{L}_{p}(\mathbf{x})\cdot\nabla\mathcal{L}_{p}(\mathbf{x}).
\end{equation}
The update in Equation~(\ref{gradient_form}) contains second-order Hessian information, which involves a huge amount of parameters for calculation. To further simplify the updates, we consider an approximation for the gradient form. We expand the function $\mathcal{L}_{p}$ via Taylor expansion as:
\begin{align*}
    \mathcal{L}_{p}(\mathbf{x}+\Delta)=\mathcal{L}_{p}(\mathbf{x}) + \nabla\mathcal{L}_{p}(\mathbf{x})\Delta + \frac{1}{2}\Delta^{T}\nabla^{2}\mathcal{L}_{p}(\mathbf{x})\Delta + \mathcal{R}_{\Delta},\\
\end{align*}
where $\mathcal{R}_{\Delta}=\mathcal{O}(\Vert\Delta\Vert^{2})$ is the infinitesimal to $\Vert\Delta\Vert^{2}$, which is directly omitted in our approximation.

Thus we have the gradient form on $\Delta$ as:
\begin{align*}
    \nabla\mathcal{L}_{p}(\mathbf{x}+\Delta)\approx\nabla\mathcal{L}_{p}(\mathbf{x}) + \nabla^{2}\mathcal{L}_{p}(\mathbf{x})\Delta.\\
\end{align*}
$\mathcal{R}_{\Delta}$ is relevant to $\Delta$. We set the $\Delta=\rho\nabla\mathcal{L}_{p}(\mathbf{x})$ and then we have:
\begin{equation}\label{hessiang}
    \nabla^{2}\mathcal{L}_{p}(\mathbf{x})\nabla\mathcal{L}_{p}(\mathbf{x})\approx\frac{1}{\rho}\bigl(\nabla\mathcal{L}_{p}\bigl(\mathbf{x}+\rho\nabla\mathcal{L}_{p}(\mathbf{x})\bigr)-\nabla\mathcal{L}_{p}(\mathbf{x})\bigr).
\end{equation}
Thus we connect Equation~(\ref{gradient_form}) and Equation~(\ref{hessiang}), we have:
\begin{align*}
    \nabla \mathcal{L}(\mathbf{x})
    &= \nabla \mathcal{L}_{p}(\mathbf{x})+\beta\nabla^{2}\mathcal{L}_{p}(\mathbf{x})\cdot\nabla\mathcal{L}_{p}(\mathbf{x})\\
    &\approx \nabla \mathcal{L}_{p}(\mathbf{x})+\frac{\beta}{\rho}\bigl(\nabla\mathcal{L}_{p}\bigl(\mathbf{x}+\rho\nabla\mathcal{L}_{p}(\mathbf{x})\bigr)-\nabla\mathcal{L}_{p}(\mathbf{x})\bigr)\\
    &= \bigl(1-\frac{\beta}{\rho}\bigr)\nabla\mathcal{L}_{p}(\mathbf{x})+\frac{\beta}{\rho}\nabla\mathcal{L}_{p}\bigl(\mathbf{x}+\rho\nabla\mathcal{L}_{p}(\mathbf{x})\bigr)\\
    &= (1-\alpha)\nabla\mathcal{L}_{p}(\mathbf{x})+\alpha\nabla\mathcal{L}_{p}\bigl(\mathbf{x}+\rho\nabla\mathcal{L}_{p}(\mathbf{x})\bigr).\\
\end{align*}
\textbf{Here we can see that the balance weight $\alpha$ in our proposed method is actually the ratio of the gradient penalized weight $\beta$ and the gradient ascent step size $\rho$.} To fix the step size $\rho$, increasing $\alpha$ means increasing the gradient penalized weight $\beta$, which facilitates searching for a flatten stationary point to improve the generalization performance. While the second term of $\nabla \mathcal{L}(\mathbf{x})$ can not be directly computed for its nested form, we approximate the second term with the chain rule as follows:
\begin{align*}
    \nabla\mathcal{L}_{p}\bigl(\mathbf{x}+\rho\nabla\mathcal{L}_{p}(\mathbf{x})\bigr)\approx\nabla\mathcal{L}_{p}(\theta)\vert_{\theta=\mathbf{x}+\rho\nabla\mathcal{L}_{p}(\mathbf{x})}.
\end{align*}
Finally we have:
\begin{equation}\label{quasi_g}
    \nabla \mathcal{L}(\mathbf{x})
    \approx (1-\alpha)\nabla\mathcal{L}_{p}(\mathbf{x})+\alpha\nabla\mathcal{L}_{p}(\theta)\vert_{\theta=\mathbf{x}+\rho\nabla\mathcal{L}_{p}(\mathbf{x})}.
\end{equation}
The Equation~(\ref{quasi_g}) provides an understanding for the weighted quasi gradient $\tilde{\mathbf{g}}$ on the local training stage in our proposed FedSpeed. We select an appropriate $0\leq\beta\leq\rho$ to satisfy the update of perturbation gradient. It executes a gradient ascent step firstly with the step size $\rho$ to $\Breve{\mathbf{x}}$. Then it generates the stochastic gradient by the same sampled mini-batch data as the ascent step at $\Breve{\mathbf{x}}$. The quasi-gradient is merged as Equation~(\ref{quasi_g}) to execute the gradient descent step.

This is just a simple approximation for the gradient perturbation to help for understanding the implicit of the quasi-gradient and its performance in the training stage. Actually the error of the approximation depends a lot on $\rho$. The smaller $\rho$, the higher the accuracy of this estimation, but the smaller $\rho$, the less efficient the optimizer performs. Similar understanding can be referred in the \citep{fedsam,improving_sam,understanding_sam}.

\section{Experiments}\label{exp}
\subsection{Setups}
\begin{table}[h]
  \caption{Dataset introductions.}
  \label{dataset}
  \centering
  \begin{tabular}{ccccc}
    \toprule
    Dataset     & Training Data     & Test Data & Class & Size\\
    \midrule
    CIFAR-10 & 50,000  & 10,000 & 10 & 3$\times$32$\times$32    \\
    CIFAR-100     & 50,000 & 10,000 & 100 & 3$\times$32$\times$32     \\
    TinyImagenet     & 100,000  & 10,000 & 200 & 3$\times$64$\times$64 \\
    \bottomrule
  \end{tabular}
\end{table}
\paragraph{Dataset and Backbones.} Extensive experiments are tested on CIFAR-10/100 dataset. We test on the two different settings as 10$\%$ participation of total 100 clients and 2$\%$ participation of total 500 clients. CIFAR-10 dataset contains 50,000 training data and 10,000 test data in 10 classes. Each data sample is a 3$\times$32$\times$32 color image. CIFAR-100 \cite{cifar100} includes 50,000 training data and 10,000 test data in 100 classes as 500 training samples per class. TinyImagenet involves 100,000 training images and 10,000 test images in 200 classes for 3$\times$64$\times$64 color images, as shown in Table~\ref{dataset}. To fairly compare with the other baselines, we train and test the performance on the standard ResNet-18 \cite{resnet} backbone with the 7$\times$7 filter size in the first convolution layer as implemented in the previous works, e.g. for \cite{SCAFFOLD,feddyn,fedcm}. We follow the \cite{gn_bn} to replace the batch normalization layer with group normalization layer \cite{gn}, which can be aggregated directly by averaging. These are all common setups in many previous works.
\begin{figure}[h]
\centering
\subfigure[CIFAR-10.]{
    \begin{minipage}[t]{0.5\textwidth}
    \centering
    \includegraphics[width=0.966\textwidth]{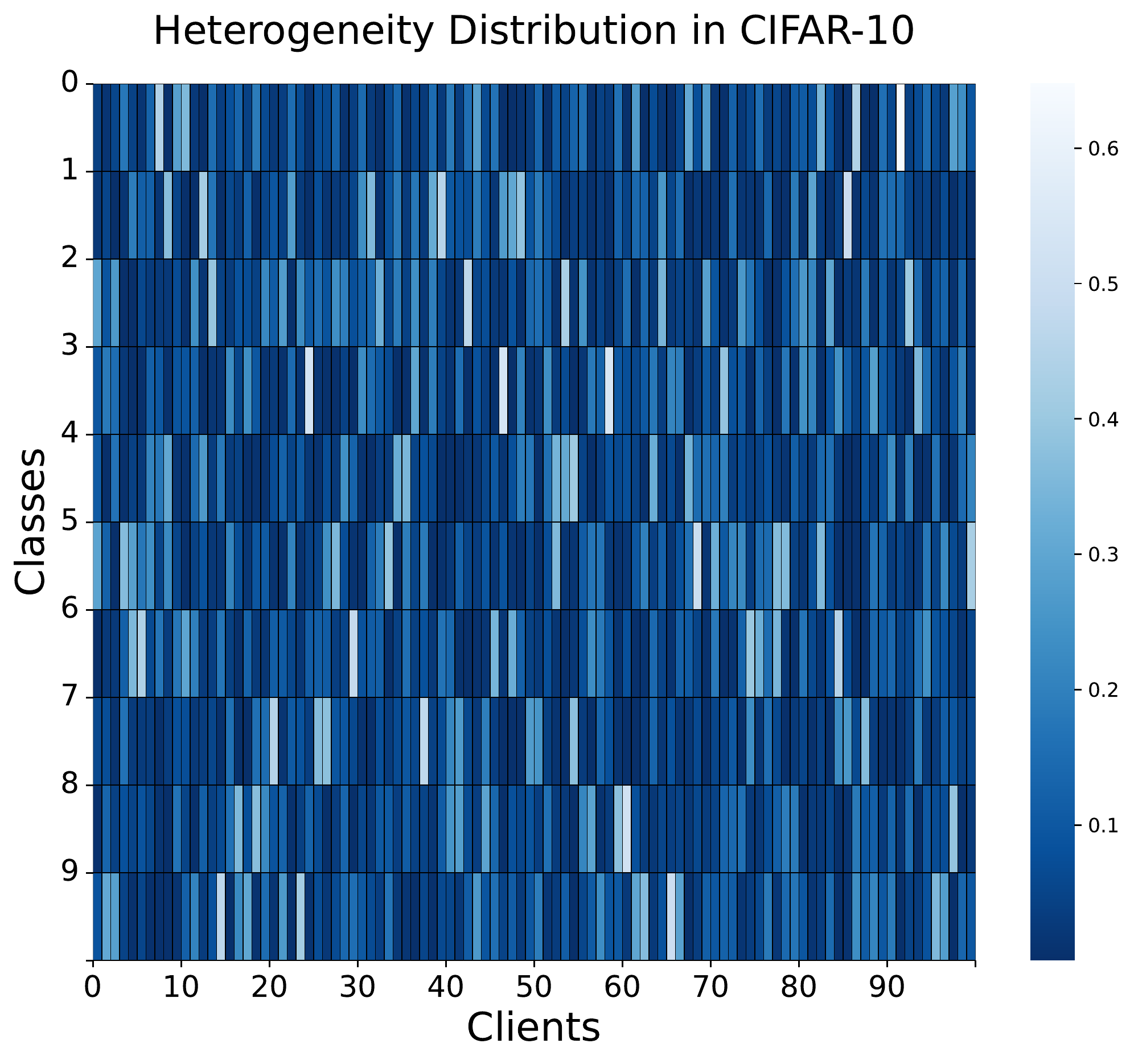}
    \end{minipage}%
}%
\subfigure[CIFAR-100.]{
    \begin{minipage}[t]{0.5\textwidth}
    \centering
    \includegraphics[width=0.99\textwidth]{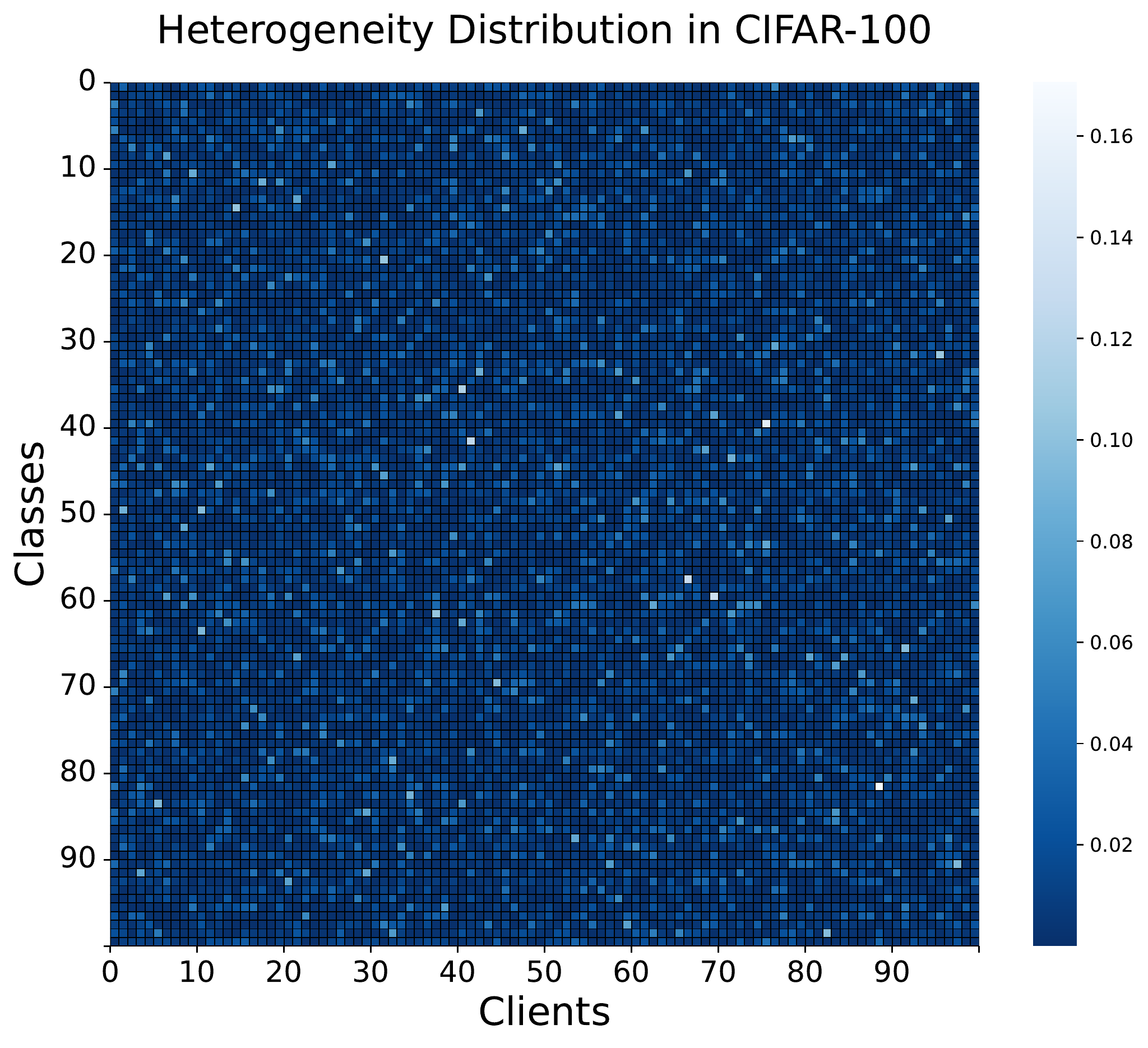}
    \end{minipage}%
}%
\centering
\caption{Heat maps for different dataset under heterogeneity weight equals to 0.6 for Dirichlet distribution.}
\label{heatmaps}
\end{figure}
\paragraph{Dataset Partitions.} To fairly compare with the other baselines, we follow the \cite{dirichlet} to introduce the heterogeneity via splitting the total dataset by sampling the label ratios from the Dirichlet distribution. An additional parameter is used to control the level of the heterogeneity of the entire data partition. In order to visualize the distribution of heterogeneous data, we make the heat maps of the label distribution in different dataset, as shown in Figure~\ref{heatmaps}. Since the heat map of 500 clients cannot be displayed normally, we show 100 clients case. It could be seen that for heterogeneity weight equals to 0.6, about 10$\%$ to 20$\%$ of the categories dominate on each client, which is white block in the Figure~\ref{heatmaps}. The IID dataset is totally averaged in each client.

\paragraph{Data Argumentation.} For CIFAR-10/100, we follow the implementation in the \cite{SCAFFOLD,feddyn} to normalize the pixel value within a specific mean and std value in our code, which are [0.491, 0.482, 0.447] for mean, [0.247, 0.243, 0.262] for std and [0.5071, 0.4867, 0.4408] for mean, [0.2675, 0.2565, 0.2761] for std. We randomly flip the training samples and randomly crop the images enlarged with the padding equal to 4. For TinyImagenet, the same argumentation is applied except for the padding equal to 8.

\paragraph{Baselines.} FedAvg \cite{fedavg} is proposed as the basic framework in the federated learning. And FedOpt improves it as a two-stage optimizer with a local and global optimizer update alternatively. \citet{linear_speedup} proves a specific $\eta_{g}$ (not the average weight) can achieve faster convergence (non-dominant term). FedAdam \cite{fedadam} utilizes a adaptive optimizer on the global server and SGD optimizer on the clients, which average the averaged local gradients as a quasi-gradient for global server to implement the adaptive update. SCAFFOLD \cite{SCAFFOLD} applies the variance reduction technique , i.e. SVRG, to approximate the global gradient as the averaged local gradients and transfer an extra variable to the client per round. This implementation can accelerate the convergence rate of the non-dominant term theoretically and achieve a high performance empirically. FedCM \cite{fedcm} proposes a client-level momentum to merge the global update as a momentum buffer to the local updates, which extremely reduces the local consistency. Though it introduces a unpredictable biases into the local updates, it achieves the SOTA performance ahead of other methods. FedProx \cite{fedprox} implements the prox-point optimizer into the FL framework on local updates with a regularization prox-term regularizer. It limits the local updates towards the initial point at the start of each local stage. Many previous works have analyzed its advantages and weaknesses. \cite{feddyn,fedadmm1,fedadmm2} use different variants of primal-dual method into FL and achieve nice satisfactory in the FL framework. It does not need a heterogeneity bounded assumption theoretically, which requires a high local convergence guarantees. Our proposed FedSpeed achieve the same convergence rate without assuming the local exact solution and we provide the local interval bound to achieve this faster convergence. Both theoretical analysis and empirical results verifies the performance of our proposed FedSpeed.

\subsection{Experiments}
\begin{table}[h]
\centering
\caption{Communication rounds required to achieve the target accuracy. On CIFAR-10/100 it trains 1,500 rounds and on TinyImagenet it trains 3,000 rounds. "-" means the test accuracy can not achieve the target accuracy within the fixed training rounds. $\textbf{DIR}$ represents for the Dirichlet distribution with the heterogeneity weight equal to 0.6. Local interval $K$ is set as 5 on CIFAR-10 (100-10\%) and 2 on others. Other hyper-parameters are introduced above.}
\vspace{0.1cm}
\renewcommand{\arraystretch}{1.5}
\label{r}
\begin{tabular}{ccccccccc}
\hline
\multicolumn{1}{|c|}{Dataset}           & \multicolumn{4}{c|}{CIFAR-10 (100-10\%)}                                                                                        & \multicolumn{4}{c|}{CIFAR-10 (500-2\%)}                                                                                         \\ \hline
\multicolumn{1}{|c|}{Heterogeneity}     & \multicolumn{2}{c|}{IID.}                                      & \multicolumn{2}{c|}{DIR.}                                      & \multicolumn{2}{c|}{IID.}                                      & \multicolumn{2}{c|}{DIR.}                                      \\ \hline
\multicolumn{1}{|c|}{Target Acc. (\%)}  & \multicolumn{1}{c|}{80.0} & \multicolumn{1}{c|}{85.0}          & \multicolumn{1}{c|}{80.0} & \multicolumn{1}{c|}{85.0}          & \multicolumn{1}{c|}{75.0} & \multicolumn{1}{c|}{82.5}          & \multicolumn{1}{c|}{75.0} & \multicolumn{1}{c|}{82.5}          \\ \hline
\multicolumn{1}{|c|}{FedAvg}            & 344                       & \multicolumn{1}{c|}{-}          & 472                       & \multicolumn{1}{c|}{-}          & 772                       & \multicolumn{1}{c|}{-}          & 1357                      & \multicolumn{1}{c|}{-}          \\ \cline{1-1}
\multicolumn{1}{|c|}{FedProx}           & 338                       & \multicolumn{1}{c|}{-}          & 465                       & \multicolumn{1}{c|}{-}          & 720                       & \multicolumn{1}{c|}{-}          & 1151                      & \multicolumn{1}{c|}{-}          \\ \cline{1-1}
\multicolumn{1}{|c|}{FedAdam}           & 324                       & \multicolumn{1}{c|}{1343}          & 689                       & \multicolumn{1}{c|}{-}          & 613                       & \multicolumn{1}{c|}{1476}          & 878                       & \multicolumn{1}{c|}{-}          \\ \cline{1-1}
\multicolumn{1}{|c|}{SCAFFOLD}          & 207                       & \multicolumn{1}{c|}{654}           & 272                       & \multicolumn{1}{c|}{-}          & 628                       & \multicolumn{1}{c|}{-}          & 967                       & \multicolumn{1}{c|}{-}          \\ \cline{1-1}
\multicolumn{1}{|c|}{FedCM}             & \textbf{109}              & \multicolumn{1}{c|}{620}           & 192                       & \multicolumn{1}{c|}{1092}          & \textbf{325}              & \multicolumn{1}{c|}{\textbf{1160}} & \textbf{449}              & \multicolumn{1}{c|}{\textbf{1399}} \\ \cline{1-1}
\multicolumn{1}{|c|}{FedDyn}            & \textbf{121}              & \multicolumn{1}{c|}{\textbf{400}}  & 166                       & \multicolumn{1}{c|}{-}          & 547                       & \multicolumn{1}{c|}{-}          & 673                       & \multicolumn{1}{c|}{-}          \\ \cline{1-1}
\multicolumn{1}{|c|}{FedADMM}           & 169                       & \multicolumn{1}{c|}{917}           & \textbf{174}              & \multicolumn{1}{c|}{\textbf{756}}  & 505                       & \multicolumn{1}{c|}{1440}          & 687                       & \multicolumn{1}{c|}{-}          \\ \cline{1-1}
\multicolumn{1}{|c|}{\textbf{FedSpeed}} & 136                       & \multicolumn{1}{c|}{\textbf{280}}  & \textbf{169}              & \multicolumn{1}{c|}{\textbf{380}}  & \textbf{495}              & \multicolumn{1}{c|}{\textbf{926}}  & \textbf{662}              & \multicolumn{1}{c|}{\textbf{1148}} \\ \hline
\multicolumn{9}{l}{}                                                                                                                                                                                                                                                                                        \\ \hline
\multicolumn{1}{|c|}{Dataset}           & \multicolumn{4}{c|}{CIFAR-100 (500-2\%)}                                                                                        & \multicolumn{4}{c|}{TinyImagenet (500-2\%)}                                                                                     \\ \hline
\multicolumn{1}{|c|}{Heterogeneity}     & \multicolumn{2}{c|}{IID.}                                      & \multicolumn{2}{c|}{DIR.}                                      & \multicolumn{2}{c|}{IID.}                                      & \multicolumn{2}{c|}{DIR.}                                      \\ \hline
\multicolumn{1}{|c|}{Target Acc.}       & \multicolumn{1}{c|}{40.0} & \multicolumn{1}{c|}{50.0}          & \multicolumn{1}{c|}{40.0} & \multicolumn{1}{c|}{50.0}          & \multicolumn{1}{c|}{33.0} & \multicolumn{1}{c|}{40.0}          & \multicolumn{1}{c|}{33.0} & \multicolumn{1}{c|}{40.0}          \\ \hline
\multicolumn{1}{|c|}{FedAvg}            & 1013                      & \multicolumn{1}{c|}{-}          &-                     & \multicolumn{1}{c|}{-}          & 1615                      & \multicolumn{1}{c|}{-}          &-                     & \multicolumn{1}{c|}{-}          \\ \cline{1-1}
\multicolumn{1}{|c|}{FedProx}           & 957                       & \multicolumn{1}{c|}{-}          &-                     & \multicolumn{1}{c|}{-}          & 1588                      & \multicolumn{1}{c|}{-}          &-                     & \multicolumn{1}{c|}{-}          \\ \cline{1-1}
\multicolumn{1}{|c|}{FedAdam}           & 614                       & \multicolumn{1}{c|}{1277}          & 847                       & \multicolumn{1}{c|}{-}          & 1151                      & \multicolumn{1}{c|}{2495}          & 1584                      & \multicolumn{1}{c|}{-}          \\ \cline{1-1}
\multicolumn{1}{|c|}{SCAFFOLD}          & 720                       & \multicolumn{1}{c|}{-}          & 784                       & \multicolumn{1}{c|}{-}          & 949                       & \multicolumn{1}{c|}{-}          & 1187                      & \multicolumn{1}{c|}{-}          \\ \cline{1-1}
\multicolumn{1}{|c|}{FedCM}             & \textbf{505}              & \multicolumn{1}{c|}{\textbf{1150}} & \textbf{526}              & \multicolumn{1}{c|}{\textbf{1336}} & \textbf{661}              & \multicolumn{1}{c|}{\textbf{1360}} & \textbf{817}              & \multicolumn{1}{c|}{\textbf{1843}} \\ \cline{1-1}
\multicolumn{1}{|c|}{FedDyn}            & 661                       & \multicolumn{1}{c|}{-}          & 703                       & \multicolumn{1}{c|}{-}          & 1419                      & \multicolumn{1}{c|}{-}          & 2559                      & \multicolumn{1}{c|}{-}          \\ \cline{1-1}
\multicolumn{1}{|c|}{FedADMM}           & 687                       & \multicolumn{1}{c|}{-}          & 715                       & \multicolumn{1}{c|}{-}          & 921                       & \multicolumn{1}{c|}{-}          & 2711                      & \multicolumn{1}{c|}{-}          \\ \cline{1-1}
\multicolumn{1}{|c|}{\textbf{FedSpeed}} & \textbf{522}              & \multicolumn{1}{c|}{\textbf{973}}  & \textbf{541}              & \multicolumn{1}{c|}{\textbf{1038}} & \textbf{684}              & \multicolumn{1}{c|}{\textbf{1373}} & \textbf{962}              & \multicolumn{1}{c|}{\textbf{1885}} \\ \hline
\end{tabular}
\end{table}

\subsection{Hyer-parameters}
\paragraph{Hyper-parameters Selections.}
We fix the local learning rate as 0.1 and global learning rate as 1.0 for average, except for the FedAdam which is applied 0.1. The penalized weight of prox-term in FedProx, FedDyn, FedADMM and FedSpeed is selected from the [0.001, 0.01, 0.1, 0.5]. The learning rate decay is fixed as 0.998 expect for the FedDyn, FedADMM and FedSpeed is selected from [0.998, 0.999, 0.9995, 0.99995]. The perturbation weight is selected from [0, 0.5, 0.75, 0.875, 0.9375, 1]. The batchsize is selected from [20, 50]. The local interval $K$ is selected from [1, 2, 5, 10, 20]. For the specific parameters in FedAdam, the momentum weight is set as 0.1 and the second order momentum weight is set as 0.01. The minimal value is set as 0.001 to prevent the calculation of dividing by 0. The client-level momentum weight of FedCM is set as 0.1.

Here we briefly introduce the selection of the hyperparameters in FedSpeed.

(1) $\eta_{l}$ is the learning rate which is a basic hyperparameters in the deep learning, and usually we do not finetune this for the fair comparison in the experiments. We just select the same and common settings as the previous works mentioned.

(2) $\lambda$ is the coefficient for the prox-term, which is proposed in the FedProx and a lot of prox-based federated methods adopt this hyperparameter widely both in personalized-FL and centralized-FL. The selection of this hyperparameter has been studied in many previous works which verify its efficiency. Usually the selection of $\lambda$ are in $\{10, 100\}$ on the CIFAR-10/100 dataset, and we test it also works on the TinyImagenet.

(3) $\rho$ is the ascent step learning rate. Like many extra gradient method, the selection of $\rho$ is usually related to the local learning rate $\eta_{l}$. In order not to unduly affect the performance of the gradient descent, the learning rate for the extra gradient step $\rho$ is usually set not much larger than the learning rate for the gradient descent step $\eta_{l}$. Obviously, if $\rho$ is set very small, the updated state of the extra gradient steps will be very limited, which makes this operation have no effect. Therefore, the selection of $\rho$ usually matches that of $\eta_{l}$. In our experiments, the $\eta_{l}$ is set as $0.1$, which is a common selection in the previous works. We test the selection of $\rho$ in $\{0, 0.01, 0.05, 0.1, 0.2\}$ which represents for $\{$"no extra gradient", "$0.1\eta_{l}$", "$0.5\eta_{l}$, "$1\eta_{l}$", "$2\eta_{l}"\}$. The best performing selection is $\rho=1\eta_{l}$ in CIFAR-10 (details in Section 5.3 paragraph "Learning rate $\rho$ for the gradient perturbation"). We also test this selection on the CIFAR-100 and TinyImagenet, and it also works well. We recommend that the selection of $\rho$ should be kept comparable to the learning rate $\eta_{l}$.

(4) $\alpha$ is the ratio for merging the gradient of the extra ascent step. In FedSpeed, the $\alpha$ is in the range of $[0,1]$. The same, if $\alpha$ is set very small, which means it does not merge the gradient of the ascent steps. In our experiments, we test the selection of $\alpha$ in $\{0, 0.5, 0.75, 0.875, 0.9375, 1.0\}$. The best performing selection is $\alpha=0.9375$ in CIFAR-10. In fact $\alpha=1$ also works well (details in Section 5.3 paragraph "Perturbation weight $\alpha$"). Thus, about $\alpha$, we recommend that it should be close to $1.0$, e.g. for $0.9, 0.99, 1.0$. This also verifies the improvements of the ascent steps.

\subsubsection{Best Performing Hyper-parameters.}
For fair comparison, the learning rate is fixed for all the methods.

For CIFAR-10 dataset, we select the batchsize as 50 for 100 clients and 20 for 500 clients. The total dataset is 50,000 and there are 100 images under a single client if it is set as 500 clients. Thus we decay it to 20 for 5 iterations per local epoch. The local epochs is set as 5, the same as the experiments of \cite{SCAFFOLD,feddyn,fedcm} etc. and their performance is matching. We select the local interval $K$ as 5. The prox-term weight is selected as 0.1. The learning rate decay is selected as 0.9995 for prox-term based methods. We train the total dataset for 1,500 communication rounds.

For CIFAR-100 dataset, we select the 500 clients with 2$\%$ participation ratio in the experiments. Thus for each hyper-parameters we fine-tune a little. The batchsize is selected as 20 to avoid too little iterations per local epoch. The local epochs is set as 2 for the final results comparison. The ablation study on local interval $K$ indicates that our proposed FedSpeed outperforms significantly than other methods when $K$ is large. Thus to compare the performance more clearly, we select the 2 as the local epochs. We decay the prox-term weight as 0.01 for prox-term based methods. The learning rate decay is selected as 0.99995 for prox-based methods. We train 1,500 rounds and then test the performance.

For TinyImagenet dataset, the most selections are the same as for the CIFAR-100 dataset. The prox-term weight is selected as 0.1 and the learning rate decay is selected as 0.9995. Total 3,000 communication rounds are implemented in the training stage.

\subsubsection{Speed comparison.}

Table~\ref{r} shows the communication rounds required to achieve the target test accuracy. At the beginning of training, FedCM performs faster than others and usually achieve a high accuracy finally. FedSpeed is faster in the middle and late stages of training. We bold the data for the top-2 in each test and generally FedCM and FedSpeed significantly performs well on the training speed.

\begin{figure}[h]
\centering
\subfigure[Prox-term weight=0.5.]{
    \begin{minipage}[t]{0.5\textwidth}
    \centering
    \includegraphics[width=0.8\textwidth]{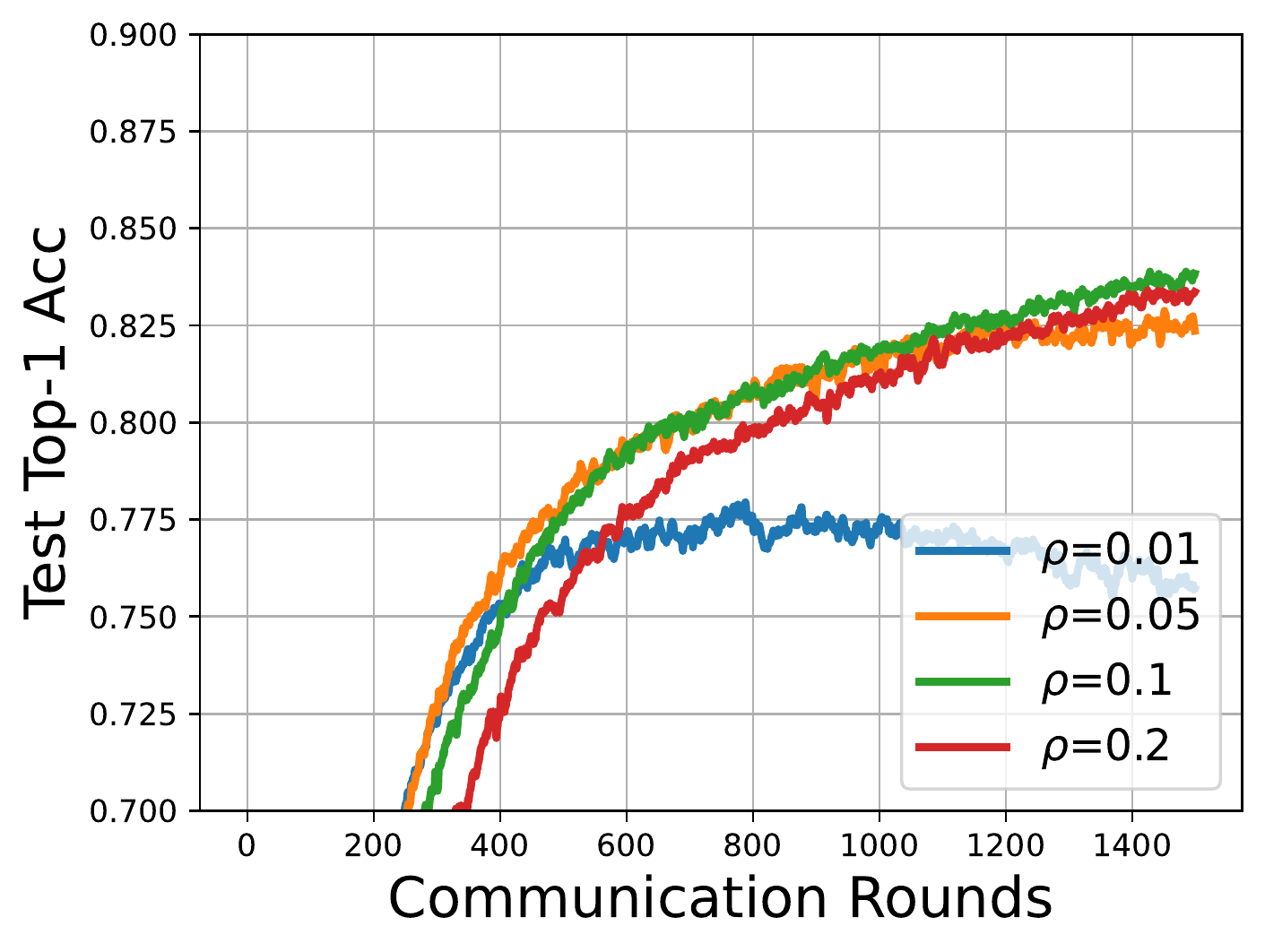}
    \end{minipage}%
}%
\subfigure[Prox-term weight=0.1.]{
    \begin{minipage}[t]{0.5\textwidth}
    \centering
    \includegraphics[width=0.8\textwidth]{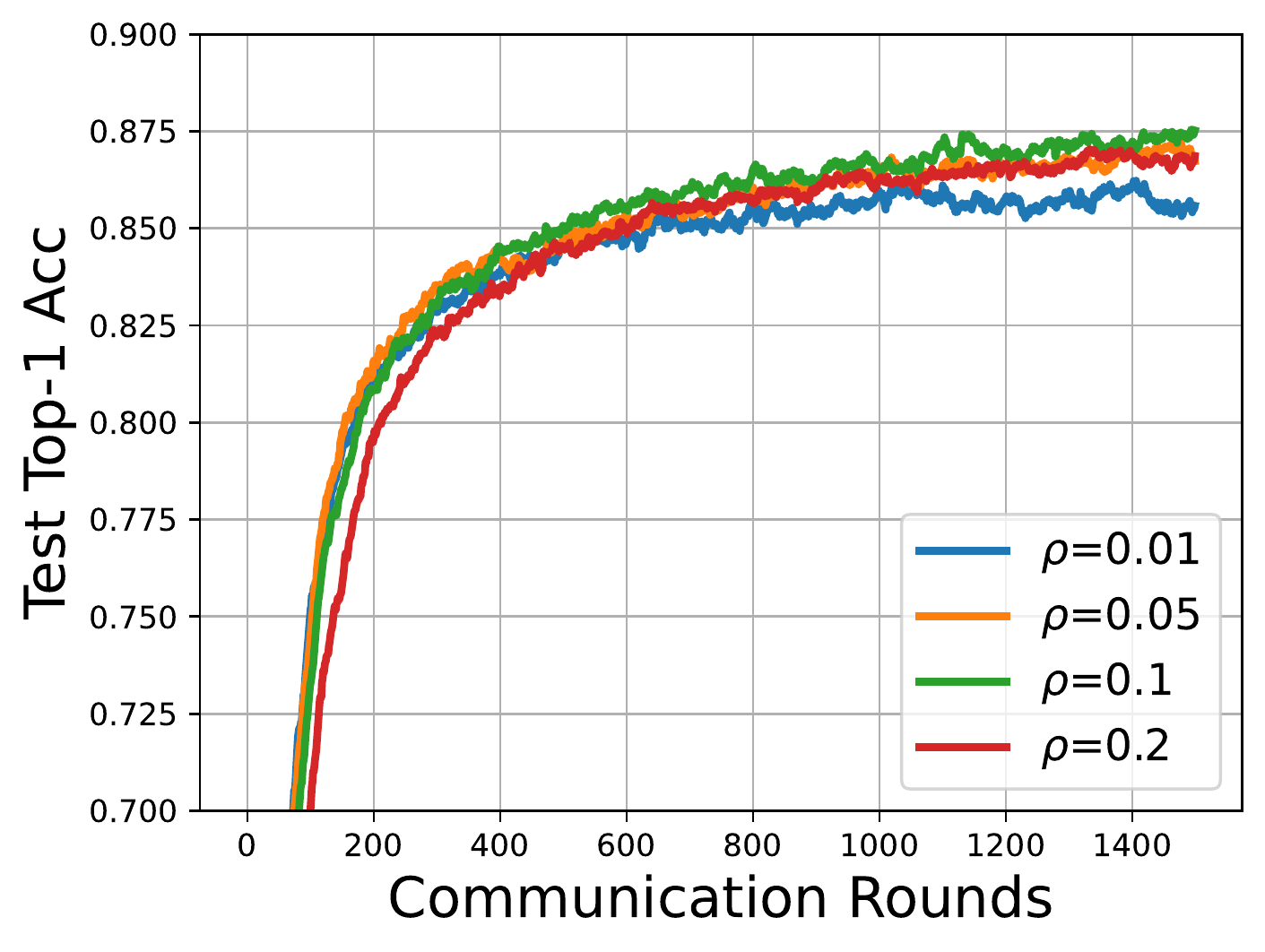}
    \end{minipage}%
}%
\\
\subfigure[Prox-term weight=0.01.]{
    \begin{minipage}[t]{0.5\textwidth}
    \centering
    \includegraphics[width=0.8\textwidth]{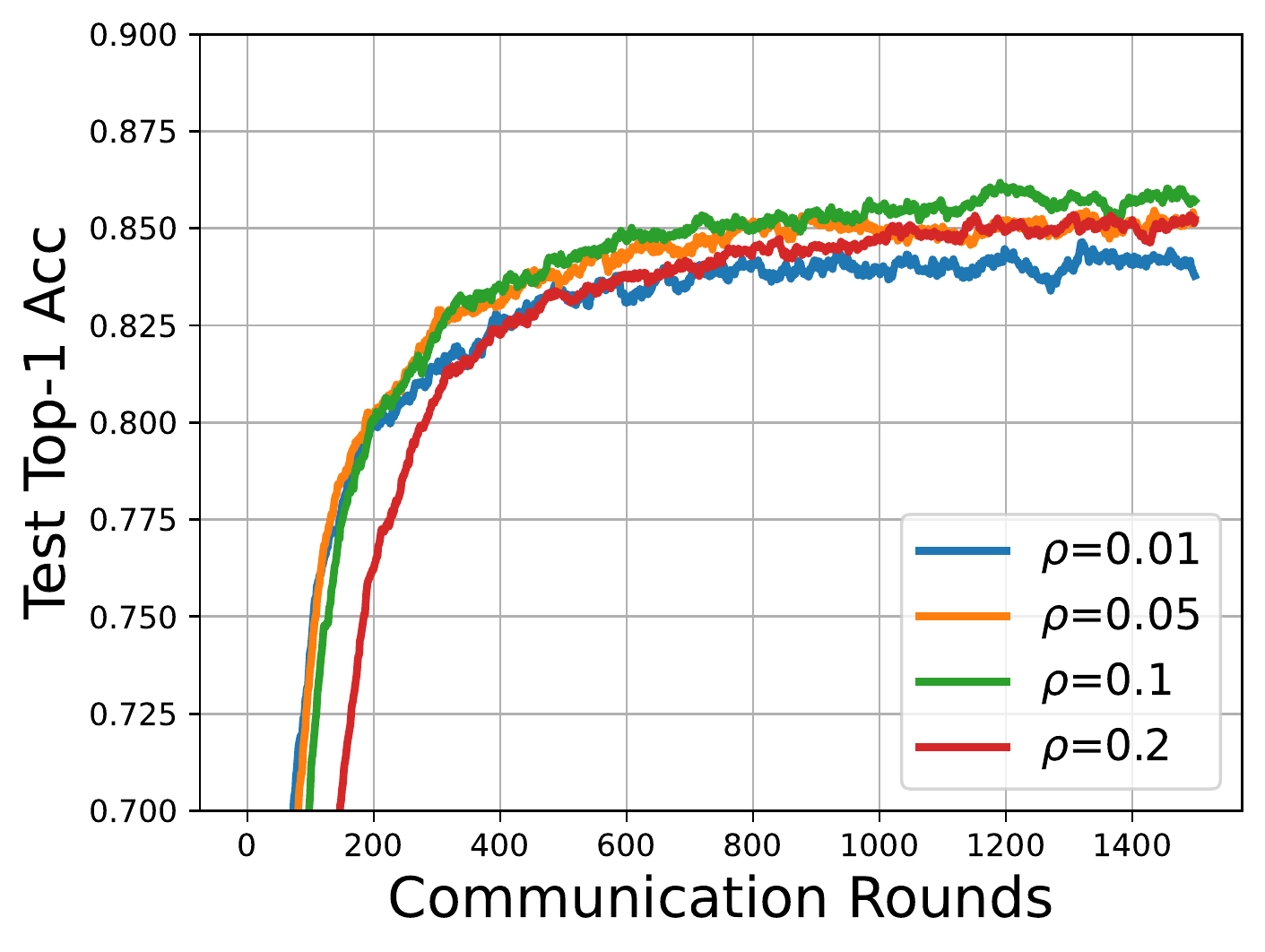}
    \end{minipage}
}%
\subfigure[Prox-term weight=0.001.]{
    \begin{minipage}[t]{0.5\textwidth}
    \centering
    \includegraphics[width=0.8\textwidth]{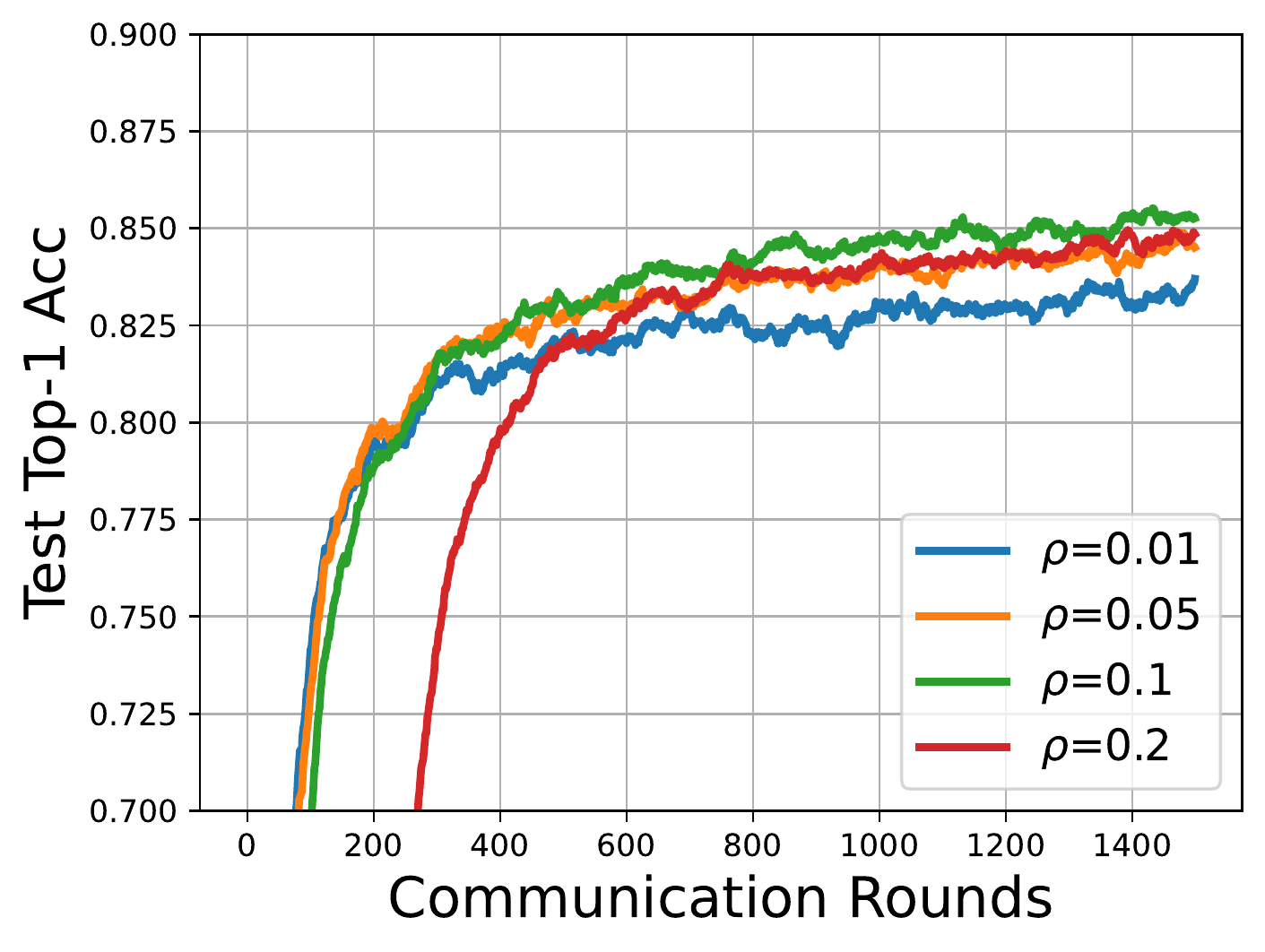}
    \end{minipage}
}%
\centering
\caption{Performance of different ascent step size $\rho$ under different prox-term weights of [$0.001, 0.01, 0.1, 0.5$].}
\label{different_alpha}
\end{figure}
Figure~\ref{different_alpha} shows the performance of different learning rate decay and prox-term weight for FedSpeed.

\subsubsection{Time cost}
\begin{table}[h]\label{speed}
\small
    \caption{Training wall-clock time comparison.}
    \vspace{0.1cm}
    \label{fig:different_selection}
    \centering
    \begin{tabular}{ccccc}
   \toprule
     $\alpha_{1}$ & Times (s/Round) & Rounds & Total (s) & Cost Ratio \\
    \midrule
     FedAvg & 10.44 & - & - &  - \\
     FedProx & 11.33 & - & - & - \\
     FedAdam & 14.74 & 1343 & 19795.8 & 4.31$\times$ \\
     SCAFFOLD & 14.34 & 654 & 9378.3 & 2.03$\times$ \\
     FedCM & 13.22 & 622 & 8222.8 & 1.78$\times$ \\
     FedDyn & 14.11 & 400 & 5644.0 & 1.22$\times$ \\
     FedSpeed & 16.42 & 281 & 4614.0 & 1$\times$ \\
    \bottomrule
\end{tabular}
\end{table}

We test the time on the A100-SXM4-40GB GPU and show the performance in the Table~\ref{speed}. Experimental setups are the same as the CIFAR-10 10$\%$ participation among total 100 clients on the DIR-0.6 dataset. The rounds in the table are the communication rounds required that the test accuracy achieves accuracy $85\%$. "-" means it can not achieve the target accuracy.

FedSpeed is slower due to the requirement of computing an extra gradient. So it gets slower in one single update, approximately 1.57$\times$ wall-clock time costs than FedAvg. But its convergence process is very fast. For the final convergence speed, FedSpeed still has a considerable advantage over other algorithms. The issue is possibly one of the improvements for FedSpeed in the future. For example, introduces a single-call gradient method to save half the costs during backpropagation. We are also currently trying to introduce new module to save the cost.

\subsubsection{Different heterogeneity.}

\begin{table}[h]\label{drops}
\small
    \caption{Comparison on different heterogeneous dataset.}
    \vspace{0.1cm}
    \centering
    \begin{tabular}{cccccc}
   \toprule
     $\alpha_{1}$ & IID & Dir-0.6 & Dir-0.3 & Drops (i.i.d. $>$ Dir-0.6) & Drops (Dir-0.6 $>$ Dir-0.3) \\
    \midrule
     FedAvg & 77.01 & 75.21 & 71.96 & 1.80 & 3.25 \\
     FedAdam & 82.92 & 80.55 & 76.87 & 2.37 & 3.68 \\
     SCAFFOLD & 80.11 & 77.71 & 74.34 & 2.40 & 3.37 \\
     FedCM & 84.20 & 83.48 & 81.02 & 0.72 & 2.46 \\
     FedDyn & 83.36 & 80.57 & 77.33 & 2.79 & 3.24 \\
     FedSpeed & 85.80 & 84.79 & 82.68 & 1.01 & 2.11 \\
    \bottomrule
\end{tabular}
\end{table}

We test on the Dir-0.3 setups on CIFAR-10 and show the results as Table~\ref{drops}, the other settings are the same as the test in the text. The (i.i.d. $>$ Dir-0.6) is the difference between the IID dataset and the Dir-0.6 dataset and (Dir-0.6 $>$ Dir-0.3) is the difference between the Dir-0.6 dataset and the DIR-0.3 dataset. FedSpeed can outperform the others on the Dir-0.3 setups whose heterogeneity is much stronger than Dir-0.6 setups. the heterogeneity becomes stronger, FedSpeed can still maintain a stable generalization performance. The correction term helps to correct the biases during the local training, while the gradient perturbation term helps to resist the local over-fitting on the heterogeneous dataset. FedSpeed can benefit from avoiding falling into the biased optima.

\subsubsection{Ablation studies}

\begin{table}[h]\label{abla}
\small
    \caption{Ablation studies on different modules.}
    \vspace{0.1cm}
    \centering
    \begin{tabular}{cccc}
   \toprule
     Prox-term & Prox-correction term & Gradient perturbation & Accuracy ($\%$)\\
    \midrule
     - & - & - & 81.92  \\
     $\sqrt{}$ & - & - & 82.24  \\
     $\sqrt{}$ & $\sqrt{}$ & - & 83.94  \\
     $\sqrt{}$ & - & $\sqrt{}$ & 83.88   \\
     $\sqrt{}$ & $\sqrt{}$ & $\sqrt{}$ & 85.70  \\
    \bottomrule
\end{tabular}
\end{table}

From the practical training point of view, compared with the vanilla FedAvg, FedSpeed adds three main modules: (1) prox-term, (2) prox-correction term, and (3) gradient perturbation. We test the performance of 500 communication rounds of the different combination of the modules above on the CIFAR-10 with the settings of 10$\%$ participating ratio of total 100 clients. The Table\ref{abla} shows their performance. 

From the table above, we can clearly see the performance of different modules. The prox-term is proposed by the FedProx. But due to some issues we point out in our paper, this term has also a negative impact on the performance in FL. When the prox-correction term is introduced in, it improves the performance from 82.24$\%$ to 83.94$\%$. When the gradient perturbation is introduced in, it improves the performance from 82.24$\%$ to 83.88$\%$. While FedSpeed applies them together and achieves a 3.46$\%$ improvement.

Different performance of these modules:

As introduced in our paper, the prox-term simply performs as a balance between the local and global solutions, and there still exists the non-vanishing inconsistent biases among the local solutions, i.e., the local solutions are still largely deviated from each other, implying that local inconsistency is still not eliminated. Thus we utilize the prox-correction term to correct the inconsistent biases during the local training. About the function of gradient perturbation, we refer to a theoretical explanation in the main text, and its proof is provided in the supplementary material due to the space limitations. This perturbation is similar to utilize a penalized gradient term to the objective function during local optimization process. The additional penalty will bring better properties to the local state, e.g. for flattened minimal and smoothness. For federated learning, the smoother the local minima is, the more flatness the model merged on the server will be. FedSpeed benefits from these two modules to improve the performance and achieves the SOTA results.

\subsubsection{Loss curve comparison}
\begin{figure}[t]
  \centering
  \subfigure[CIFAR-10 IID.]{
    \includegraphics[width=0.25\textwidth]{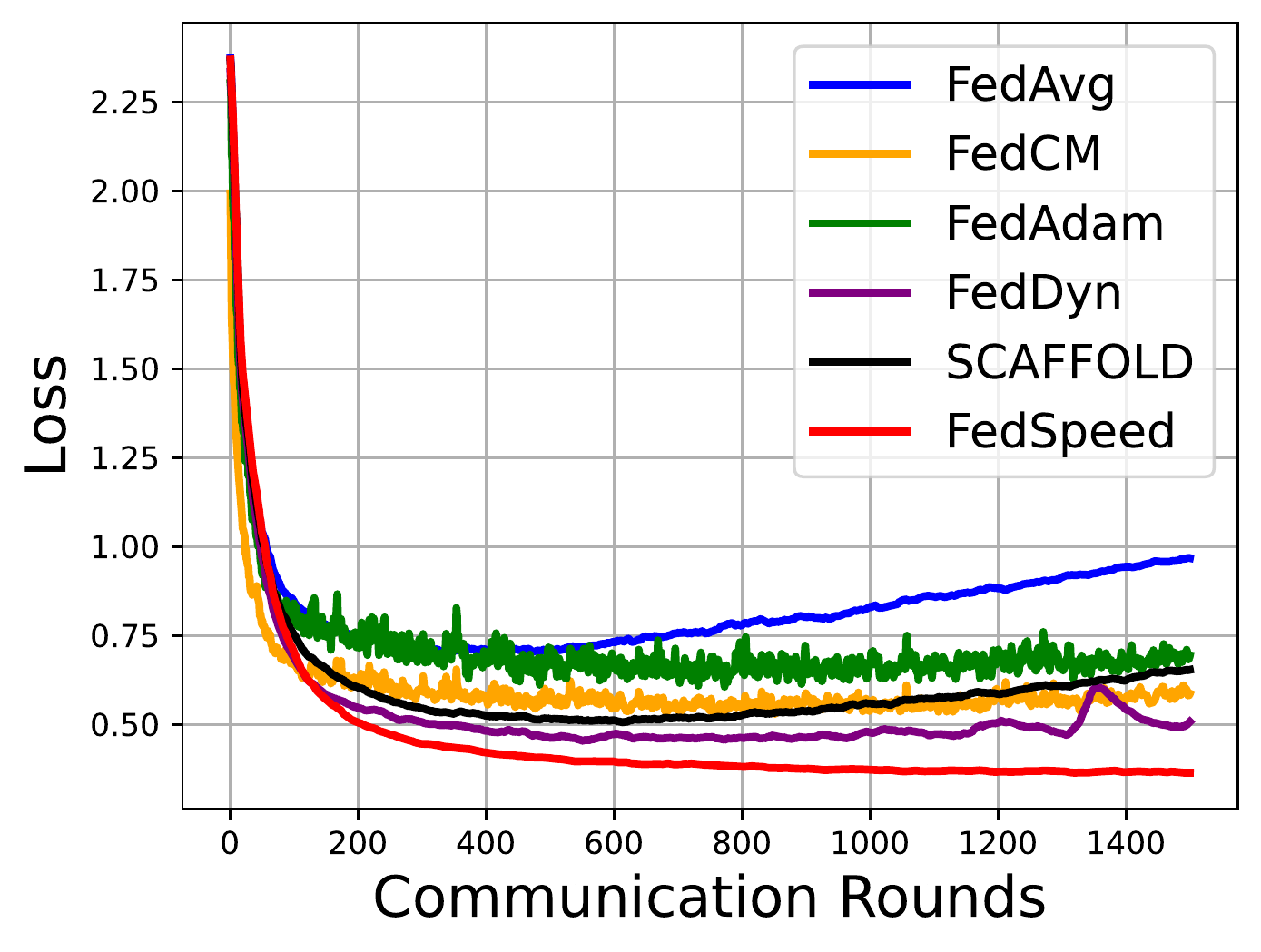}}\!\!\!
  \subfigure[CIFAR-10 non-IID.]{
    \includegraphics[width=0.25\textwidth]{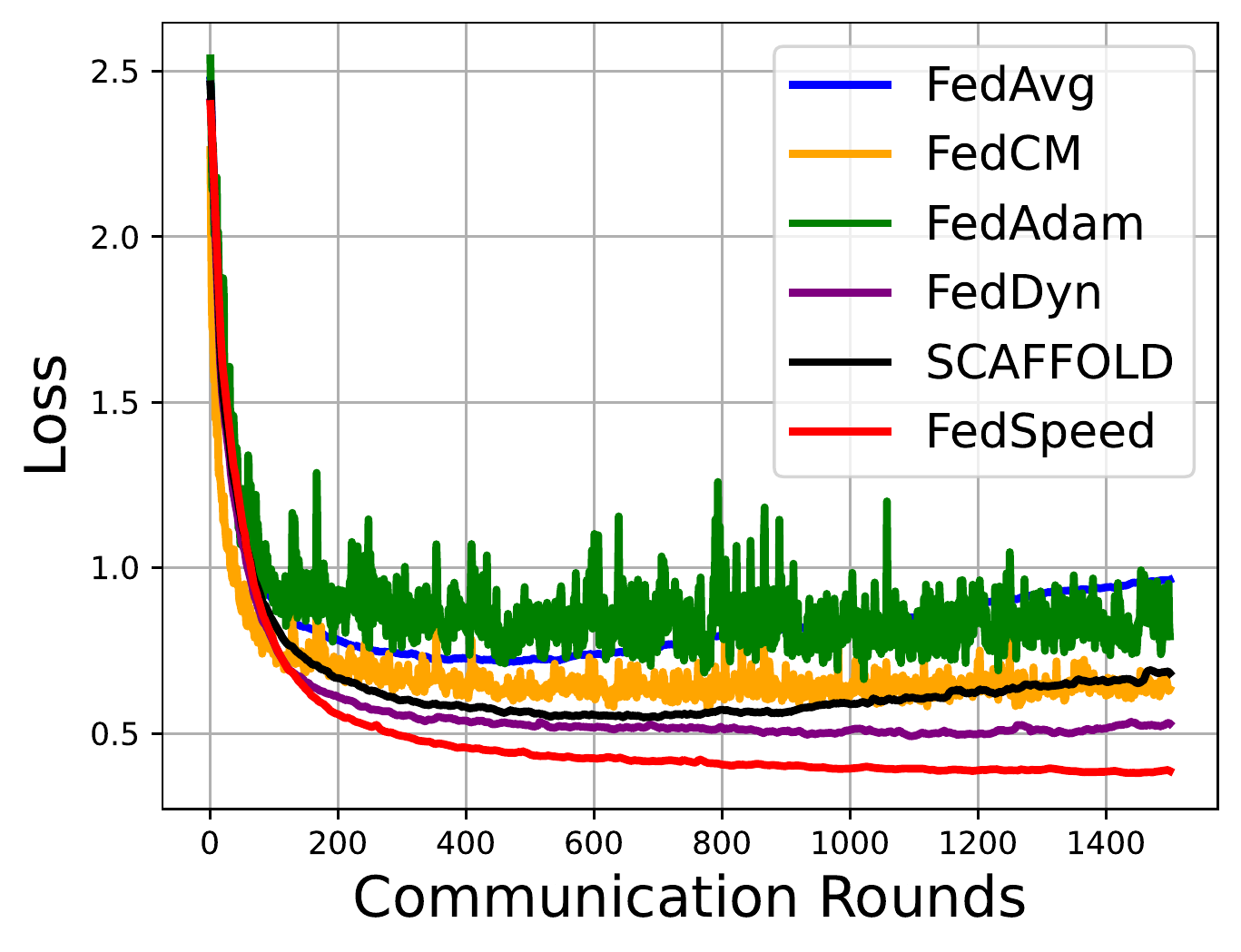}}\!\!\!
  \subfigure[FedCM.]{
    \includegraphics[width=0.25\textwidth]{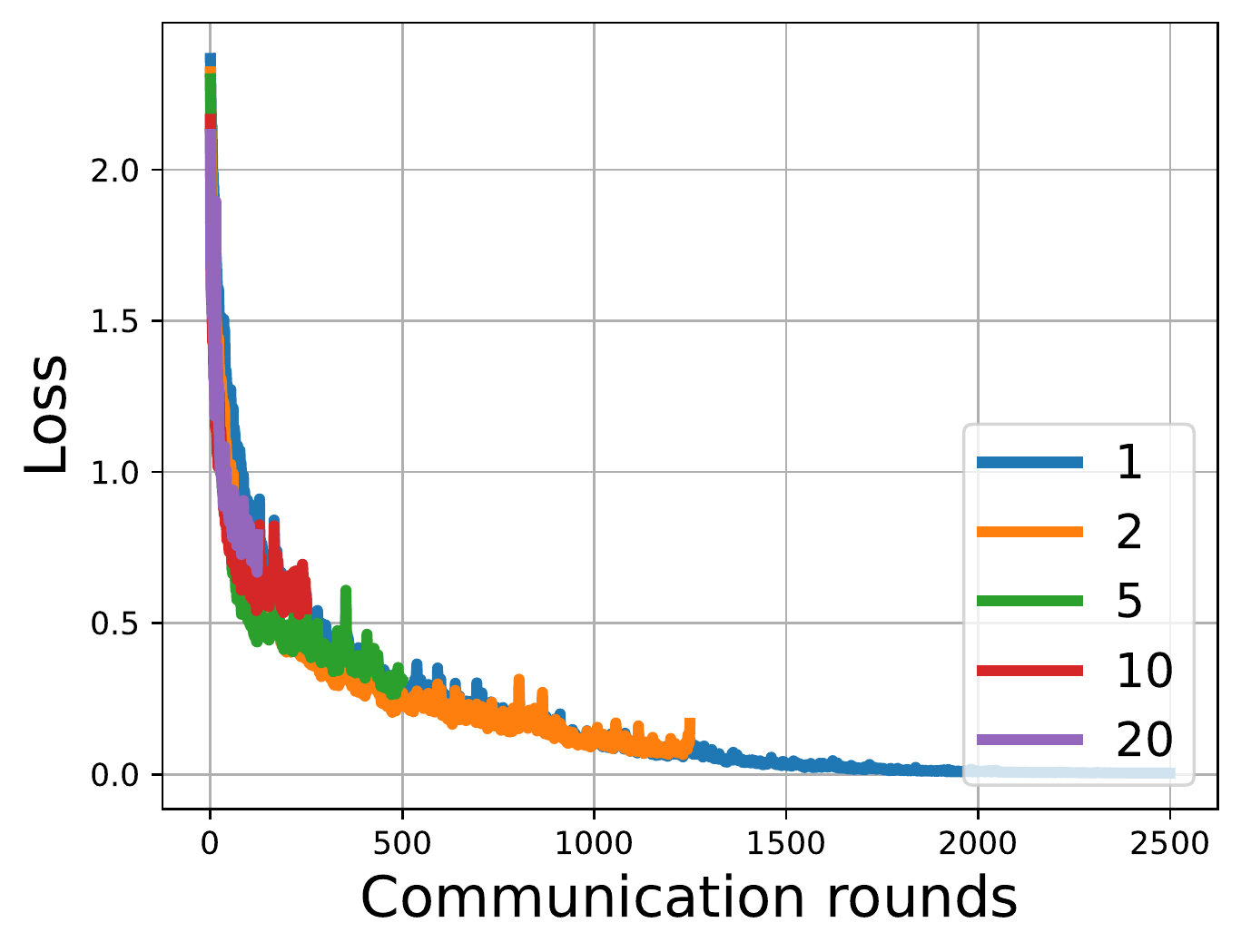}}\!\!\!
  \subfigure[FedSpeed.]{
    \includegraphics[width=0.25\textwidth]{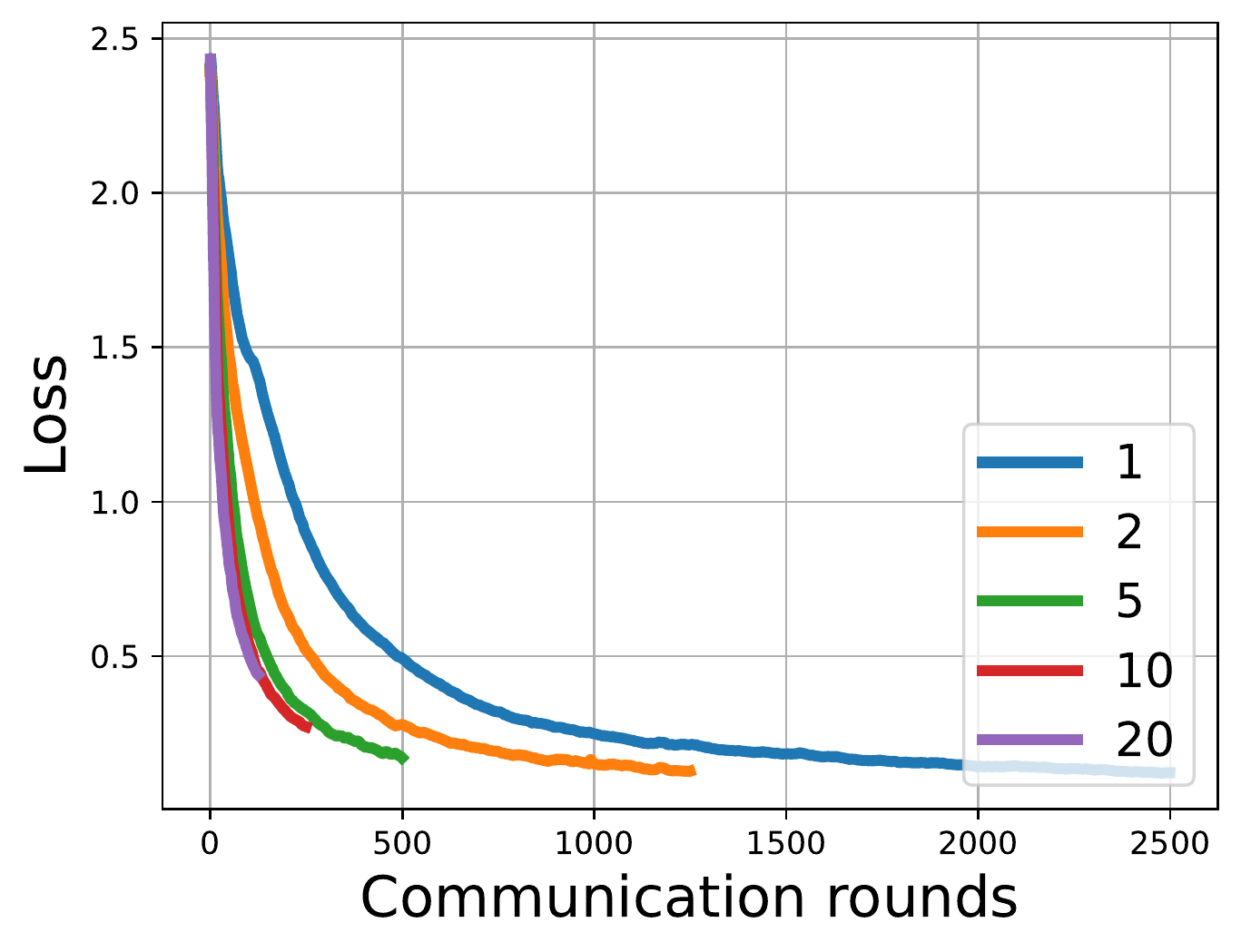}}\!\!\!
  \vskip -0.05in
  \caption{(a) and (b) show the loss curves on the CIFAR-10 IID/DIR-0.6 dataset. The FedSpeed achieves the best and stable performance in the training. (c) and (d) show the loss curve of FedCM and FedSpeed on the CIFAR-10 DIR-0.6 dataset with increasing the local epochs from $E=1$ to $20$.}
  \label{local_K_loss}
  \vskip -0.05in
\end{figure}
According to the Figure~\ref{local_K_loss}, in (a) and (b) we can see the common FedAvg method fail to resist the local over-fitting and finally does not approach a stable state in the training, while FedSpeed can converge stably and efficiently, which if far faster than other baselines. (c) and (d) show the empirical studies of increasing the local interval. FedCM is the second SOTA-performing method in our baselines, while it still can not speed up by increasing local interval in the practical training. As shown in (c), increasing local interval almost does not have any benefits to FedCM, and the communication rounds can not be reduced. While in (d), FedSpeed succeed to apply a larger local interval to reduce the communication rounds. When $K$ is increasing, to achieve the similar performance, the total communication rounds is nearly decreasing as $K\times$, which is a useful property that is very efficient in practical training.

In our paper, what we claim is that if a federated learning method could adopt a large $K$, it is a good property. Unfortunately, most SGD-type algorithms cannot increase the convergence rate by increasing $K$. Some useful techniques are adopt in the FL framework to improve the performance, e.g. variance reduction, gradient tracking and regularization term (mainly the prox-based methods). FedSpeed is a prox-based method which incorporates the correction term and extra ascent gradient to improve the performance. In fact, it has been proven that prox-based methods have the potential to apply the larger local interval in the local training under the requirement of local minimal solution per communication round. We theoretically prove that FedSpeed can achieve the fast rate without this harsh assumption and it can apply the large $K$ in the local client. \cite{linear_speedup} have proven that if FedAvg change the partial participation to the full participation (Local-SGD-type), the dominant term of convergence rate will change from $\mathcal{O}(\frac{\sqrt{K}}{\sqrt{nT}})$ to $\mathcal{O}(\frac{1}{\sqrt{mKT}})$, which will be relaxed to $K$ times faster. Full participation usually achieves higher theoretical rate than the partial participation. VRL-SGD~\cite{vrlsgd} can theoretically improve the efficiency by adopting a larger order of local interval $K=\mathcal{O}(\sqrt{T})$ than FedAvg, while FedSpeed can adopt $K=\mathcal{O}(T)$.

\subsubsection{Ablation Study of Perturbation weight $\alpha$}
\begin{wraptable}{r}{8.1cm}
\small
    \vspace{-0.6cm}
    \caption{Performance of different $\alpha$ with $\rho_{0}=0.1$.}
    \vspace{0.1cm}
    \label{alpha}
    \centering
    \begin{tabular}{c|cccccc}
        \toprule
        $\alpha$  & 0 & 0.5 & 0.75 & 0.875 & \textbf{0.9375} & 1.0 \\
        \midrule
        Acc.  & 83.97  & 84.36  & 84.91 & 85.46 & \textbf{85.74} & 85.72 \\
        \bottomrule
    \end{tabular}
    \vspace{-0.25cm}
\end{wraptable}
\textbf{Perturbation weight $\alpha$.}\ 
$\alpha$ determines the degree of influence of the perturbation gradient term to the vanilla stochastic gradient on the local training stage. It is a trade-off to balance the ratio of the perturbation term. We select the $\alpha$ from 0 to 1 and find FedSpeed can converge with any $\alpha \in [0,1]$. Though the theoretical analysis demonstrates that by applying a $\alpha>0$ in the term $\Phi$ will not increasing the extra orders. And the experimental results shown in Table \ref{alpha}, indicates that the generalization performance improves by increasing $\alpha$.

\section{Proofs for Analysis}\label{appendix_proof}
In this part we will demonstrate the proofs of all formula mentioned in this paper. Each formula is presented in the form of a lemma.
\subsection{Proof of Equation~(2)}
Equation~(2) shows the update in the total local training stage.
\begin{lemma}\label{Delta}
    For $\forall \ \mathbf{x}_{i,k}^{t} \in \mathbb{R}^{d}$ and $i \in \mathcal{S}^{t}$, we denote $\delta_{i,k}^{t}=\mathbf{x}_{i,k}^{t}-\mathbf{x}_{i,k-1}^{t}$ with setting $\delta_{i,0}^{t}=0$, and $\Delta_{i,K}^{t}=\sum_{k=0}^{K}\delta_{i,k}^{t} = \mathbf{x}_{i,K}^{t}-\mathbf{x}_{i,0}^{t}$, under the update rule in Algorithm Algorithm 1, we have:
    \begin{equation}
        \Delta_{i,K}^{t} =-\lambda\gamma\sum_{k=0}^{K-1}\frac{\gamma_{k}}{\gamma}\tilde{\mathbf{g}}_{i,k}^{t} + \gamma\lambda\hat{\mathbf{g}}_{i}^{t-1},
    \end{equation}
    where $\sum_{k=0}^{K-1}\gamma_{k}=\sum_{k=0}^{K-1}\frac{\eta_{l}}{\lambda}\bigl(1-\frac{\eta_{l}}{\lambda}\bigr)^{K-1-k}=\gamma=1-(1-\frac{\eta_{l}}{\lambda})^{K}$.
    \begin{proof}
        According to the update rule of Line.11 in Algorithm Algorithm 1, we have:
        \begin{align*}
            \delta_{k} 
            &= \Delta_{i,k}^{t} - \Delta_{i,k-1}^{t}= \mathbf{x}_{i,k}^{t}-\mathbf{x}_{i,k-1}^{t}\\
            &= -\eta_{l}\bigl(\tilde{\mathbf{g}}_{i,k-1}^{t} - \hat{\mathbf{g}}_{i}^{t-1} + \frac{1}{\lambda}(\mathbf{x}_{i,k-1}^{t}-\mathbf{x}_{i,0}^{t})\bigr) = -\eta_{l}(\tilde{\mathbf{g}}_{i,k-1}^{t} - \hat{\mathbf{g}}_{i}^{t-1} + \frac{1}{\lambda}\Delta_{i,k-1}^{t}).
        \end{align*}
        Then We can formulate the iterative relationship of $\Delta_{i,k}^{t}$ as:
        \begin{align*}
            \Delta_{i,k}^{t} = \Delta_{i,k-1}^{t} -\eta_{l}(\tilde{\mathbf{g}}_{i,k-1}^{t} - \hat{\mathbf{g}}_{i}^{t-1} + \frac{1}{\lambda}\Delta_{i,k-1}^{t})=(1-\frac{\eta_{l}}{\lambda})\Delta_{i,k-1}^{t} - \eta_{l}(\tilde{\mathbf{g}}_{i,k-1}^{t}-\hat{\mathbf{g}}_{i}^{t-1}).
        \end{align*}
        Taking the iteration on $k$ and we have:
        \begin{align*}
            \mathbf{x}_{i,K}^{t}-\mathbf{x}_{i,0}^{t} = \Delta_{i,K}^{t}
            &= (1-\frac{\eta_{l}}{\lambda})^{K}\Delta_{i,0}^{t} -  \eta_{l}\sum_{k=0}^{K-1}(1-\frac{\eta_{l}}{\lambda})^{K-1-k}(\tilde{\mathbf{g}}_{i,k}^{t} - \hat{\mathbf{g}}_{i}^{t-1})\\
            &\overset{(a)}{=}  -  \eta_{l}\sum_{k=0}^{K-1}(1-\frac{\eta_{l}}{\lambda})^{K-1-k}(\tilde{\mathbf{g}}_{i,k}^{t} - \hat{\mathbf{g}}_{i}^{t-1})\\
            &= -\lambda\sum_{k=0}^{K-1}\frac{\eta_{l}}{\lambda}(1-\frac{\eta_{l}}{\lambda})^{K-1-k}(\tilde{\mathbf{g}}_{i,k}^{t} - \hat{\mathbf{g}}_{i}^{t-1})\\
            &= -\lambda\sum_{k=0}^{K-1}\frac{\eta_{l}}{\lambda}(1-\frac{\eta_{l}}{\lambda})^{K-1-k}\tilde{\mathbf{g}}_{i,k}^{t} + \bigl(1-(1-\frac{\eta_{l}}{\lambda})^{K}\bigr)\lambda\hat{\mathbf{g}}_{i}^{t-1}\\
            &= -\lambda\gamma\sum_{k=0}^{K-1}\frac{\gamma_{k}}{\gamma}\tilde{\mathbf{g}}_{i,k}^{t} + \gamma\lambda\hat{\mathbf{g}}_{i}^{t-1}.
        \end{align*}
        (a) applies $\Delta_{i,0}^{t}=\delta_{i,0}^{t}=0$.\\
    \end{proof}
\end{lemma}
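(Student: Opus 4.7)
The plan is to observe that the entire $K$-step local trajectory collapses into a scalar-coefficient first-order linear recurrence in the running offset $\Delta_{i,k}^{t} := \mathbf{x}_{i,k}^{t} - \mathbf{x}_{i,0}^{t}$, and then to solve that recurrence by the standard geometric-series formula. Once the closed form is written out, matching it against the definitions $\gamma_{k} = \tfrac{\eta_{l}}{\lambda}(1-\tfrac{\eta_{l}}{\lambda})^{K-1-k}$ and $\gamma = \sum_{k} \gamma_{k} = 1-(1-\tfrac{\eta_{l}}{\lambda})^{K}$ is immediate.

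Concretely, I would first use line~11 of Algorithm~\ref{DGR} together with $\mathbf{x}^{t} = \mathbf{x}_{i,0}^{t}$ (from line~4) to rewrite the one-step increment as $\mathbf{x}_{i,k+1}^{t}-\mathbf{x}_{i,k}^{t} = -\eta_{l}\bigl(\tilde{\mathbf{g}}_{i,k}^{t} - \hat{\mathbf{g}}_{i}^{t-1} + \tfrac{1}{\lambda}\Delta_{i,k}^{t}\bigr)$. Regrouping yields the recurrence $\Delta_{i,k+1}^{t} = (1-\tfrac{\eta_{l}}{\lambda})\,\Delta_{i,k}^{t} - \eta_{l}\bigl(\tilde{\mathbf{g}}_{i,k}^{t} - \hat{\mathbf{g}}_{i}^{t-1}\bigr)$ with initial condition $\Delta_{i,0}^{t}=0$. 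Iterating $K$ times and using the boundary value gives
\[
\Delta_{i,K}^{t} = -\eta_{l}\sum_{k=0}^{K-1}\bigl(1-\tfrac{\eta_{l}}{\lambda}\bigr)^{K-1-k}\bigl(\tilde{\mathbf{g}}_{i,k}^{t}-\hat{\mathbf{g}}_{i}^{t-1}\bigr).
\]
The coefficient multiplying $\tilde{\mathbf{g}}_{i,k}^{t}$ is exactly $-\lambda\gamma_{k}$, while pulling $\hat{\mathbf{g}}_{i}^{t-1}$ (which is independent of $k$) out of the sum and evaluating the finite geometric series yields $\eta_{l}\sum_{k=0}^{K-1}(1-\tfrac{\eta_{l}}{\lambda})^{K-1-k} = \lambda\bigl(1-(1-\tfrac{\eta_{l}}{\lambda})^{K}\bigr) = \lambda\gamma$. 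Collecting these two pieces and factoring $\gamma$ in front of the gradient sum produces the stated identity.

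The argument is deterministic and purely algebraic, so there is no substantive analytical obstacle; the hard part is really just sign and index bookkeeping. In particular, replacing $\mathbf{x}^{t}$ by $\mathbf{x}_{i,0}^{t}$ inside the prox penalty must preserve the sign, and the exponent in the geometric weights must indeed be $K-1-k$ rather than some off-by-one variant, so that the resulting sum telescopes exactly to $\gamma$. As a sanity check, $K=1$ reduces the right-hand side to $-\lambda\gamma_{0}\tilde{\mathbf{g}}_{i,0}^{t} + \gamma\lambda\hat{\mathbf{g}}_{i}^{t-1}$ with $\gamma_{0}=\gamma=\eta_{l}/\lambda$, which agrees with the one-step increment $\Delta_{i,1}^{t} = -\eta_{l}(\tilde{\mathbf{g}}_{i,0}^{t} - \hat{\mathbf{g}}_{i}^{t-1})$ read directly from the update rule.
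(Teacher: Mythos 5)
Your proposal is correct and follows essentially the same route as the paper: rewrite line~11 as a first-order linear recurrence in $\Delta_{i,k}^{t}$ with coefficient $(1-\eta_{l}/\lambda)$, unroll it $K$ times from $\Delta_{i,0}^{t}=0$, and sum the geometric series to identify $\lambda\gamma_{k}$ and $\lambda\gamma$. The only addition beyond the paper's argument is your $K=1$ sanity check, which is a harmless bonus.
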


\subsection{Proof of Equation~(3)}
Equation~(3) shows the update of the prox-correction term, which utilizes the weighted sum of the previous local offsets as a bias controller for eliminating the non-vanishing bias resulting from the prox-term. 

\begin{lemma}\label{update_g_hat}
    Under the update rule in Algorithm Algorithm 1, we have:
    \begin{equation}
        \hat{\mathbf{g}}_{i}^{t} = (1-\gamma)\hat{\mathbf{g}}_{i}^{t-1} + \gamma\sum_{k=0}^{K-1}\frac{\gamma_{k}}{\gamma}\tilde{\mathbf{g}}_{i,k}^{t}.
    \end{equation}
    where $\sum_{k=0}^{K-1}\gamma_{k}=\sum_{k=0}^{K-1}\frac{\eta_{l}}{\lambda}\bigl(1-\frac{\eta_{l}}{\lambda}\bigr)^{K-1-k}=\gamma=1-(1-\frac{\eta_{l}}{\lambda})^{K}$.
    \begin{proof}
        According to the update rule of Line.13 in Algorithm Algorithm 1, we have:
        \begin{align*}
            \hat{\mathbf{g}}_{i}^{t}
            &= \hat{\mathbf{g}}_{i}^{t-1}-\frac{1}{\lambda} (\mathbf{x}_{i,K}^{t}-\mathbf{x}_{i,0}^{t})\\
            &\overset{(a)}{=} \hat{\mathbf{g}}_{i}^{t-1}+\frac{\eta_{l}}{\lambda}\sum_{k=0}^{K-1}\bigl(1-\frac{\eta_{l}}{\lambda}\bigr)^{K-1-k}(\tilde{\mathbf{g}}_{i,k}^{t}-\hat{\mathbf{g}}_{i}^{t-1})\\
            &= \hat{\mathbf{g}}_{i}^{t-1} + \frac{\eta_{l}}{\lambda}\sum_{k=0}^{K-1}\bigl(1-\frac{\eta_{l}}{\lambda}\bigr)^{K-1-k}\tilde{\mathbf{g}}_{i,k}^{t} - \frac{\eta_{l}}{\lambda}\Bigl(\sum_{k=0}^{K-1}\bigl(1-\frac{\eta_{l}}{\lambda}\bigr)^{K-1-k}\Bigr)\hat{\mathbf{g}}_{i}^{t-1}\\
            &= \hat{\mathbf{g}}_{i}^{t-1} + \frac{\eta_{l}}{\lambda}\sum_{k=0}^{K-1}\bigl(1-\frac{\eta_{l}}{\lambda}\bigr)^{K-1-k}\tilde{\mathbf{g}}_{i,k}^{t} - \frac{\eta_{l}}{\lambda}\frac{1-(1-\frac{\eta_{l}}{\lambda})^{K}}{\frac{\eta_{l}}{\lambda}}\hat{\mathbf{g}}_{i}^{t-1}\\
            &= (1-\frac{\eta_{l}}{\lambda})^{K}\hat{\mathbf{g}}_{i}^{t-1} + \frac{\eta_{l}}{\lambda}\sum_{k=0}^{K-1}\bigl(1-\frac{\eta_{l}}{\lambda}\bigr)^{K-1-k}\tilde{\mathbf{g}}_{i,k}^{t}\\
            &= (1-\gamma)\hat{\mathbf{g}}_{i}^{t-1} + \gamma\sum_{k=0}^{K-1}\frac{\gamma_{k}}{\gamma}\tilde{\mathbf{g}}_{i,k}^{t}.\\
        \end{align*}
        (a) applies the Lemma \ref{Delta}.
    \end{proof}
\end{lemma}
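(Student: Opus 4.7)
The plan is to obtain the claimed identity by a direct one-line substitution using the already-established Lemma~\ref{Delta}. The server-side update in Line~13 of Algorithm~\ref{DGR} reads
\begin{equation*}
\hat{\mathbf{g}}_{i}^{t} = \hat{\mathbf{g}}_{i}^{t-1} - \frac{1}{\lambda}\bigl(\mathbf{x}_{i,K}^{t}-\mathbf{x}^{t}\bigr),
\end{equation*}
and Lemma~\ref{Delta} expresses the local offset $\mathbf{x}_{i,K}^{t}-\mathbf{x}^{t}$ as $-\lambda\gamma\sum_{k=0}^{K-1}\frac{\gamma_k}{\gamma}\tilde{\mathbf{g}}_{i,k}^{t}+\gamma\lambda\hat{\mathbf{g}}_{i}^{t-1}$. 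Substituting the latter into the former and cancelling the common factor $\lambda$ produces $\hat{\mathbf{g}}_{i}^{t}=\hat{\mathbf{g}}_{i}^{t-1}+\gamma\sum_{k=0}^{K-1}\frac{\gamma_k}{\gamma}\tilde{\mathbf{g}}_{i,k}^{t}-\gamma\hat{\mathbf{g}}_{i}^{t-1}$, which immediately collects into the asserted form $(1-\gamma)\hat{\mathbf{g}}_{i}^{t-1}+\gamma\sum_{k=0}^{K-1}\frac{\gamma_k}{\gamma}\tilde{\mathbf{g}}_{i,k}^{t}$.

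If instead one prefers a self-contained derivation that does not invoke Lemma~\ref{Delta}, the alternative plan would be to unroll the inner recursion $\mathbf{x}_{i,k+1}^{t}-\mathbf{x}^{t}=(1-\eta_l/\lambda)(\mathbf{x}_{i,k}^{t}-\mathbf{x}^{t})-\eta_l(\tilde{\mathbf{g}}_{i,k}^{t}-\hat{\mathbf{g}}_{i}^{t-1})$ from $k=0$ to $k=K-1$, yielding a geometric-weighted combination of the $\tilde{\mathbf{g}}_{i,k}^{t}$ and of $\hat{\mathbf{g}}_{i}^{t-1}$. One then plugs this expansion into the definition of $\hat{\mathbf{g}}_{i}^{t}$ and simplifies, using the geometric sum identity $\sum_{k=0}^{K-1}(1-\eta_l/\lambda)^{K-1-k}=\lambda\gamma/\eta_l$ to collapse the $\hat{\mathbf{g}}_{i}^{t-1}$ coefficient to $(1-\eta_l/\lambda)^{K}=1-\gamma$.

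There is essentially no obstacle here: the lemma is a purely algebraic rearrangement, and all the heavy lifting (the unrolling of the linear recursion with constant multiplier $1-\eta_l/\lambda$) has already been carried out in Lemma~\ref{Delta}. The only small point worth writing out carefully is the consistent use of the abbreviations $\gamma=1-(1-\eta_l/\lambda)^{K}$ and $\gamma_k=(\eta_l/\lambda)(1-\eta_l/\lambda)^{K-1-k}$, so that the factor $\gamma$ appearing in front of the weighted sum matches exactly the one absorbed from $\gamma_k/\gamma$; verifying $\sum_{k=0}^{K-1}\gamma_k=\gamma$ is an immediate consequence of the finite geometric series formula and makes the coefficients in the final recursion a convex combination.
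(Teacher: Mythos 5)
Your proposal is correct and follows essentially the same route as the paper: both substitute the unrolled local recursion from Lemma~\ref{Delta} into the Line~13 update $\hat{\mathbf{g}}_{i}^{t}=\hat{\mathbf{g}}_{i}^{t-1}-\frac{1}{\lambda}(\mathbf{x}_{i,K}^{t}-\mathbf{x}^{t})$ (noting $\mathbf{x}_{i,0}^{t}=\mathbf{x}^{t}$). The only difference is cosmetic --- you plug in the final stated form of Lemma~\ref{Delta}, so the factor $\lambda$ cancels immediately, whereas the paper substitutes the intermediate expression $-\eta_{l}\sum_{k=0}^{K-1}(1-\eta_{l}/\lambda)^{K-1-k}(\tilde{\mathbf{g}}_{i,k}^{t}-\hat{\mathbf{g}}_{i}^{t-1})$ and re-collapses the geometric sum, arriving at the same identity.
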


\subsection{Proof of Equation~(4) and (5)}
\begin{lemma}\label{z}
    Considering the $\mathbf{u}^{t+1}=\frac{1}{m}\sum_{i\in[m]}\mathbf{x}_{i,K}^{t}$ is the mean averaged parameters among the last iteration of local clients at time $t$, the auxiliary sequence $\bigl\{ \mathbf{z}^{t}=\mathbf{u}^{t}+\frac{1-\gamma}{\gamma}(\mathbf{u}^{t}-\mathbf{u}^{t-1})\bigr\}_{t>0}$ satisfies the update rule as:
    \begin{equation}
        \mathbf{z}^{t+1} = \mathbf{z}^{t} - \lambda\frac{1}{m}\sum_{i\in[m]}\sum_{k=0}^{K-1}\frac{\gamma_{k}}{\gamma}\tilde{\mathbf{g}}_{i,k}^{t}.
    \end{equation}
    \begin{proof}
        Firstly, according to the lemma~\ref{Delta} and Line.14 and Line.16 in Algorithm 1, we have:
        \begin{align*}
            \mathbf{u}^{t+1} - \mathbf{u}^{t}
            &= \frac{1}{m}\sum_{i\in[m]}(\mathbf{x}_{i,K}^{t}-\mathbf{x}_{i,K}^{t-1})\\
            &= \frac{1}{m}\sum_{i\in[m]}(\mathbf{x}_{i,K}^{t}-\mathbf{x}_{i,0}^{t}-\lambda\hat{\mathbf{g}}_{i}^{t-1})\\
            &= \frac{1}{m}\sum_{i\in[m]}(-\lambda\gamma\sum_{k=0}^{K-1}\frac{\gamma_{k}}{\gamma}\tilde{\mathbf{g}}_{i,k}^{t}+\lambda\gamma\hat{\mathbf{g}}_{i}^{t}-\lambda\hat{\mathbf{g}}_{i}^{t-1})\\
            &= -\lambda\frac{1}{m}\sum_{i\in[m]}\sum_{k=0}^{K-1}\frac{\gamma_{k}}{\gamma}\left(\gamma\tilde{\mathbf{g}}_{i,k}^{t} + (1-\gamma)\hat{\mathbf{g}}_{i}^{t-1}\right).\\
        \end{align*}
        This could be considered as a momentum-like term with the coefficient of $\gamma$. Here we define a virtual observation sequence $\{\mathbf{u}^{t}\}$ and its update rule is:
        \begin{align*}
            \mathbf{u}_{i,k+1}^{t}&=\mathbf{u}_{i,k}^{t}-\lambda\frac{\gamma_{k}}{\gamma}\left(\gamma\tilde{\mathbf{g}}_{i,k}^{t}+(1-\gamma)\hat{\mathbf{g}}_{i}^{t-1}\right),\\
            \mathbf{u}_{i,0}^{t+1}&=\mathbf{u}^{t+1}=\frac{1}{m}\sum_{i\in[m]}\mathbf{u}_{i,K}^{t}.\\
        \end{align*}
        According to the lemma~\ref{update_g_hat} and above update rule, we can get that:
        \begin{align*}
            \hat{\mathbf{g}}_{i}^{t} 
            &= (1-\gamma)\hat{\mathbf{g}}_{i}^{t-1} + \gamma\sum_{k=0}^{K-1}\frac{\gamma_{k}}{\gamma}\tilde{\mathbf{g}}_{i,k}^{t}\\
            &= -\frac{1}{\lambda}(\mathbf{u}_{i,K}^{t}-\mathbf{u}_{i,0}^{t}) - \gamma\sum_{k=0}^{K-1}\frac{\gamma_{k}}{\gamma}\tilde{\mathbf{g}}_{i,k}^{t} + \gamma\sum_{k=0}^{K-1}\frac{\gamma_{k}}{\gamma}\tilde{\mathbf{g}}_{i,k}^{t} = -\frac{1}{\lambda}(\mathbf{u}_{i,K}^{t}-\mathbf{u}_{i,0}^{t}).\\
        \end{align*}
        This function indicates that the virtual sequence $\mathbf{u}^{t}$ could be considered as a momentum-based update method with a global correction term to guide the local update, and the correction term is calculated from the offset of the virtual observation sequence during the training process at round $t$.
        
        Then we expand the the auxiliary sequence $\mathbf{z}^{t}$ as:
        \begin{align*}
            \mathbf{z}^{t+1} - \mathbf{z}^{t}
            &= (\mathbf{u}^{t+1}-\mathbf{u}^{t}) + \frac{1-\gamma}{\gamma}(\mathbf{u}^{t+1}-\mathbf{u}^{t}) - \frac{1-\gamma}{\gamma}(\mathbf{u}^{t}-\mathbf{u}^{t-1})\\
            &= \frac{1}{\gamma}(\mathbf{u}^{t+1}-\mathbf{u}^{t}) - \frac{1-\gamma}{\gamma}(\mathbf{u}^{t}-\mathbf{u}^{t-1})\\
            &= -\lambda\frac{1}{m}\sum_{i\in[m]}\left(\Bigl(\sum_{k=0}^{K-1}\frac{\gamma_{k}}{\gamma}\tilde{\mathbf{g}}_{i,k}^{t}\Bigr) + \frac{1-\gamma}{\gamma}\hat{\mathbf{g}}_{i}^{t-1}\right)-\frac{1-\gamma}{\gamma}(\mathbf{u}^{t}-\mathbf{u}^{t-1})\\
            &= -\lambda\frac{1}{m}\sum_{i\in[m]}\sum_{k=0}^{K-1}\frac{\gamma_{k}}{\gamma}\tilde{\mathbf{g}}_{i,k}^{t} - \frac{1-\gamma}{\gamma}\frac{1}{m}\sum_{i\in[m]}\lambda\hat{\mathbf{g}}_{i}^{t-1}-\frac{1-\gamma}{\gamma}(\mathbf{u}^{t}-\mathbf{u}^{t-1})\\
            &= -\lambda\frac{1}{m}\sum_{i\in[m]}\sum_{k=0}^{K-1}\frac{\gamma_{k}}{\gamma}\tilde{\mathbf{g}}_{i,k}^{t} -\frac{1-\gamma}{\gamma}\frac{1}{m}\sum_{i\in[m]}(\mathbf{u}^{t}-\mathbf{u}^{t-1}+\lambda\hat{\mathbf{g}}_{i}^{t-1})\\
            &= -\lambda\frac{1}{m}\sum_{i\in[m]}\sum_{k=0}^{K-1}\frac{\gamma_{k}}{\gamma}\tilde{\mathbf{g}}_{i,k}^{t} -\frac{1-\gamma}{\gamma}\frac{1}{m}\sum_{i\in[m]}(\mathbf{x}_{i,K}^{t-1}-\mathbf{x}_{i,K}^{t-2}+\lambda\hat{\mathbf{g}}_{i}^{t-1})\\
            &= -\lambda\frac{1}{m}\sum_{i\in[m]}\sum_{k=0}^{K-1}\frac{\gamma_{k}}{\gamma}\tilde{\mathbf{g}}_{i,k}^{t} -\frac{1-\gamma}{\gamma}\frac{1}{m}\sum_{i\in[m]}(\mathbf{x}_{i,K}^{t-1}-\mathbf{x}_{i,0}^{t-1}+\lambda\hat{\mathbf{g}}_{i}^{t-1}-\lambda\hat{\mathbf{g}}_{i}^{t-2})\\
            &= -\lambda\frac{1}{m}\sum_{i\in[m]}\sum_{k=0}^{K-1}\frac{\gamma_{k}}{\gamma}\tilde{\mathbf{g}}_{i,k}^{t}.\\
        \end{align*}
    \end{proof}
\end{lemma}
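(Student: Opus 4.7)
The plan is to first derive a closed-form expression for $\mathbf{u}^{t+1}-\mathbf{u}^t$ in terms of the quasi-gradients and the prox-correction vectors, then substitute into the definition of $\mathbf{z}^t$ and show that all terms involving $\hat{\mathbf{g}}_i^{t-1}$ and the previous increment $\mathbf{u}^t-\mathbf{u}^{t-1}$ telescope to zero, leaving only the stochastic quasi-gradient term.

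First I would use the aggregation rule from Algorithm~\ref{DGR} (Line 16), which under full participation gives $\mathbf{x}^t=\frac{1}{m}\sum_i\hat{\mathbf{x}}_i^{t-1}=\frac{1}{m}\sum_i(\mathbf{x}_{i,K}^{t-1}-\lambda\hat{\mathbf{g}}_i^{t-1})$. Since $\mathbf{u}^t=\frac{1}{m}\sum_i\mathbf{x}_{i,K}^{t-1}$ by definition, this yields the crucial identity $\mathbf{u}^t-\mathbf{x}^t=\frac{\lambda}{m}\sum_i\hat{\mathbf{g}}_i^{t-1}$. Combining this with Lemma~\ref{Delta}, which expresses $\mathbf{x}_{i,K}^t-\mathbf{x}^t$ as $-\lambda\gamma\sum_k\frac{\gamma_k}{\gamma}\tilde{\mathbf{g}}_{i,k}^t+\gamma\lambda\hat{\mathbf{g}}_i^{t-1}$, and averaging over $i$, I obtain
\begin{equation*}
\mathbf{u}^{t+1}-\mathbf{u}^t = -\frac{\lambda}{m}\sum_{i\in[m]}\sum_{k=0}^{K-1}\frac{\gamma_k}{\gamma}\bigl(\gamma\tilde{\mathbf{g}}_{i,k}^t+(1-\gamma)\hat{\mathbf{g}}_i^{t-1}\bigr),
\end{equation*}
which displays the momentum-like structure that motivates the Nesterov-style shift defining $\mathbf{z}^t$.

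Next I would expand $\mathbf{z}^{t+1}-\mathbf{z}^t=\frac{1}{\gamma}(\mathbf{u}^{t+1}-\mathbf{u}^t)-\frac{1-\gamma}{\gamma}(\mathbf{u}^t-\mathbf{u}^{t-1})$. Substituting the formula above produces the target term $-\lambda\frac{1}{m}\sum_i\sum_k\frac{\gamma_k}{\gamma}\tilde{\mathbf{g}}_{i,k}^t$ plus a residual of the form $-\frac{1-\gamma}{\gamma}\bigl[(\mathbf{u}^t-\mathbf{u}^{t-1})+\frac{\lambda}{m}\sum_i\hat{\mathbf{g}}_i^{t-1}\bigr]$. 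The bulk of the argument is to show that this residual vanishes. I would expand $\mathbf{u}^t-\mathbf{u}^{t-1}$ as the average of $\mathbf{x}_{i,K}^{t-1}-\mathbf{x}_{i,K}^{t-2}$, split it as $(\mathbf{x}_{i,K}^{t-1}-\mathbf{x}_{i,0}^{t-1})+(\mathbf{x}_{i,0}^{t-1}-\mathbf{x}_{i,K}^{t-2})$, and then apply the prox-correction recursion from Line 13, $\lambda(\hat{\mathbf{g}}_i^{t-1}-\hat{\mathbf{g}}_i^{t-2})=-(\mathbf{x}_{i,K}^{t-1}-\mathbf{x}_{i,0}^{t-1})$, together with the aggregation identity $\frac{1}{m}\sum_i(\mathbf{x}_{i,0}^{t-1}-\mathbf{x}_{i,K}^{t-2})=\mathbf{x}^{t-1}-\mathbf{u}^{t-1}=-\frac{\lambda}{m}\sum_i\hat{\mathbf{g}}_i^{t-2}$ (the shifted-index analogue of the key identity from the first step). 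These two pieces combine to exactly cancel the $\frac{\lambda}{m}\sum_i\hat{\mathbf{g}}_i^{t-1}$ term and the local-offset contribution, making the residual zero.

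The main subtlety is bookkeeping: per-client identities such as $\mathbf{x}_{i,0}^{t-1}-\mathbf{x}_{i,K}^{t-2}=-\lambda\hat{\mathbf{g}}_i^{t-2}$ are not true individually, because the server averages over clients before broadcasting $\mathbf{x}^{t-1}$ so that $\mathbf{x}_{i,0}^{t-1}=\mathbf{x}^{t-1}$ loses its dependence on $i$; these identities only hold after taking $\frac{1}{m}\sum_i$. The cancellation therefore must be carried out after the outer average is applied, keeping careful track of which quantities are client-local versus server-aggregated and shifting time indices from $t$ to $t-1$ correctly when reusing the aggregation identity. Once this is done, the residual collapses and the stated update $\mathbf{z}^{t+1}=\mathbf{z}^{t}-\lambda\frac{1}{m}\sum_i\sum_k\frac{\gamma_k}{\gamma}\tilde{\mathbf{g}}_{i,k}^t$ follows immediately.
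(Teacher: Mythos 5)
Your proposal is correct and follows essentially the same route as the paper's own proof: derive $\mathbf{u}^{t+1}-\mathbf{u}^{t}$ from Lemma~\ref{Delta} together with the aggregation rule, expand $\mathbf{z}^{t+1}-\mathbf{z}^{t}=\frac{1}{\gamma}(\mathbf{u}^{t+1}-\mathbf{u}^{t})-\frac{1-\gamma}{\gamma}(\mathbf{u}^{t}-\mathbf{u}^{t-1})$, and cancel the residual using the Line~13 recursion $\mathbf{x}_{i,K}^{t-1}-\mathbf{x}_{i,0}^{t-1}=-\lambda(\hat{\mathbf{g}}_{i}^{t-1}-\hat{\mathbf{g}}_{i}^{t-2})$ and the round-shifted aggregation identity. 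Your explicit caution that identities such as $\mathbf{x}_{i,0}^{t-1}-\mathbf{x}_{i,K}^{t-2}=-\lambda\hat{\mathbf{g}}_{i}^{t-2}$ hold only after averaging over clients is a point the paper's write-up glosses over, and it is handled correctly in your plan.
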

\subsection{Proof of Theorem 4.5}
Firstly we state some important lemmas applied in the proof.

\begin{lemma}\label{true_global_update}
    (Bounded global update) The global update $\frac{1}{m}\sum_{i\in[m]}\hat{\mathbf{g}}_{i}^{t}$ holds the upper bound of:
    \begin{align*}
        \mathbb{E}_{t}\Vert\frac{1}{m}\sum_{i\in[m]}\hat{\mathbf{g}}_{i}^{t-1}\Vert^{2}\leq\frac{1}{\gamma}\left(\mathbb{E}_{t}\Vert\frac{1}{m}\sum_{i\in[m]}\hat{\mathbf{g}}_{i}^{t-1}\Vert^{2} - \mathbb{E}_{t}\Vert\frac{1}{m}\sum_{i\in[m]}\hat{\mathbf{g}}_{i}^{t}\Vert^{2}\right) + \mathbb{E}_{t}\Vert\frac{1}{m}\sum_{i\in[m]}\sum_{k=0}^{K-1}\frac{\gamma_{k}}{\gamma}\tilde{\mathbf{g}}_{i,k}^{t}\Vert^{2}.\\
    \end{align*}
\end{lemma}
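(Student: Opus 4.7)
The plan is to deduce this bound directly from the single-round recursion for $\hat{\mathbf{g}}_{i}^{t}$ that was already established in the update lemma for $\hat{\mathbf{g}}$, namely $\hat{\mathbf{g}}_{i}^{t}=(1-\gamma)\hat{\mathbf{g}}_{i}^{t-1}+\gamma\sum_{k=0}^{K-1}\frac{\gamma_{k}}{\gamma}\tilde{\mathbf{g}}_{i,k}^{t}$. First I would average this identity over $i\in[m]$, which expresses the averaged control variate $\frac{1}{m}\sum_{i}\hat{\mathbf{g}}_{i}^{t}$ as a convex combination (weights $1-\gamma$ and $\gamma$) of $\frac{1}{m}\sum_{i}\hat{\mathbf{g}}_{i}^{t-1}$ and $\frac{1}{m}\sum_{i}\sum_{k}\frac{\gamma_{k}}{\gamma}\tilde{\mathbf{g}}_{i,k}^{t}$. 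The convex-combination structure is the only real fact needed, and it is guaranteed because the step-size choice in Theorem~\ref{convergence1} enforces $\eta_{l}/\lambda\in(0,1)$ and therefore $\gamma=1-(1-\eta_{l}/\lambda)^{K}\in(0,1)$.

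Next I would apply convexity of $\Vert\cdot\Vert^{2}$ (Jensen's inequality) to that convex combination, obtaining $\Vert\frac{1}{m}\sum_{i}\hat{\mathbf{g}}_{i}^{t}\Vert^{2}\leq(1-\gamma)\Vert\frac{1}{m}\sum_{i}\hat{\mathbf{g}}_{i}^{t-1}\Vert^{2}+\gamma\Vert\frac{1}{m}\sum_{i}\sum_{k}\frac{\gamma_{k}}{\gamma}\tilde{\mathbf{g}}_{i,k}^{t}\Vert^{2}$. Subtracting $(1-\gamma)\Vert\frac{1}{m}\sum_{i}\hat{\mathbf{g}}_{i}^{t-1}\Vert^{2}$ from both sides and then dividing by $\gamma>0$ rearranges this into $\Vert\frac{1}{m}\sum_{i}\hat{\mathbf{g}}_{i}^{t-1}\Vert^{2}\leq\frac{1}{\gamma}\bigl(\Vert\frac{1}{m}\sum_{i}\hat{\mathbf{g}}_{i}^{t-1}\Vert^{2}-\Vert\frac{1}{m}\sum_{i}\hat{\mathbf{g}}_{i}^{t}\Vert^{2}\bigr)+\Vert\frac{1}{m}\sum_{i}\sum_{k}\frac{\gamma_{k}}{\gamma}\tilde{\mathbf{g}}_{i,k}^{t}\Vert^{2}$; taking $\mathbb{E}_{t}[\cdot]$ of both sides then yields the stated bound by linearity of expectation.

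There is essentially no hard step in this particular lemma. The one tiny detail worth checking is that the final coefficient on the quasi-gradient term is exactly $1$ rather than $1/\gamma$: this is because the factor $\gamma$ that Jensen's inequality places in front of $\Vert\cdots\Vert^{2}$ on the right is cancelled precisely by the $1/\gamma$ introduced when one isolates $\Vert\frac{1}{m}\sum_{i}\hat{\mathbf{g}}_{i}^{t-1}\Vert^{2}$. The reason the lemma is stated in this somewhat unusual telescoping form, with the same quantity on both sides, is structural rather than technical: when it is later summed over $t$ in the proof of Theorem~\ref{convergence1}, the parenthesised difference telescopes to a boundary term, leaving only the single-round averaged quasi-gradient that can be controlled through the smoothness and bounded-heterogeneity assumptions.
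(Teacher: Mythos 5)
Your argument is correct and is essentially identical to the paper's own proof: the paper likewise averages the recursion $\hat{\mathbf{g}}_{i}^{t}=(1-\gamma)\hat{\mathbf{g}}_{i}^{t-1}+\gamma\sum_{k=0}^{K-1}\frac{\gamma_{k}}{\gamma}\tilde{\mathbf{g}}_{i,k}^{t}$ over $i$, applies Jensen's inequality to the convex combination to get $\Vert\frac{1}{m}\sum_{i}\hat{\mathbf{g}}_{i}^{t}\Vert^{2}\leq(1-\gamma)\Vert\frac{1}{m}\sum_{i}\hat{\mathbf{g}}_{i}^{t-1}\Vert^{2}+\gamma\Vert\frac{1}{m}\sum_{i}\sum_{k}\frac{\gamma_{k}}{\gamma}\tilde{\mathbf{g}}_{i,k}^{t}\Vert^{2}$, and rearranges. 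Your side remarks on the cancellation of the factor $\gamma$ and on the telescoping role of the lemma in the proof of the theorem are also accurate.
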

\begin{proof}
    According to the lemma~\ref{update_g_hat},we have:
    \begin{align*}
        \frac{1}{m}\sum_{i\in[m]}\hat{\mathbf{g}}_{i}^{t} = (1-\gamma)\frac{1}{m}\sum_{i\in[m]}\hat{\mathbf{g}}_{i}^{t-1} + \gamma\frac{1}{m}\sum_{i\in[m]}\sum_{k=0}^{K-1}\frac{\gamma_{k}}{\gamma}\tilde{\mathbf{g}}_{i,k}^{t}.\\
    \end{align*}
    Take the L2-norm and we have:
    \begin{align*}
        \Vert\frac{1}{m}\sum_{i\in[m]}\hat{\mathbf{g}}_{i}^{t}\Vert^{2}
            &= \Vert(1-\gamma)\frac{1}{m}\sum_{i\in[m]}\hat{\mathbf{g}}_{i}^{t-1} + \gamma\frac{1}{m}\sum_{i\in[m]}\sum_{k=0}^{K-1}\frac{\gamma_{k}}{\gamma}\tilde{\mathbf{g}}_{i,k}^{t}\Vert^{2}\\
            &\leq (1-\gamma)\Vert\frac{1}{m}\sum_{i\in[m]}\hat{\mathbf{g}}_{i}^{t-1}\Vert^{2} + \gamma\Vert\frac{1}{m}\sum_{i\in[m]}\sum_{k=0}^{K-1}\frac{\gamma_{k}}{\gamma}\tilde{\mathbf{g}}_{i,k}^{t}\Vert^{2}.\\
    \end{align*}
    Thus we have the following recursion,
    \begin{align*}
        \mathbb{E}_{t}\Vert\frac{1}{m}\sum_{i\in[m]}\hat{\mathbf{g}}_{i}^{t-1}\Vert^{2}\leq\frac{1}{\gamma}\left(\mathbb{E}_{t}\Vert\frac{1}{m}\sum_{i\in[m]}\hat{\mathbf{g}}_{i}^{t-1}\Vert^{2} - \mathbb{E}_{t}\Vert\frac{1}{m}\sum_{i\in[m]}\hat{\mathbf{g}}_{i}^{t}\Vert^{2}\right) + \mathbb{E}_{t}\Vert\frac{1}{m}\sum_{i\in[m]}\sum_{k=0}^{K-1}\frac{\gamma_{k}}{\gamma}\tilde{\mathbf{g}}_{i,k}^{t}\Vert^{2}.\\
    \end{align*}
    
\end{proof}

\begin{lemma}\label{global_update}
    (Bounded local update) The local update $\hat{\mathbf{g}}_{i}^{t}$ holds the upper bound of:
    \begin{align*}
        \frac{1}{m}\sum_{i\in[m]}\mathbb{E}_{t}\Vert\hat{\mathbf{g}}_{i}^{t-1}\Vert^{2}
        &\leq\frac{P}{\gamma}\frac{1}{m}\sum_{i\in[m]}\left(\mathbb{E}_{t}\Vert\hat{\mathbf{g}}_{i}^{t-1}\Vert^{2} - \mathbb{E}_{t}\Vert\hat{\mathbf{g}}_{i}^{t}\Vert^{2}\right) + \frac{24PL^{2}}{m}\sum_{i\in[m]}\sum_{k=0}^{K-1}\frac{\gamma_{k}}{\gamma}\mathbb{E}_{t}\Vert\mathbf{x}_{i,k}^{t} -\mathbf{x}^{t}\Vert^{2}\\
        &\quad + 12P\mathbb{E}_{t}\Vert\nabla F(\mathbf{z}^{t})\Vert^{2} + P(12\sigma_{g}^{2} + \sigma_{l}^{2}),\\
    \end{align*}
    where $\frac{1}{P}=1-\frac{24\lambda^{2}L^{2}(1-2\gamma)^{2}}{\gamma^{2}}$.
    \begin{proof}
        According to the lemma\ref{update_g_hat}, we have:
        \begin{align*}
            \hat{\mathbf{g}}_{i}^{t} = (1-\gamma)\hat{\mathbf{g}}_{i}^{t-1} + \gamma\sum_{k=0}^{K-1}\frac{\gamma_{k}}{\gamma}\tilde{\mathbf{g}}_{i,k}^{t}.
        \end{align*}
        Take the L2-norm and we have:
        \begin{align*}
            \Vert\hat{\mathbf{g}}_{i}^{t}\Vert^{2}
            &= \Vert(1-\gamma)\hat{\mathbf{g}}_{i}^{t-1} + \gamma\sum_{k=0}^{K-1}\frac{\gamma_{k}}{\gamma}\tilde{\mathbf{g}}_{i,k}^{t}\Vert^{2}\\
            &\overset{(a)}{\leq} (1-\gamma)\Vert\hat{\mathbf{g}}_{i}^{t-1}\Vert^{2} + \gamma\Vert\sum_{k=0}^{K-1}\frac{\gamma_{k}}{\gamma}\tilde{\mathbf{g}}_{i,k}^{t}\Vert^{2}\\
            &\overset{(b)}{\leq} (1-\gamma)\Vert\hat{\mathbf{g}}_{i}^{t-1}\Vert^{2} + \gamma\sum_{k=0}^{K-1}\frac{\gamma_{k}}{\gamma}\Vert\tilde{\mathbf{g}}_{i,k}^{t}\Vert^{2}\\
            &= (1-\gamma)\Vert\hat{\mathbf{g}}_{i}^{t-1}\Vert^{2} + \sum_{k=0}^{K-1}\gamma_{k}\Vert\tilde{\mathbf{g}}_{i,k}^{t}\Vert^{2}.\\
        \end{align*}
        (a) and (b) apply the Jensen inequality.\\
        Thus we have the following recursion:
        \begin{align*}
            \frac{1}{m}\sum_{i\in[m]}\mathbb{E}_{t}\Vert\hat{\mathbf{g}}_{i}^{t-1}\Vert^{2}\leq\frac{1}{\gamma}\frac{1}{m}\sum_{i\in[m]}\left(\mathbb{E}_{t}\Vert\hat{\mathbf{g}}_{i}^{t-1}\Vert^{2} - \mathbb{E}_{t}\Vert\hat{\mathbf{g}}_{i}^{t}\Vert^{2}\right) + \frac{1}{m}\sum_{i\in[m]}\sum_{k=0}^{K-1}\frac{\gamma_{k}}{\gamma}\mathbb{E}_{t}\Vert\tilde{\mathbf{g}}_{i,k}^{t}\Vert^{2}.\\
        \end{align*}
        Here we provide a loose upper bound as a constant for the quasi-stochastic gradient:
        \begin{align*}
            &\quad\frac{1}{m}\sum_{i\in[m]}\sum_{k=0}^{K-1}\frac{\gamma_{k}}{\gamma}\mathbb{E}_{t}\Vert\tilde{\mathbf{g}}_{i,k}^{t}\Vert^{2}\\
            &= \frac{1}{m}\sum_{i\in[m]}\sum_{k=0}^{K-1}\frac{\gamma_{k}}{\gamma}\mathbb{E}_{t}\Vert(1-\alpha)\mathbf{g}_{i,k,1}^{t}+\alpha\mathbf{g}_{i,k,2}^{t}\Vert^{2}\\ 
            &= \frac{1}{m}\sum_{i\in[m]}\sum_{k=0}^{K-1}\frac{\gamma_{k}}{\gamma}\mathbb{E}_{t}\Vert\mathbf{g}_{i,k,1}^{t}+\alpha(\mathbf{g}_{i,k,2}^{t}-\mathbf{g}_{i,k,1}^{t})\Vert^{2}\\
            &\leq \frac{2}{m}\sum_{i\in[m]}\sum_{k=0}^{K-1}\frac{\gamma_{k}}{\gamma}\left(\mathbb{E}_{t}\Vert\nabla F_{i}(\mathbf{x}_{i,k}^{t})\Vert^{2}+\alpha^{2}\mathbb{E}_{t}\Vert\nabla F_{i}(\Breve{\mathbf{x}}_{i,k}^{t})-\nabla F_{i}(\mathbf{x}_{i,k}^{t})\Vert^{2}\right) + \sigma_{l}^{2}\\
            &\leq \frac{2}{m}\sum_{i\in[m]}\sum_{k=0}^{K-1}\frac{\gamma_{k}}{\gamma}\left(\mathbb{E}_{t}\Vert\nabla F_{i}(\mathbf{x}_{i,k}^{t})\Vert^{2}+\alpha^{2}L^{2}\rho^{2}\mathbb{E}_{t}\Vert\nabla F_{i}(\mathbf{x}_{i,k}^{t})\Vert^{2}\right) + \sigma_{l}^{2}\\
            &\leq \frac{4}{m}\sum_{i\in[m]}\sum_{k=0}^{K-1}\frac{\gamma_{k}}{\gamma}\mathbb{E}_{t}\Vert\nabla F_{i}(\mathbf{x}_{i,k}^{t}) -\nabla F_{i}(\mathbf{z}^{t}) + \nabla F_{i}(\mathbf{z}^{t}) - \nabla F(\mathbf{z}^{t}) + \nabla F(\mathbf{z}^{t})\Vert^{2} + \sigma_{l}^{2}\\
            &\leq \frac{12L^{2}}{m}\sum_{i\in[m]}\sum_{k=0}^{K-1}\frac{\gamma_{k}}{\gamma}\mathbb{E}_{t}\Vert\mathbf{x}_{i,k}^{t} -\mathbf{z}^{t}\Vert^{2} + 12\mathbb{E}_{t}\Vert\nabla F(\mathbf{z}^{t})\Vert^{2} + (12\sigma_{g}^{2} + \sigma_{l}^{2})\\
            &\leq \frac{12L^{2}}{m}\sum_{i\in[m]}\sum_{k=0}^{K-1}\frac{\gamma_{k}}{\gamma}\mathbb{E}_{t}\Vert\mathbf{x}_{i,k}^{t} -\mathbf{x}^{t} + \mathbf{x}^{t} -\mathbf{u}^{t} +\mathbf{u}^{t} -\mathbf{z}^{t}\Vert^{2}\\
            &\quad + 12\mathbb{E}_{t}\Vert\nabla F(\mathbf{z}^{t})\Vert^{2} + (12\sigma_{g}^{2} + \sigma_{l}^{2})\\
            &\leq \frac{24L^{2}}{m}\sum_{i\in[m]}\sum_{k=0}^{K-1}\frac{\gamma_{k}}{\gamma}\mathbb{E}_{t}\Vert\mathbf{x}_{i,k}^{t} -\mathbf{x}^{t}\Vert^{2} + 24L^{2}\Vert\mathbf{x}^{t} -\mathbf{u}^{t} +\mathbf{u}^{t} -\mathbf{z}^{t}\Vert^{2} + (12\sigma_{g}^{2} + \sigma_{l}^{2})\\
            &\quad + 12\mathbb{E}_{t}\Vert\nabla F(\mathbf{z}^{t})\Vert^{2}\\
            &\leq \frac{24L^{2}}{m}\sum_{i\in[m]}\sum_{k=0}^{K-1}\frac{\gamma_{k}}{\gamma}\mathbb{E}_{t}\Vert\mathbf{x}_{i,k}^{t} -\mathbf{x}^{t}\Vert^{2} + \frac{24L^{2}\lambda^{2}(1-2\gamma)^{2}}{\gamma^{2}}\frac{1}{m}\sum_{i}\mathbb{E}_{t}\Vert\hat{\mathbf{g}}_{i}^{t-1}\Vert^{2}\\
            &\quad + 12\mathbb{E}_{t}\Vert\nabla F(\mathbf{z}^{t})\Vert^{2} + (12\sigma_{g}^{2} + \sigma_{l}^{2}).\\ 
        \end{align*}
        We applies the Jensen inequality, the basic inequality $\Vert\sum_{i=1}^{n}\mathbf{a}_{i}\Vert^{2}\leq n\sum_{i=1}^{n}\Vert\mathbf{a}_{i}\Vert^{2}$, and the upper bound of $\rho\leq\frac{1}{\alpha L}$. 
        Combining the above inequalities, let $\frac{1}{P}=1-\frac{24L^{2}\lambda^{2}(1-2\gamma^{2})}{\gamma^{2}}$ is the constant, we have:
        \begin{align*}
            \frac{1}{m}\sum_{i\in[m]}\mathbb{E}_{t}\Vert\hat{\mathbf{g}}_{i}^{t-1}\Vert^{2}
            &\leq\frac{P}{\gamma}\frac{1}{m}\sum_{i\in[m]}\left(\mathbb{E}_{t}\Vert\hat{\mathbf{g}}_{i}^{t-1}\Vert^{2} - \mathbb{E}_{t}\Vert\hat{\mathbf{g}}_{i}^{t}\Vert^{2}\right) + \frac{24PL^{2}}{m}\sum_{i\in[m]}\sum_{k=0}^{K-1}\frac{\gamma_{k}}{\gamma}\mathbb{E}_{t}\Vert\mathbf{x}_{i,k}^{t} -\mathbf{x}^{t}\Vert^{2}\\
            &\quad + 12P\mathbb{E}_{t}\Vert\nabla F(\mathbf{z}^{t})\Vert^{2} + P(12\sigma_{g}^{2} + \sigma_{l}^{2}).\\
        \end{align*}
    \end{proof}
\end{lemma}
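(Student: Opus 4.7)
The plan is to mirror the structure of Lemma~\ref{true_global_update} but work coordinatewise (per-client) and then handle the extra per-client variance terms that cannot be absorbed globally. I would proceed in five stages.

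First, I would start from Lemma~\ref{update_g_hat}, which writes $\hat{\mathbf{g}}_i^t$ as the convex combination $(1-\gamma)\hat{\mathbf{g}}_i^{t-1}+\gamma\sum_{k}(\gamma_k/\gamma)\tilde{\mathbf{g}}_{i,k}^t$. Taking $\Vert\cdot\Vert^2$ and applying Jensen's inequality (both for the outer convex combination and the inner one indexed by $k$) yields
$\Vert\hat{\mathbf{g}}_i^t\Vert^2\le(1-\gamma)\Vert\hat{\mathbf{g}}_i^{t-1}\Vert^2+\sum_k\gamma_k\Vert\tilde{\mathbf{g}}_{i,k}^t\Vert^2$.
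Averaging over $i\in[m]$, taking conditional expectation, and rearranging delivers the telescoping skeleton $\frac{1}{m}\sum_i\mathbb{E}_t\Vert\hat{\mathbf{g}}_i^{t-1}\Vert^2\le\frac{1}{\gamma}\frac{1}{m}\sum_i(\mathbb{E}_t\Vert\hat{\mathbf{g}}_i^{t-1}\Vert^2-\mathbb{E}_t\Vert\hat{\mathbf{g}}_i^t\Vert^2)+\frac{1}{m}\sum_i\sum_k\frac{\gamma_k}{\gamma}\mathbb{E}_t\Vert\tilde{\mathbf{g}}_{i,k}^t\Vert^2$.

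Second, I would control the quasi-gradient second moment. Writing $\tilde{\mathbf{g}}_{i,k}^t=\mathbf{g}_{i,k,1}^t+\alpha(\mathbf{g}_{i,k,2}^t-\mathbf{g}_{i,k,1}^t)$ and using Assumption~\ref{bounded_stochastic_gradient} to separate the $\sigma_l^2$ noise, I replace stochastic gradients by true gradients. Assumption~\ref{smoothness} gives $\Vert\nabla F_i(\breve{\mathbf{x}}_{i,k}^t)-\nabla F_i(\mathbf{x}_{i,k}^t)\Vert\le L\rho\Vert\nabla F_i(\mathbf{x}_{i,k}^t)\Vert$, and the choice $\rho\le 1/(\alpha L)$ absorbs the $\alpha$-weighted correction into a constant. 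This leaves me bounding $\frac{1}{m}\sum_i\sum_k\frac{\gamma_k}{\gamma}\Vert\nabla F_i(\mathbf{x}_{i,k}^t)\Vert^2$, which I split via $\nabla F_i(\mathbf{x}_{i,k}^t)=[\nabla F_i(\mathbf{x}_{i,k}^t)-\nabla F_i(\mathbf{z}^t)]+[\nabla F_i(\mathbf{z}^t)-\nabla F(\mathbf{z}^t)]+\nabla F(\mathbf{z}^t)$ and the inequality $\Vert a+b+c\Vert^2\le 3(\Vert a\Vert^2+\Vert b\Vert^2+\Vert c\Vert^2)$. Smoothness bounds the first piece by $L^2\Vert\mathbf{x}_{i,k}^t-\mathbf{z}^t\Vert^2$, Assumption~\ref{bounded_heterogeneity} bounds the second by $\sigma_g^2$, and the third is exactly the target term $\Vert\nabla F(\mathbf{z}^t)\Vert^2$.

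Third, I would further split $\mathbf{x}_{i,k}^t-\mathbf{z}^t=(\mathbf{x}_{i,k}^t-\mathbf{x}^t)+(\mathbf{x}^t-\mathbf{u}^t)+(\mathbf{u}^t-\mathbf{z}^t)$ and again use $\Vert a+b\Vert^2\le 2(\Vert a\Vert^2+\Vert b\Vert^2)$ (grouping the last two as one displacement). The first piece is exactly the drift $\Vert\mathbf{x}_{i,k}^t-\mathbf{x}^t\Vert^2$ that appears in the statement, which is kept. The key algebraic identity needed for the remaining piece is $\mathbf{x}^t-\mathbf{u}^t=-\lambda\frac{1}{m}\sum_i\hat{\mathbf{g}}_i^{t-1}$ (from the definitions $\hat{\mathbf{x}}_i^{t-1}=\mathbf{x}_{i,K}^{t-1}-\lambda\hat{\mathbf{g}}_i^{t-1}$ and $\mathbf{x}^t=\frac{1}{m}\sum_i\hat{\mathbf{x}}_i^{t-1}$), together with $\mathbf{u}^t-\mathbf{z}^t=-\frac{1-\gamma}{\gamma}(\mathbf{u}^t-\mathbf{u}^{t-1})$, which combine to yield the combined displacement bounded by $\frac{\lambda^2(1-2\gamma)^2}{\gamma^2}\frac{1}{m}\sum_i\Vert\hat{\mathbf{g}}_i^{t-1}\Vert^2$ via Jensen after substituting the momentum-like expression for $\mathbf{u}^t-\mathbf{u}^{t-1}$ from the proof of Lemma~\ref{z}.

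Finally, substituting back produces a recursion whose right side contains $\frac{24L^2\lambda^2(1-2\gamma)^2}{\gamma^2}\frac{1}{m}\sum_i\Vert\hat{\mathbf{g}}_i^{t-1}\Vert^2$. Moving this term to the left and dividing both sides by $1-\frac{24\lambda^2L^2(1-2\gamma)^2}{\gamma^2}=1/P$ promotes the coefficient on every remaining term to $P$ and delivers exactly the claimed inequality. The main obstacle, and the step that requires the most care, is the third stage: extracting the self-referential $\Vert\hat{\mathbf{g}}_i^{t-1}\Vert^2$ contribution from $\mathbf{x}^t-\mathbf{z}^t$ in precisely the constant $\lambda^2(1-2\gamma)^2/\gamma^2$, since this depends on recognizing that the global $\mathbf{x}^t\to\mathbf{u}^t$ shift and the auxiliary $\mathbf{u}^t\to\mathbf{z}^t$ correction partially cancel. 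This cancellation is what determines the admissibility condition $24\lambda^2L^2(1-2\gamma)^2<\gamma^2$ needed for $P>0$, and it is where a naive triangle-inequality decomposition would lose the right constant.
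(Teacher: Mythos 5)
Your proposal is correct and follows essentially the same route as the paper's proof: the Jensen-based telescoping recursion from Lemma~\ref{update_g_hat}, the same quasi-gradient second-moment bound with the splits through $\nabla F_i(\mathbf{z}^t)$ and $\mathbf{x}_{i,k}^{t}-\mathbf{x}^{t}$, the same extraction of the self-referential $\frac{\lambda^{2}(1-2\gamma)^{2}}{\gamma^{2}}\frac{1}{m}\sum_{i}\Vert\hat{\mathbf{g}}_{i}^{t-1}\Vert^{2}$ term, and the same final rearrangement producing $P$. Your stage-three derivation, which uses $\mathbf{u}^{t}-\mathbf{u}^{t-1}=-\lambda\frac{1}{m}\sum_{i}\hat{\mathbf{g}}_{i}^{t-1}$ directly to obtain $\mathbf{x}^{t}-\mathbf{z}^{t}=\frac{2\gamma-1}{\gamma}(\mathbf{x}^{t}-\mathbf{u}^{t})$, is in fact a slightly cleaner presentation of the add-and-subtract cancellation the paper performs, with the identical constant.
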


\subsubsection{L-smoothness of the Function $F$}
For the general non-convex case, according to the Assumptions and the smoothness of $F$, we take the conditional expectation at round $t+1$ and expand the $F(\mathbf{z}^{t+1})$ as:
\begin{align*}
    \mathbb{E}_{t}[F(\mathbf{z}^{t+1})] 
    &\leq F(\mathbf{z}^{t}) + \mathbb{E}_{t}\langle \nabla F(\mathbf{z}^{t}), \mathbf{z}^{t+1}-\mathbf{z}^{t}\rangle + \frac{L}{2}\mathbb{E}_{t}\Vert \mathbf{z}^{t+1}-\mathbf{z}^{t} \Vert^{2}\\
    &= F(\mathbf{z}^{t}) + \langle \nabla F(\mathbf{z}^{t}), \mathbb{E}_{t}[\mathbf{z}^{t+1}]-\mathbf{z}^{t}\rangle + \frac{L}{2}\mathbb{E}_{t}\Vert \mathbf{z}^{t+1}-\mathbf{z}^{t} \Vert^{2}\\
    &= F(\mathbf{z}^{t}) + \mathbb{E}_{t}\langle \nabla F(\mathbf{z}^{t}), - \lambda\frac{1}{m}\sum_{i\in [m]}\sum_{k=0}^{K-1}\frac{\gamma_{k}}{\gamma}\tilde{\mathbf{g}}_{i,k}^{t}\rangle + \frac{L}{2}\mathbb{E}_{t}\Vert \mathbf{z}^{t+1}-\mathbf{z}^{t} \Vert^{2}\\
    &= F(\mathbf{z}^{t}) - \lambda\mathbb{E}_{t}\langle \nabla F(\mathbf{z}^{t}), \frac{1}{m}\sum_{i\in [m]}\sum_{k=0}^{K-1}\frac{\gamma_{k}}{\gamma}\tilde{\mathbf{g}}_{i,k}^{t}-\nabla F(\mathbf{z}^{t})+\nabla F(\mathbf{z}^{t})\rangle\\
    &\quad + \frac{L}{2}\mathbb{E}_{t}\Vert \mathbf{z}^{t+1}-\mathbf{z}^{t} \Vert^{2}\\
    &= F(\mathbf{z}^{t}) - \lambda \Vert\nabla F(\mathbf{z}^{t})\Vert^{2} \underbrace{-  \lambda\mathbb{E}_{t}\langle \nabla F(\mathbf{z}^{t}), \frac{1}{m}\sum_{i\in [m]}\sum_{k=0}^{K-1}\frac{\gamma_{k}}{\gamma}\tilde{\mathbf{g}}_{i,k}^{t}-\nabla F(\mathbf{z}^{t})\rangle}_{\mathbf{R1}}\\
    &\quad + \frac{L}{2}\underbrace{\mathbb{E}_{t}\Vert \mathbf{z}^{t+1}-\mathbf{z}^{t} \Vert^{2}}_{\mathbf{R2}}.\\
\end{align*}
\subsubsection{Bounded R1}
Note that $\mathbf{R1}$ can be bounded as:
\begin{align*}
    \mathbf{R1}
    &= -\lambda\mathbb{E}_{t}\langle \nabla F(\mathbf{z}^{t}), \frac{1}{m}\sum_{i\in [m]}\sum_{k=0}^{K-1}\frac{\gamma_{k}}{\gamma}\tilde{\mathbf{g}}_{i,k}^{t}-\nabla F(\mathbf{z}^{t})\rangle\\
    &\overset{(a)}{=} -\lambda\mathbb{E}_{t}\langle \nabla F(\mathbf{z}^{t}), \frac{1}{m}\sum_{i\in [m]}\sum_{k=0}^{K-1}\frac{\gamma_{k}}{\gamma}\tilde{\mathbf{g}}_{i,k}^{t}-\frac{1}{m}\sum_{i\in [m]}\sum_{k=0}^{K-1}\frac{\gamma_{k}}{\gamma}\nabla F_{i}(\mathbf{z}^{t})\rangle\\
    &\overset{(b)}{=} \frac{\lambda }{2}\Vert\nabla F(\mathbf{z}^{t})\Vert^{2} + \frac{\lambda}{2}\mathbb{E}_{t}\Vert\frac{1}{m}\sum_{i\in [m]}\sum_{k=0}^{K-1}\frac{\gamma_{k}}{\gamma}\bigl(\mathbb{E}\tilde{\mathbf{g}}_{i,k}^{t}-\nabla F_{i}(\mathbf{z}^{t})\bigr)\Vert^{2} - \frac{\lambda}{2m^{2}}\mathbb{E}_{t}\Vert\sum_{i\in [m]}\sum_{k=0}^{K-1}\frac{\gamma_{k}}{\gamma}\mathbb{E}\tilde{\mathbf{g}}_{i,k}^{t}\Vert^{2}\\
    &\overset{(c)}{\leq} \frac{\lambda }{2}\Vert\nabla F(\mathbf{z}^{t})\Vert^{2} + \frac{\lambda}{2}\underbrace{\frac{1}{m}\sum_{i\in [m]}\sum_{k=0}^{K-1}\frac{\gamma_{k}}{\gamma}\mathbb{E}_{t}\Vert\mathbb{E}\tilde{\mathbf{g}}_{i,k}^{t}-\nabla F_{i}(\mathbf{z}^{t})\Vert^{2}}_{\mathbf{R1.a}} - \frac{\lambda}{2m^{2}}\mathbb{E}_{t}\Vert\sum_{i\in [m]}\sum_{k=0}^{K-1}\frac{\gamma_{k}}{\gamma}\mathbb{E}\tilde{\mathbf{g}}_{i,k}^{t}\Vert^{2}.\\
\end{align*}
(a) applies the fact that $\frac{1}{m}\sum_{i\in [m]}\nabla F_{i}(\mathbf{z}^{t})=\nabla F(\mathbf{z}^{t})$. (b) applies $-\langle\mathbf{x},\mathbf{y}\rangle=\frac{1}{2}\bigl(\Vert\mathbf{x}\Vert^{2}+\Vert\mathbf{y}\Vert^{2}-\Vert\mathbf{x}+\mathbf{y}\Vert^{2}\bigr)$. (c) applies the Jensen’s inequality and the fact that $\sum_{k=0}^{K-1}\frac{\gamma_{k}}{\gamma}=1$.\\
According to the update rule we have:
\begin{align*}
    \mathbb{E}\tilde{\mathbf{g}}_{i,k}^{t}
    &= (1-\alpha)\mathbb{E}\left[\mathbf{g}_{i,k,1}^{t}\right]+\alpha\mathbb{E}\left[\mathbf{g}_{i,k,2}^{t}\right]= (1-\alpha)\mathbb{E}\left[\nabla F_{i}(\mathbf{x}_{i,k}^{t};\varepsilon_{i,k}^{t})\right]+\alpha\mathbb{E}\left[\nabla F_{i}(\Breve{\mathbf{x}}_{i,k}^{t};\varepsilon_{i,k}^{t})\right]\\
    &= (1-\alpha)\nabla F_{i}(\mathbf{x}_{i,k}^{t}) + \alpha\nabla F_{i}(\Breve{\mathbf{x}}_{i,k}^{t})= (1-\alpha)\nabla F_{i}(\mathbf{x}_{i,k}^{t}) + \alpha\nabla F_{i}(\mathbf{x}_{i,k}^{t}+\rho\mathbf{g}_{i,k,1}^{t}).\\
\end{align*}
Let $\rho\leq\frac{1}{\sqrt{3}\alpha L}$, thus we could bound the term $\mathbf{R1.a}$ as follows:
\begin{align*}
    &\quad\quad \frac{1}{m}\sum_{i\in [m]}\sum_{k=0}^{K-1}\frac{\gamma_{k}}{\gamma}\mathbb{E}_{t}\Vert\mathbb{E}\tilde{\mathbf{g}}_{i,k}^{t}-\nabla F_{i}(\mathbf{z}^{t})\Vert^{2}\\
    &= \frac{1}{m}\sum_{i\in [m]}\sum_{k=0}^{K-1}\frac{\gamma_{k}}{\gamma}\mathbb{E}_{t}\Vert(1-\alpha)\nabla F_{i}(\mathbf{x}_{i,k}^{t}) + \alpha\nabla F_{i}(\mathbf{x}_{i,k}^{t}+\rho\mathbf{g}_{i,k,1}^{t})-\nabla F_{i}(\mathbf{z}^{t})\Vert^{2}\\
    &= \frac{1}{m}\sum_{i\in [m]}\sum_{k=0}^{K-1}\frac{\gamma_{k}}{\gamma}\mathbb{E}_{t}\Vert\nabla F_{i}(\mathbf{x}_{i,k}^{t})-\nabla F_{i}(\mathbf{z}^{t}) + \alpha\Bigl(\nabla F_{i}(\mathbf{x}_{i,k}^{t}+\rho\mathbf{g}_{i,k,1}^{t})-\nabla F_{i}(\mathbf{x}_{i,k}^{t})\Bigr)\Vert^{2}\\
    &\leq \frac{2}{m}\sum_{i\in [m]}\sum_{k=0}^{K-1}\frac{\gamma_{k}}{\gamma}\mathbb{E}_{t}\Vert\nabla F_{i}(\mathbf{x}_{i,k}^{t})-\nabla F_{i}(\mathbf{z}^{t})\Vert^{2} + \frac{2\alpha^{2}}{m}\sum_{i\in [m]}\sum_{k=0}^{K-1}\frac{\gamma_{k}}{\gamma}\mathbb{E}_{t}\Vert\nabla F_{i}(\Breve{\mathbf{x}}_{i,k}^{t})-\nabla F_{i}(\mathbf{x}_{i,k}^{t})\Vert^{2}\\
    &\leq \frac{2L^{2}}{m}\sum_{i\in [m]}\sum_{k=0}^{K-1}\frac{\gamma_{k}}{\gamma}\mathbb{E}_{t}\Vert\mathbf{x}_{i,k}^{t}-\mathbf{z}^{t}\Vert^{2} + \frac{2\alpha^{2} L^{2}\rho^{2}}{m}\sum_{i\in [m]}\sum_{k=0}^{K-1}\frac{\gamma_{k}}{\gamma}\mathbb{E}_{t}\Vert\mathbf{g}_{i,k,1}^{t}\Vert^{2}\\
    &= \frac{2L^{2}}{m}\sum_{i\in [m]}\sum_{k=0}^{K-1}\frac{\gamma_{k}}{\gamma}\mathbb{E}_{t}\Vert\mathbf{x}_{i,k}^{t}-\mathbf{x}^{t} + \mathbf{x}^{t}-\mathbf{u}^{t}+\mathbf{u}^{t}-\mathbf{z}^{t}\Vert^{2} + \frac{2\alpha^{2} L^{2}\rho^{2}}{m}\sum_{i\in [m]}\sum_{k=0}^{K-1}\frac{\gamma_{k}}{\gamma}\mathbb{E}_{t}\Vert\mathbf{g}_{i,k,1}^{t}\Vert^{2}\\
    &\leq \frac{4L^{2}}{m}\sum_{i\in[m]}\sum_{k=0}^{K-1}\frac{\gamma_{k}}{\gamma}\mathbb{E}_{t}\Vert\mathbf{x}_{i,k}^{t}-\mathbf{x}^{t}\Vert^{2} + \frac{4L^{2}}{m}\sum_{i\in [m]}\sum_{k=0}^{K-1}\frac{\gamma_{k}}{\gamma}\mathbb{E}_{t}\Vert(\mathbf{x}^{t}-\mathbf{u}^{t}) + (\mathbf{u}^{t}-\mathbf{z}^{t})\Vert^{2}\\
    &\quad + \frac{2\alpha^{2} L^{2}\rho^{2}}{m}\sum_{i\in [m]}\sum_{k=0}^{K-1}\frac{\gamma_{k}}{\gamma}\mathbb{E}_{t}\Vert\mathbf{g}_{i,k,1}^{t}-\nabla F_{i}(\mathbf{x}_{i,k}^{t})\Vert^{2} + \frac{2\alpha^{2} L^{2}\rho^{2}}{m}\sum_{i\in [m]}\sum_{k=0}^{K-1}\frac{\gamma_{k}}{\gamma}\mathbb{E}_{t}\Vert\nabla F_{i}(\mathbf{x}_{i,k}^{t})\Vert^{2}\\
    &\leq \frac{4L^{2}}{m}\sum_{i\in[m]}\sum_{k=0}^{K-1}\frac{\gamma_{k}}{\gamma}\mathbb{E}_{t}\Vert\mathbf{x}_{i,k}^{t}-\mathbf{x}^{t}\Vert^{2} + \frac{2\alpha^{2} L^{2}\rho^{2}}{m}\sum_{i\in [m]}\sum_{k=0}^{K-1}\frac{\gamma_{k}}{\gamma}\mathbb{E}_{t}\Vert\nabla F_{i}(\mathbf{x}_{i,k}^{t})\Vert^{2} + 2\alpha^{2} L^{2}\rho^{2}\sigma_{l}^{2}\\
    &\quad + 4L^{2}\mathbb{E}_{t}\Vert(\mathbf{x}^{t}-\mathbf{u}^{t}) + (\mathbf{u}^{t}-\mathbf{z}^{t})\Vert^{2}\\
    &= \frac{4L^{2}}{m}\sum_{i\in[m]}\sum_{k=0}^{K-1}\frac{\gamma_{k}}{\gamma}\mathbb{E}_{t}\Vert\mathbf{x}_{i,k}^{t}-\mathbf{x}^{t}\Vert^{2} + \frac{2\alpha^{2} L^{2}\rho^{2}}{m}\sum_{i\in [m]}\sum_{k=0}^{K-1}\frac{\gamma_{k}}{\gamma}\mathbb{E}_{t}\Vert\nabla F_{i}(\mathbf{x}_{i,k}^{t})\Vert^{2} + 2\alpha^{2} L^{2}\rho^{2}\sigma_{l}^{2}\\
    &\quad + 4L^{2}\mathbb{E}_{t}\Vert-\frac{1}{m}\sum_{i\in[m]}\lambda\hat{\mathbf{g}}_{i}^{t-1} + \frac{\gamma-1}{\gamma}(\mathbf{u}^{t}-\mathbf{u}^{t-1})\Vert^{2}\\
    &= \frac{4L^{2}}{m}\sum_{i\in[m]}\sum_{k=0}^{K-1}\frac{\gamma_{k}}{\gamma}\mathbb{E}_{t}\Vert\mathbf{x}_{i,k}^{t}-\mathbf{x}^{t}\Vert^{2} + \frac{2\alpha^{2} L^{2}\rho^{2}}{m}\sum_{i\in [m]}\sum_{k=0}^{K-1}\frac{\gamma_{k}}{\gamma}\mathbb{E}_{t}\Vert\nabla F_{i}(\mathbf{x}_{i,k}^{t})\Vert^{2} + 2\alpha^{2} L^{2}\rho^{2}\sigma_{l}^{2}\\
    &\quad + 4L^{2}\mathbb{E}_{t}\Vert\frac{1}{m}\sum_{i\in[m]}\Bigl((\mathbf{u}^{t}-\mathbf{u}^{t-1} + \lambda\hat{\mathbf{g}}_{i}^{t-1})-\frac{1}{\gamma}(\mathbf{u}^{t}-\mathbf{u}^{t-1}+\lambda\hat{\mathbf{g}}_{i}^{t-1})+(\frac{1-2\gamma}{\gamma})\lambda\hat{\mathbf{g}}_{i}^{t-1}\Bigr)\Vert^{2}\\
    &= \frac{4L^{2}}{m}\sum_{i\in[m]}\sum_{k=0}^{K-1}\frac{\gamma_{k}}{\gamma}\mathbb{E}_{t}\Vert\mathbf{x}_{i,k}^{t}-\mathbf{x}^{t}\Vert^{2} + \frac{2\alpha^{2} L^{2}\rho^{2}}{m}\sum_{i\in [m]}\sum_{k=0}^{K-1}\frac{\gamma_{k}}{\gamma}\mathbb{E}_{t}\Vert\nabla F_{i}(\mathbf{x}_{i,k}^{t})\Vert^{2} + 2\alpha^{2} L^{2}\rho^{2}\sigma_{l}^{2}\\
    &\quad + \frac{4\lambda^{2}L^{2}(1-2\gamma)^{2}}{\gamma^{2}}\mathbb{E}_{t}\Vert\frac{1}{m}\sum_{i\in[m]}\hat{\mathbf{g}}_{i}^{t-1}\Vert^{2}\\
    &= \frac{4L^{2}}{m}\sum_{i\in[m]}\sum_{k=0}^{K-1}\frac{\gamma_{k}}{\gamma}\mathbb{E}_{t}\Vert\mathbf{x}_{i,k}^{t}-\mathbf{x}^{t}\Vert^{2} + \frac{4\lambda^{2}L^{2}(1-2\gamma)^{2}}{\gamma^{2}}\mathbb{E}_{t}\Vert\frac{1}{m}\sum_{i\in[m]}\hat{\mathbf{g}}_{i}^{t-1}\Vert^{2}+ 2\alpha^{2} L^{2}\rho^{2}\sigma_{l}^{2}\\
    &\quad + \frac{2\alpha^{2} L^{2}\rho^{2}}{m}\sum_{i\in [m]}\sum_{k=0}^{K-1}\frac{\gamma_{k}}{\gamma}\mathbb{E}_{t}\Vert\nabla F_{i}(\mathbf{x}_{i,k}^{t})-\nabla F_{i}(\mathbf{z}^{t})+\nabla F_{i}(\mathbf{z}^{t})-\nabla F(\mathbf{z}^{t})+\nabla F(\mathbf{z}^{t})\Vert^{2} \\
    &\overset{(a)}{\leq} \frac{4L^{2}}{m}\sum_{i\in[m]}\sum_{k=0}^{K-1}\frac{\gamma_{k}}{\gamma}\mathbb{E}_{t}\Vert\mathbf{x}_{i,k}^{t}-\mathbf{x}^{t}\Vert^{2} + \frac{4\lambda^{2}L^{2}(1-2\gamma)^{2}}{\gamma^{2}}\mathbb{E}_{t}\Vert\frac{1}{m}\sum_{i\in[m]}\hat{\mathbf{g}}_{i}^{t-1}\Vert^{2}+ 2\alpha^{2} L^{2}\rho^{2}\sigma_{l}^{2}\\
    &\quad + \frac{2L^{2}}{m}\sum_{i\in [m]}\sum_{k=0}^{K-1}\frac{\gamma_{k}}{\gamma}\mathbb{E}_{t}\Vert\mathbf{x}_{i,k}^{t}-\mathbf{z}^{t}\Vert^{2}+6\alpha^{2} L^{2}\rho^{2}\sigma_{g}^{2} + 6\alpha^{2} L^{2}\rho^{2}\mathbb{E}_{t}\Vert\nabla F(\mathbf{z}^{t})\Vert^{2}\\
    &\leq \frac{8L^{2}}{m}\sum_{i\in[m]}\sum_{k=0}^{K-1}\frac{\gamma_{k}}{\gamma}\mathbb{E}_{t}\Vert\mathbf{x}_{i,k}^{t}-\mathbf{x}^{t}\Vert^{2} + \frac{8\lambda^{2}L^{2}(1-2\gamma)^{2}}{\gamma^{2}}\mathbb{E}_{t}\Vert\frac{1}{m}\sum_{i\in[m]}\hat{\mathbf{g}}_{i}^{t-1}\Vert^{2}+ 2\alpha^{2} L^{2}\rho^{2}\sigma_{l}^{2}\\
    &\quad + 6\alpha^{2} L^{2}\rho^{2}\sigma_{g}^{2} + 6\alpha^{2} L^{2}\rho^{2}\mathbb{E}_{t}\Vert\nabla F(\mathbf{z}^{t})\Vert^{2}.\\
    &\overset{(b)}{\leq} \frac{8L^{2}}{m}\sum_{i\in[m]}\sum_{k=0}^{K-1}\frac{\gamma_{k}}{\gamma}\mathbb{E}_{t}\Vert\mathbf{x}_{i,k}^{t}-\mathbf{x}^{t}\Vert^{2} + \frac{8\lambda^{2}L^{2}(1-2\gamma)^{2}}{\gamma^{3}}\left(\mathbb{E}_{t}\Vert\frac{1}{m}\sum_{i\in[m]}\hat{\mathbf{g}}_{i}^{t-1}\Vert^{2} - \mathbb{E}_{t}\Vert\frac{1}{m}\sum_{i\in[m]}\hat{\mathbf{g}}_{i}^{t}\Vert^{2}\right)\\
    &\quad + \frac{8\lambda^{2}L^{2}(1-2\gamma)^{2}}{\gamma^{2}}\mathbb{E}_{t}\Vert\frac{1}{m}\sum_{i\in[m]}\sum_{k=0}^{K-1}\frac{\gamma_{k}}{\gamma}\tilde{\mathbf{g}}_{i,k}^{t}\Vert^{2}
    + 2\alpha^{2} L^{2}\rho^{2}\sigma_{l}^{2} + 6\alpha^{2} L^{2}\rho^{2}\sigma_{g}^{2} + 6\alpha^{2} L^{2}\rho^{2}\mathbb{E}_{t}\Vert\nabla F(\mathbf{z}^{t})\Vert^{2}.\\
\end{align*}
(a) applies the bound of $\rho$ as $\rho\leq\frac{1}{\sqrt{3}\alpha L}$. (b) applies the lemma~\ref{true_global_update}. These others use the fact $\mathbb{E}[x-\mathbb{E}[x]]^{2}=\mathbb{E}[x^{2}]-[\mathbb{E}[x]]^{2}$ and $\Vert\mathbf{x}+\mathbf{y}\Vert^{2}\leq(1+a)\Vert\mathbf{x}\Vert^{2}+(1+\frac{1}{a})\Vert\mathbf{y}\Vert^{2}$.

We denote $\mathbf{c}^{t}=\frac{1}{m}\sum_{i\in{m}}\sum_{k=0}^{K-1}(\gamma_{k}/\gamma)\mathbb{E}_{t}\Vert\mathbf{x}_{i,k}^{t}-\mathbf{x}^{t}\Vert^{2}$ term as the local offset after $k$ iterations updates, we firstly consider the $\mathbf{c}_{k}^{t}=\frac{1}{m}\sum_{i\in{m}}\mathbb{E}_{t}\Vert\mathbf{x}_{i,k}^{t}-\mathbf{x}^{t}\Vert^{2}$ and it can be bounded as:
\begin{align*}
    \mathbf{c}_{k}^{t}
    &= \frac{1}{m}\sum_{i\in[m]}\mathbb{E}_{t}\Vert\mathbf{x}_{i,k}^{t}-\mathbf{x}^{t}\Vert^{2} =  \frac{1}{m}\sum_{i\in[m]}\mathbb{E}_{t}\Vert\mathbf{x}_{i,k}^{t}-\mathbf{x}_{i,k-1}^{t}+\mathbf{x}_{i,k-1}^{t}-\mathbf{x}_{i,0}^{t}\Vert^{2}\\
    &= \frac{1}{m}\sum_{i\in[m]}\mathbb{E}_{t}\Vert-\eta_{l}(\tilde{\mathbf{g}}_{i,k-1}^{t} - \hat{\mathbf{g}}_{i}^{t-1}) + (1-\frac{\eta_{l}}{\lambda})(\mathbf{x}_{i,k-1}^{t}-\mathbf{x}_{i,0}^{t})\Vert^{2}\\
    &\leq (1+a)(1-\frac{\eta_{l}}{\lambda})^{2}\frac{1}{m}\sum_{i\in[m]}\mathbb{E}_{t}\Vert\mathbf{x}_{i,k-1}^{t}-\mathbf{x}_{i,0}^{t}\Vert^{2} + (1+\frac{1}{a})\frac{\eta_{l}^{2}}{m}\sum_{i\in[m]}\mathbb{E}_{t}\Vert\tilde{\mathbf{g}}_{i,k-1}^{t} - \hat{\mathbf{g}}_{i}^{t-1}\Vert^{2}\\
    &= (1+a)(1-\frac{\eta_{l}}{\lambda})^{2}\mathbf{c}_{k-1}^{t} + (1+\frac{1}{a})\frac{\eta_{l}^{2}}{m}\sum_{i\in[m]}\mathbb{E}_{t}\Vert(1-\alpha)\mathbf{g}_{i,k-1,1}^{t}+\alpha\mathbf{g}_{i,k-1,2}^{t} - \hat{\mathbf{g}}_{i}^{t-1}\Vert^{2}\\
    &= (1+\frac{1}{a})\frac{\eta_{l}^{2}}{m}\sum_{i\in[m]}\mathbb{E}_{t}\Vert\nabla F_{i}(\mathbf{x}_{i,k-1}^{t}) - \hat{\mathbf{g}}_{i}^{t-1}+\alpha(\nabla F_{i}(\Breve{\mathbf{x}}_{i,k-1}^{t})-\nabla F_{i}(\mathbf{x}_{i,k-1}^{t}))\Vert^{2}\\
    &\quad + (1+\frac{1}{a})\eta_{l}^{2}\sigma_{l}^{2} + (1+a)(1-\frac{\eta_{l}}{\lambda})^{2}\mathbf{c}_{k-1}^{t}\\
    &\leq (1+\frac{1}{a})\frac{3\eta_{l}^{2}}{m}\sum_{i\in[m]}\left(\mathbb{E}_{t}\Vert\nabla F_{i}(\mathbf{x}_{i,k-1}^{t})\Vert^{2} + \mathbb{E}_{t}\Vert\hat{\mathbf{g}}_{i}^{t-1}\Vert^{2} + \alpha^{2}L^{2}\rho^{2}\mathbb{E}_{t}\Vert\nabla F_{i}(\mathbf{x}_{i,k-1}^{t})\Vert^{2}\right)\\
    &\quad + (1+\frac{1}{a})\eta_{l}^{2}\sigma_{l}^{2} + (1+a)(1-\frac{\eta_{l}}{\lambda})^{2}\mathbf{c}_{k-1}^{t}\\
    &\leq (1+\frac{1}{a})\frac{4\eta_{l}^{2}}{m}\sum_{i\in[m]}\mathbb{E}_{t}\Vert\nabla F_{i}(\mathbf{x}_{i,k-1}^{t})\Vert^{2} + (1+\frac{1}{a})\frac{3\eta_{l}^{2}}{m}\sum_{i\in[m]}\mathbb{E}_{t}\Vert\hat{\mathbf{g}}_{i}^{t-1}\Vert^{2}+ (1+\frac{1}{a})\eta_{l}^{2}\sigma_{l}^{2}\\
    &\quad + (1+a)(1-\frac{\eta_{l}}{\lambda})^{2}\mathbf{c}_{k-1}^{t}\\
    &\leq (1+\frac{1}{a})\frac{4\eta_{l}^{2}}{m}\sum_{i\in[m]}\mathbb{E}_{t}\Vert\nabla F_{i}(\mathbf{x}_{i,k-1}^{t}) - \nabla F_{i}(\mathbf{x}^{t}) + \nabla F_{i}(\mathbf{x}^{t}) - \nabla F_{i}(\mathbf{z}^{t}) + \nabla F_{i}(\mathbf{z}^{t})- \nabla F(\mathbf{z}^{t})\\
    &\quad  + \nabla F(\mathbf{z}^{t})\Vert^{2}  + (1+\frac{1}{a})\frac{3\eta_{l}^{2}}{m}\sum_{i\in[m]}\mathbb{E}_{t}\Vert\hat{\mathbf{g}}_{i}^{t-1}\Vert^{2}+ (1+\frac{1}{a})\eta_{l}^{2}\sigma_{l}^{2} + (1+a)(1-\frac{\eta_{l}}{\lambda})^{2}\mathbf{c}_{k-1}^{t}\\
    &\leq (1+\frac{1}{a})\frac{16\eta_{l}^{2}L^{2}}{m}\sum_{i\in[m]}\mathbb{E}_{t}\Vert\mathbf{x}_{i,k-1}^{t} - \mathbf{x}^{t}\Vert^{2} + (1+\frac{1}{a})16\eta_{l}^{2}L^{2}\Vert\mathbf{x}^{t} - \mathbf{z}^{t}\Vert^{2} + (1+\frac{1}{a})\eta_{l}^{2}(16\sigma_{g}^{2}+\sigma_{l}^{2})\\
    &\quad  + (1+\frac{1}{a})16\eta_{l}^{2}\Vert\nabla F(\mathbf{z}^{t})\Vert^{2}  + (1+\frac{1}{a})\frac{3\eta_{l}^{2}}{m}\sum_{i\in[m]}\mathbb{E}_{t}\Vert\hat{\mathbf{g}}_{i}^{t-1}\Vert^{2} + (1+a)(1-\frac{\eta_{l}}{\lambda})^{2}\mathbf{c}_{k-1}^{t}\\
    &\leq \left[(1+a)(1-\frac{\eta_{l}}{\lambda})^{2}+(1+\frac{1}{a})16\eta_{l}^{2}L^{2}\right]\mathbf{c}_{k-1}^{t} + (1+\frac{1}{a})\eta_{l}^{2}(16\sigma_{g}^{2}+\sigma_{l}^{2})\\
    &\quad + (1+\frac{1}{a})16\eta_{l}^{2}\mathbb{E}_{t}\Vert\nabla F(\mathbf{z}^{t})\Vert^{2} + (1+\frac{1}{a})\eta_{l}^{2}\left[3+\frac{16\lambda^{2}L^{2}(1-2\gamma)^{2}}{\gamma^{2}}\right]\frac{1}{m}\sum_{i\in[m]}\mathbb{E}_{t}\Vert\hat{\mathbf{g}}_{i}^{t-1}\Vert^{2}\\
    &= \left[(1+a)(1-\frac{\eta_{l}}{\lambda})^{2}+(1+\frac{1}{a})16\eta_{l}^{2}L^{2}\right]\mathbf{c}_{k-1}^{t} + (1+\frac{1}{a})\eta_{l}^{2}(16\sigma_{g}^{2}+\sigma_{l}^{2})\\
    &\quad + (1+\frac{1}{a})\eta_{l}^{2}L^{2}\left(88P-16\right)\mathbf{c}^{t} + (1+\frac{1}{a})\frac{2\eta_{l}^{2}(P-1)}{3}(12\sigma_{g}^{2}+\sigma_{l}^{2})\\
    &\quad + (1+\frac{1}{a})16\eta_{l}^{2}\mathbb{E}_{t}\Vert\nabla F(\mathbf{z}^{t})\Vert^{2} + (1+\frac{1}{a})\eta_{l}^{2}(44P-8)\mathbb{E}_{t}\Vert\nabla F(\mathbf{z}^{t})\Vert^{2}\\
    &\quad + (1+\frac{1}{a})\frac{2\eta_{l}^{2}(P-1)}{3\gamma}\frac{1}{m}\sum_{i\in[m]}\left(\mathbb{E}_{t}\Vert\hat{\mathbf{g}}_{i}^{t-1}\Vert^{2} - \mathbb{E}_{t}\Vert\hat{\mathbf{g}}_{i}^{t}\Vert^{2}\right)\\
\end{align*}
When $P$ satisfies the condition of $P\leq 2$, which means $\frac{1}{P}=1-\frac{24\lambda^{2}L^{2}(1-2\gamma)^{2}}{\gamma^{2}}\geq \frac{1}{2}$, then we have the constant of $\frac{2(P-1)}{3}\leq\frac{2}{3}<1$, let the last $12\sigma_{g}^{2}$ enlarged to $16\sigma_{g}^{2}$ for convenience, we have:
\begin{align*}
    \mathbf{c}_{k}^{t}
    &\leq \left[(1+a)(1-\frac{\eta_{l}}{\lambda})^{2}+(1+\frac{1}{a})16\eta_{l}^{2}L^{2}\right]\mathbf{c}_{k-1}^{t} + 2(1+\frac{1}{a})\eta_{l}^{2}(16\sigma_{g}^{2}+\sigma_{l}^{2}) + 160(1+\frac{1}{a})\eta_{l}^{2}L^{2}\mathbf{c}^{t}\\
    &\quad 96(1+\frac{1}{a})\eta_{l}^{2}\mathbb{E}_{t}\Vert\nabla F(\mathbf{z}^{t})\Vert^{2} + 2(1+\frac{1}{a})\frac{\eta_{l}^{2}}{\gamma}\frac{1}{m}\sum_{i\in[m]}\left(\mathbb{E}_{t}\Vert\hat{\mathbf{g}}_{i}^{t-1}\Vert^{2} - \mathbb{E}_{t}\Vert\hat{\mathbf{g}}_{i}^{t}\Vert^{2}\right).\\
\end{align*}
Here we get the recursion formula between the $\mathbf{c}_{k}^{t}$ and $\mathbf{c}_{k-1}^{t}$. Actually we need to upper bound the $\mathbf{c}^{t}=\sum_{k=0}^{K-1}(\gamma_{k}/\gamma)\mathbf{c}_{k}^{t}$, thus let the weight satisfies that:
\begin{align*}
    (1+a)(1-\frac{\eta_{l}}{\lambda})^{2}+(1+\frac{1}{a})16\eta_{l}^{2}L^{2} \leq \frac{\gamma_{K-2}}{\gamma_{K-1}}=\frac{\gamma_{K-3}}{\gamma_{K-2}}=\cdots=\frac{\gamma_{1}}{\gamma_{0}}=1-\frac{\eta_{l}}{\lambda},\\
\end{align*}
let $\eta_{l}\leq \lambda$ and thus we have:
\begin{align*}
    \mathbf{c}^{t}
    &=\sum_{k=0}^{K-1}\frac{\gamma_{k}}{\gamma}\mathbf{c}_{k}^{t}\\
    &\leq 2(1+\frac{1}{a})\frac{\eta_{l}^{2}}{\gamma}\sum_{k^{'}=0}^{K-1}\left(\sum_{k=0}^{k^{'}-1}\gamma_{k}\right)\Bigl(16\sigma_{g}^{2}+\sigma_{l}^{2}+48\mathbb{E}_{t}\Vert\nabla F(\mathbf{z}^{t})\Vert^{2} + 80L^{2}\mathbf{c}^{t}\\
    &\quad + \frac{1}{m\gamma}\sum_{i\in[m]}\left(\mathbb{E}_{t}\Vert\hat{\mathbf{g}}_{i}^{t-1}\Vert^{2} - \mathbb{E}_{t}\Vert\hat{\mathbf{g}}_{i}^{t}\Vert^{2}\right)\Bigr)\\
    &\overset{(a)}{\leq} 2(1+\frac{1}{a})\eta_{l}^{2}\sum_{k^{'}=0}^{K-1}\left(\sum_{k=0}^{K-1}\frac{\gamma_{k}}{\gamma}\right)\Bigl(16\sigma_{g}^{2}+\sigma_{l}^{2}+48\mathbb{E}_{t}\Vert\nabla F(\mathbf{z}^{t})\Vert^{2} + 80L^{2}\mathbf{c}^{t}\\
    &\quad + \frac{1}{m\gamma}\sum_{i\in[m]}\left(\mathbb{E}_{t}\Vert\hat{\mathbf{g}}_{i}^{t-1}\Vert^{2} - \mathbb{E}_{t}\Vert\hat{\mathbf{g}}_{i}^{t}\Vert^{2}\right)\Bigr)\\
    &= 2(1+\frac{1}{a})\eta_{l}^{2}K\left(16\sigma_{g}^{2}+\sigma_{l}^{2}+48\mathbb{E}_{t}\Vert\nabla F(\mathbf{z}^{t})\Vert^{2} + \frac{1}{m\gamma}\sum_{i\in[m]}\left(\mathbb{E}_{t}\Vert\hat{\mathbf{g}}_{i}^{t-1}\Vert^{2} - \mathbb{E}_{t}\Vert\hat{\mathbf{g}}_{i}^{t}\Vert^{2}\right)\right)\\
    &\quad + 160(1+\frac{1}{a})\eta_{l}^{2}L^{2}K\mathbf{c}^{t}.\\
\end{align*}
(a) enlarge the sum from $k^{'}$ to $K-1$ where $k^{'}\leq K-1$.\\
Let $\eta_{l}$ satisfies the upper bound of $\eta_{l}\leq\frac{1}{\sqrt{320(1+1/a)K}L}$ for convenience, we can bound the $\mathbf{c}^{t}$ as:
\begin{align*}
    \mathbf{c}^{t}
    &= 4(1+\frac{1}{a})\eta_{l}^{2}K\left(16\sigma_{g}^{2}+\sigma_{l}^{2}+48\mathbb{E}_{t}\Vert\nabla F(\mathbf{z}^{t})\Vert^{2}+\frac{1}{m\gamma}\sum_{i\in[m]}\left(\mathbb{E}_{t}\Vert\hat{\mathbf{g}}_{i}^{t-1}\Vert^{2} - \mathbb{E}_{t}\Vert\hat{\mathbf{g}}_{i}^{t}\Vert^{2}\right)\right).\\
\end{align*}
Let the $a$ satisfies $a = 1$ for convenience, we summarize the extra terms above and bound the term $\mathbf{R1.a}$ as:
\begin{align*}
    \mathbf{R1.a} 
    &= \frac{1}{m}\sum_{i\in [m]}\sum_{k=0}^{K-1}\frac{\gamma_{k}}{\gamma}\mathbb{E}_{t}\Vert\mathbb{E}[\tilde{\mathbf{g}}_{i,k}^{t}]-\nabla F_{i}(\mathbf{z}^{t})\Vert^{2}\\
    &\leq 8L^{2}\mathbf{c}^{t}+\frac{8\lambda^{2}L^{2}(1-2\gamma)^{2}}{\gamma^{3}}\left(\mathbb{E}_{t}\Vert\frac{1}{m}\sum_{i\in[m]}\hat{\mathbf{g}}_{i}^{t-1}\Vert^{2} - \mathbb{E}_{t}\Vert\frac{1}{m}\sum_{i\in[m]}\hat{\mathbf{g}}_{i}^{t}\Vert^{2}\right) + 2\alpha^{2} L^{2}\rho^{2}\sigma_{l}^{2}\\
    &\quad + \frac{8\lambda^{2}L^{2}(1-2\gamma)^{2}}{\gamma^{2}}\mathbb{E}_{t}\Vert\frac{1}{m}\sum_{i\in[m]}\sum_{k=0}^{K-1}\frac{\gamma_{k}}{\gamma}\tilde{\mathbf{g}}_{i,k}^{t}\Vert^{2}  + 6\alpha^{2} L^{2}\rho^{2}\sigma_{g}^{2} + 6\alpha^{2} L^{2}\rho^{2}\mathbb{E}_{t}\Vert\nabla F(\mathbf{z}^{t})\Vert^{2}\\
    &\leq \frac{8\lambda^{2}L^{2}(1-2\gamma)^{2}}{\gamma^{3}}\left(\mathbb{E}_{t}\Vert\frac{1}{m}\sum_{i\in[m]}\hat{\mathbf{g}}_{i}^{t-1}\Vert^{2} - \mathbb{E}_{t}\Vert\frac{1}{m}\sum_{i\in[m]}\hat{\mathbf{g}}_{i}^{t}\Vert^{2}\right) + 2\alpha^{2} L^{2}\rho^{2}\sigma_{l}^{2}  + 6\alpha^{2} L^{2}\rho^{2}\sigma_{g}^{2} \\
    &\quad + \frac{8\lambda^{2}L^{2}(1-2\gamma)^{2}}{\gamma^{2}}\mathbb{E}_{t}\Vert\frac{1}{m}\sum_{i\in[m]}\sum_{k=0}^{K-1}\frac{\gamma_{k}}{\gamma}\tilde{\mathbf{g}}_{i,k}^{t}\Vert^{2} + \frac{64\eta_{l}^{2}L^{2}K}{m\gamma}\sum_{i\in[m]}\left(\mathbb{E}_{t}\Vert\hat{\mathbf{g}}_{i}^{t-1}\Vert^{2} - \mathbb{E}_{t}\Vert\hat{\mathbf{g}}_{i}^{t}\Vert^{2}\right)\\
    &\quad  + 3072\eta_{l}^{2}L^{2}K\mathbb{E}_{t}\Vert\nabla F(\mathbf{z}^{t})\Vert^{2}  + 6\alpha^{2} L^{2}\rho^{2}\mathbb{E}_{t}\Vert\nabla F(\mathbf{z}^{t})\Vert^{2}+ 64\eta_{l}^{2}L^{2}K(16\sigma_{g}^{2}+\sigma_{l}^{2}).\\
\end{align*}
thus we can bound the $\mathbf{R1}$ as follow:
\begin{align*}
    \mathbf{R1}
    &\leq \frac{\lambda}{2}\mathbb{E}_{t}\Vert\nabla F(\mathbf{z}^{t})\Vert^{2} + \frac{\lambda}{2}\mathbf{R1.a} - \frac{\lambda}{2m^{2}}\mathbb{E}_{t}\Vert\sum_{i\in[m]}\sum_{k}^{K-1}\frac{\gamma_{k}}{\gamma}\mathbb{E}[\tilde{\mathbf{g}}_{i,k}^{t}]\Vert^{2}\\
    &\leq \left(\frac{\lambda}{2}+3\lambda\alpha^{2}L^{2}\rho^{2}+1536\lambda\eta_{l}^{2}L^{2}K\right)\mathbb{E}_{t}\Vert\nabla F(\mathbf{z}^{t})\Vert^{2} +\frac{32\lambda\eta_{l}L^{2}K}{\gamma m}\sum_{i\in[m]}\left(\mathbb{E}\Vert\hat{\mathbf{g}}_{i}^{t-1}\Vert^{2}-\mathbb{E}\Vert\hat{\mathbf{g}}_{i}^{t}\Vert^{2}\right)\\
    &\quad + \frac{4\lambda^{3}L^{2}(1-2\gamma)^{2}}{\gamma^{3}}\left(\mathbb{E}_{t}\Vert\frac{1}{m}\sum_{i\in[m]}\hat{\mathbf{g}}_{i}^{t-1}\Vert^{2} - \mathbb{E}_{t}\Vert\frac{1}{m}\sum_{i\in[m]}\hat{\mathbf{g}}_{i}^{t}\Vert^{2}\right) + \lambda\alpha^{2}L^{2}\rho^{2}(3\sigma_{g}^{2} + \sigma_{l}^{2})\\
    &\quad + \frac{4\lambda^{3}L^{2}(1-2\gamma)^{2}}{\gamma^{2}}\mathbb{E}_{t}\Vert\frac{1}{m}\sum_{i\in[m]}\sum_{k=0}^{K-1}\frac{\gamma_{k}}{\gamma}\tilde{\mathbf{g}}_{i,k}^{t}\Vert^{2} + 32\lambda\eta_{l}^{2}L^{2}K(16\sigma_{g}^{2}+\sigma_{l}^{2}).\\
\end{align*}
We notice that $\mathbf{R1}$ contains the same term with a negative weight, thus we can set another constrains for $\lambda$ to eliminate this term. We will prove it in the next part.
\subsubsection{Bounded Global Gradient}
As we have bounded the term $\mathbf{R1}$ and $\mathbf{R2}$, according to the smoothness inequality, we combine the inequalities above and get the inequality:
\begin{align*}
    \mathbb{E}_{t}[F(\mathbf{z}^{t+1})] 
    &\leq F(\mathbf{z}^{t}) - \lambda \Vert\nabla F(\mathbf{z}^{t})\Vert^{2} + \mathbf{R1} + \frac{L}{2}\mathbf{R2}\\
    &= F(\mathbf{z}^{t}) - \left(\frac{\lambda}{2}-3\lambda\alpha^{2}L^{2}\rho^{2}-1536\lambda\eta_{l}^{2}L^{2}K\right)\Vert\nabla F(\mathbf{z}^{t})\Vert^{2} + \lambda\alpha^{2}L^{2}\rho^{2}(3\sigma_{g}^{2} + \sigma_{l}^{2})\\
    &\quad + \Bigl(\frac{4\lambda^{3}L^{2}(1-2\gamma)^{2}}{\gamma^{2}} + \frac{\lambda^{2}L}{2m^{2}}-\frac{\lambda}{2m^{2}}\Bigr)\mathbb{E}_{t}\Vert\sum_{i\in [m]}\sum_{k=0}^{K-1}\frac{\gamma_{k}}{\gamma}\tilde{\mathbf{g}}_{i,k}^{t}\Vert^{2}\\
    &\quad + \frac{32\lambda\eta_{l}L^{2}K}{\gamma m}\sum_{i\in[m]}\left(\mathbb{E}\Vert\hat{\mathbf{g}}_{i}^{t-1}\Vert^{2}-\mathbb{E}\Vert\hat{\mathbf{g}}_{i}^{t}\Vert^{2}\right) + 32\lambda\eta_{l}^{2}L^{2}K(16\sigma_{g}^{2}+\sigma_{l}^{2})\\
    &\quad + \frac{4\lambda^{3}L^{2}(1-2\gamma)^{2}}{\gamma^{3}}\left(\mathbb{E}_{t}\Vert\frac{1}{m}\sum_{i\in[m]}\hat{\mathbf{g}}_{i}^{t-1}\Vert^{2} - \mathbb{E}_{t}\Vert\frac{1}{m}\sum_{i\in[m]}\hat{\mathbf{g}}_{i}^{t}\Vert^{2}\right).\\
\end{align*}
We follow as \citet{linear_speedup} to set $\lambda$ that it satisfies $\frac{4\lambda^{3}L^{2}(1-2\gamma)^{2}}{\gamma^{2}} + \frac{\lambda^{2}L}{2m^{2}}-\frac{\lambda}{2m^{2}}\leq 0$, which is easy to verified that $\lambda$ has a upper bound for the quadratic inequality. Thus, the stochastic gradient term is diminished by this $\lambda$. We denote the constant $\lambda\kappa=\frac{\lambda}{2}-3\lambda\alpha^{2}L^{2}\rho^{2}-1536\lambda\eta_{l}^{2}L^{2}K$ and $\kappa$ could be considered as a constant. We can select two constants $c_{1}\in(0,\frac{1}{2}), c_{2}\in(0,\frac{1}{2})$ and they satisfy $c_{1}+c_{2}\in(0,\frac{1}{2})$, 
we let $\frac{1}{2}-3\alpha^{2}L^{2}\rho^{2} > \frac{1}{2}-c_{1}$ and $\frac{1}{2}-1536\eta_{l}^{2}L^{2}K > \frac{1}{2}-c_{2}$, where the $\rho$ and $\eta_{l}$ satisfy $\rho < \frac{\sqrt{c_{1}}}{\sqrt{3}\alpha L} < \frac{1}{\sqrt{6}\alpha L}$ and $\eta_{l} < \frac{\sqrt{c_{2}}}{16\sqrt{6K}L} < \frac{1}{32\sqrt{3K}L}$. Then we can bound the $\kappa=\frac{1}{2}-3\alpha^{2}L^{2}\rho^{2}-1536\eta_{l}^{2}L^{2}K > \frac{1}{2}-c_{1}-c_{2} > 0$, and the term $\frac{1}{\kappa} < \frac{2}{1-2c_{1}-2c_{2}}$ which is a constant upper bound.

We take the full expectation on the bounded global gradient as:
\begin{align*}
    \lambda\kappa\mathbb{E}\Vert\nabla F(\mathbf{z}^{t})\Vert^{2}
    &\leq \left(\mathbb{E}F(\mathbf{z}^{t}) - \mathbb{E}F(\mathbf{z}^{t+1})\right) + \frac{32\lambda\eta_{l}L^{2}K}{\gamma m}\sum_{i\in[m]}\left(\mathbb{E}\Vert\hat{\mathbf{g}}_{i}^{t-1}\Vert^{2}-\mathbb{E}\Vert\hat{\mathbf{g}}_{i}^{t}\Vert^{2}\right)\\
    &\quad + \frac{4\lambda^{3}L^{2}(1-2\gamma)^{2}}{\gamma^{3}}\left(\mathbb{E}_{t}\Vert\frac{1}{m}\sum_{i\in[m]}\hat{\mathbf{g}}_{i}^{t-1}\Vert^{2} - \mathbb{E}_{t}\Vert\frac{1}{m}\sum_{i\in[m]}\hat{\mathbf{g}}_{i}^{t}\Vert^{2}\right)\\
    &\quad + 32\lambda\eta_{l}^{2}L^{2}K(16\sigma_{g}^{2}+\sigma_{l}^{2}) + \lambda\alpha^{2}L^{2}\rho^{2}(3\sigma_{g}^{2} + \sigma_{l}^{2}).\\
\end{align*}
Take the full expectation and telescope sum on the inequality above and applying the fact that $F^{*} \leq F(\mathbf{x})$ for $\mathbf{x}\in \mathbb{R}^{d}$, we have:
\begin{align*}
    \frac{1}{T}\sum_{t=1}^{T-1}\mathbb{E}_{t}\Vert\nabla F(\mathbf{z}^{t})\Vert^{2}
    &\leq \frac{1}{\lambda \kappa T}\bigl(F(\mathbf{z}^{1}) - \mathbb{E}_{t}[F(\mathbf{z}^{T})]\bigr) + \frac{32\eta_{l}L^{2}K}{\kappa\gamma mT}\sum_{i\in[m]}\left(\mathbb{E}\Vert\hat{\mathbf{g}}_{i}^{0}\Vert^{2}-\mathbb{E}\Vert\hat{\mathbf{g}}_{i}^{t}\Vert^{2}\right)\\
    &\quad + \frac{4\lambda^{2}L^{2}(1-2\gamma)^{2}}{\kappa\gamma^{3}T}\left(\mathbb{E}_{t}\Vert\frac{1}{m}\sum_{i\in[m]}\hat{\mathbf{g}}_{i}^{0}\Vert^{2} - \mathbb{E}_{t}\Vert\frac{1}{m}\sum_{i\in[m]}\hat{\mathbf{g}}_{i}^{t}\Vert^{2}\right)\\
    &\quad + \frac{1}{\kappa}\left(32\lambda\eta_{l}^{2}L^{2}K(16\sigma_{g}^{2}+\sigma_{l}^{2}) + \lambda\alpha^{2}L^{2}\rho^{2}(3\sigma_{g}^{2} + \sigma_{l}^{2})\right)\\
    &\leq \frac{1}{\lambda \kappa T}\bigl(F(\mathbf{z}^{0}) - F^{*})\bigr) + \frac{32\eta_{l}L^{2}K}{\kappa\gamma mT}\sum_{i\in[m]}\mathbb{E}\Vert\hat{\mathbf{g}}_{i}^{0}\Vert^{2}\\
    &\quad + \frac{4\lambda^{2}L^{2}(1-2\gamma)^{2}}{\kappa\gamma^{3}T}\mathbb{E}_{t}\Vert\frac{1}{m}\sum_{i\in[m]}\hat{\mathbf{g}}_{i}^{0}\Vert^{2}\\
    &\quad + \frac{1}{\kappa}\left(32\lambda\eta_{l}^{2}L^{2}K(16\sigma_{g}^{2}+\sigma_{l}^{2}) + \lambda\alpha^{2}L^{2}\rho^{2}(3\sigma_{g}^{2} + \sigma_{l}^{2})\right)\\
\end{align*}
Here we summarize the conditions and some constrains in the above conclusion. Firstly we should note that $\gamma=1-(1-\frac{\eta_{l}}{\lambda})^{K}<1$ when $\eta_{l}\leq 2\lambda$. Thus we have $1/\gamma > 1$. When $K$ satisfies that $K\geq \frac{\lambda}{\eta_{l}}$, $(1-\frac{\eta_{l}}{\lambda})^{K}\leq e^{-\frac{\eta_{l}}{\lambda}K}\leq e^{-1}$, and then $\gamma > 1 - e^{-1}$ and $1/\gamma < \frac{e}{e-1}< 2$. To let $\kappa=\frac{1}{2}-3\alpha^{2}L^{2}\rho^{2}-1536\eta_{l}^{2}L^{2}K > 0$ hold, $\rho$ and $\eta_{l}$ satisfy that $\rho < \frac{1}{\sqrt{6}\alpha L}$ and $\eta_{l} < \frac{1}{32\sqrt{3K}L}$. 

\begin{align*}
    \frac{1}{T}\sum_{t=1}^{T-1}\mathbb{E}\Vert \nabla F(\mathbf{z}^{t})\Vert^{2}
    &\leq \frac{2(F(\mathbf{z}^{1})-F^{*})}{\lambda\kappa T} + \frac{64\eta_{l}L^{2}K}{\kappa T}\frac{1}{m}\sum_{i\in[m]}\mathbb{E}\Vert\hat{\mathbf{g}}_{i}^{0}\Vert^{2} + \frac{32\lambda^{2}L^{2}}{\kappa T}\mathbb{E}_{t}\Vert\frac{1}{m}\sum_{i\in[m]}\hat{\mathbf{g}}_{i}^{0}\Vert^{2}\\
    &\quad + \frac{1}{\kappa}\left(32\lambda\eta_{l}^{2}L^{2}K(16\sigma_{g}^{2}+\sigma_{l}^{2}) + \lambda\alpha^{2}L^{2}\rho^{2}(3\sigma_{g}^{2} + \sigma_{l}^{2})\right).
\end{align*}

\end{document}